\def\eqref#1{equation~\ref{#1}}
\def\1{\bm{1}}
\DeclareMathAlphabet{\mathsfit}{\encodingdefault}{\sfdefault}{m}{sl}
\SetMathAlphabet{\mathsfit}{bold}{\encodingdefault}{\sfdefault}{bx}{n}
\newcommand{\Var}{\mathrm{Var}}
\newcommand{\diag}{\operatorname{diag}}
\newcommand{\sech}{\operatorname{sech}}
\newcommand{\norm}[1]{\left\lVert #1 \right\rVert}
\theoremstyle{plain}
\newtheorem{theorem}{Theorem}[section]
\newtheorem{lemma}[theorem]{Lemma}
\theoremstyle{definition}
\newtheorem{assumption}[theorem]{Assumption}
\newtheorem{remark}[theorem]{Remark}
\newcommand{\editcolor}{black}
\newcommand{\edit}[1]{\textcolor{\editcolor}{#1}}
\title{Bounded Hyperbolic Tangent: A Stable and Efficient Alternative to Pre-Layer Normalization in Large Language Models}
\author{
Hoyoon Byun\textsuperscript{1}, 
Youngjun Choi\textsuperscript{1}, 
Taero Kim\textsuperscript{1}, 
Sungrae Park\textsuperscript{2}, 
Kyungwoo Song\textsuperscript{1}\thanks{Corresponding author} \\[2mm]
\textsuperscript{1}Yonsei University
\textsuperscript{2}Upstage AI \\
\texttt{\{hoyun.byun, choiyj9803, taero.kim, kyungwoo.song\}@yonsei.ac.kr} \\
\texttt{sungrae.park@upstage.ai}
}
\begin{document}

\maketitle

\begin{abstract}
Pre-Layer Normalization (Pre-LN) is the de facto choice for large language models (LLMs) and is crucial for stable pretraining and effective transfer learning. However, Pre-LN incurs repeated statistical-computation overhead and remains vulnerable to the curse of depth, where hidden-state magnitudes and variances grow as the number of layers increases, destabilizing training. Efficiency-oriented normalization-free methods such as Dynamic Tanh (DyT) improve throughput but remain fragile at depth. To jointly address stability and efficiency, we propose Bounded Hyperbolic Tanh (BHyT), a drop-in replacement for Pre-LN. BHyT combines a tanh nonlinearity with explicit, data-driven input bounding to keep activations within a non-saturating range. It prevents depth-wise growth in activation magnitude and variance and provides a theoretical stability guarantee. For efficiency, BHyT computes exact statistics once per block and replaces a second normalization with a lightweight variance approximation. Empirically, BHyT demonstrates improved stability and efficiency during pretraining, achieving an average of 1.6\% faster training and an average of 1.77\% higher token generation throughput compared to RMSNorm, while maintaining strong pretraining-only and post-SFT performance across language understanding and reasoning benchmarks\footnote{Code is available at: \url{https://github.com/MLAI-Yonsei/BHyT}}.

\end{abstract}

\section{Introduction}
The remarkable progress of large language models (LLMs) is tightly coupled to the Transformer architecture, in which normalization layers play a central role~\citep{vaswani2017attention}. In particular, Pre‑Layer Normalization (Pre‑LN), which applies normalization before the self-attention and MLP sublayers in each Transformer block, has become the de facto standard~\citep{xiong2020layer,radford2019language, brown2020language,grattafiori2024llama}. Many modern LLMs instantiate Pre-LN with RMSNorm~\citep{zhang2019root}, a computationally lighter variant that removes mean centering while retaining per-token scale normalization. By regulating activation scale, these normalization layers stabilize optimization in deep networks, enabling training at scale and strong transfer performance across diverse tasks~\citep{ba2016layer,xiong2020layer}.

As model depth increases, however, the Pre-LN design exposes an important limitation.
Recent analyses identify a \textit{curse of depth}, in which the interaction between residual connections and Pre‑LN can drive rapid growth in the magnitude and variance of the hidden state across layers~\citep{sun2026curse}.
This escalation is more consequential than numerical instability alone: it can push the block Jacobian toward the identity, causing deeper layers to behave increasingly like costly identity mappings~\citep{sun2026curse}. 
The central challenge is therefore not only to prevent divergence, but also to preserve informative transformations and efficient signal propagation so that each layer contributes meaningfully to learning.

Existing approaches improve either stability or efficiency, but typically not both. 
Stability-oriented methods strengthen normalization to suppress depth-wise drift. 
For example, Peri-LN~\citep{kim2025periln} applies normalization both before and after each sublayer, reducing variance growth and improving gradient behavior in very deep Transformers. 
This stability, however, comes at a computational cost: additional normalization operations increase reduction overhead and memory traffic, thereby adding latency. Efficiency-oriented normalization variants reduce part of this cost. 

\begin{figure*}
    \centering
    \includegraphics[width=\linewidth]{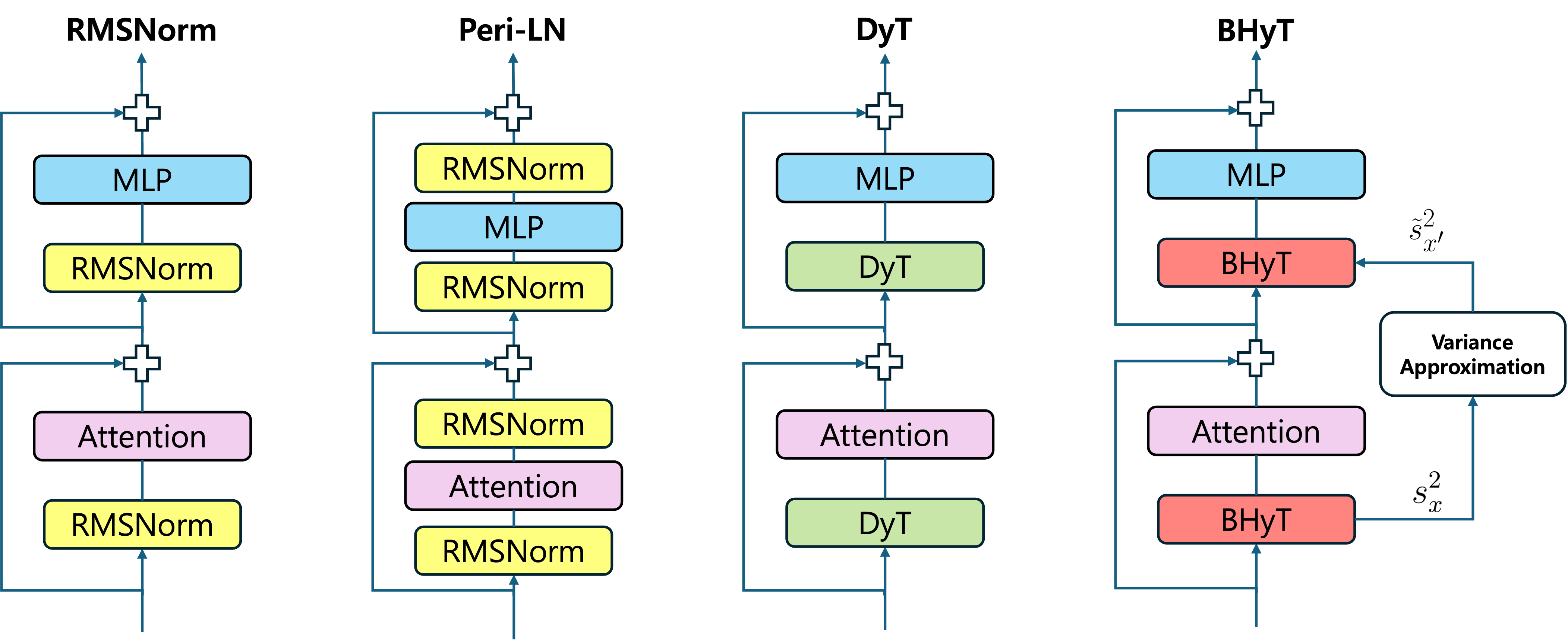}
    \caption{Architectural comparison of normalization strategies in Transformer blocks. \textbf{RMSNorm} applies normalization before each sublayer to stabilize activations but suffers from variance growth at scale. \textbf{Peri-LN} reinforces stability by normalizing both before and after each sublayer. \textbf{DyT} replaces normalization with a lightweight scaled tanh nonlinearity with learnable scalar $\alpha$. \textbf{BHyT} (ours) combines bounded $\tanh$ with data-driven variance control: it computes input variance once per block, approximates subsequent variance for efficiency, and explicitly constrains activations to a non-saturating range, thereby unifying stability and efficiency.
}
    \label{fig:overall_figure}
    \vspace{-1em}
\end{figure*}


Normalization-free alternatives further reduce overhead, but they do not necessarily control depth-wise residual growth. 
Dynamic Tanh (DyT)~\citep{zhu2025transformers} replaces normalization with an element-wise \(\tanh(\cdot)\) transformation followed by learnable affine parameters. 
By eliminating per-token statistic computation, DyT can improve training and inference throughput. 
However, a bounded element-wise output alone is insufficient to control the residual stream in a pre-normalization residual layout: residual updates can still accumulate across layers, allowing activation magnitude and variance to grow with depth. 
Moreover, because DyT relies on learned global scaling rather than data-dependent input bounding, large pre-\(\tanh\) inputs can enter saturated regimes, weakening forward-path control over depth-wise activation growth. 
Appendix~\ref{appx:dyt_growth} illustrates this mechanism using a controlled tresidual-stack analysis.


These limitations reveal a central gap in current normalization and normalization-free methods: stability-oriented approaches often introduce additional computational overhead, whereas more efficient alternatives provide limited control over depth-wise activation growth. 
A practical replacement for Pre-LN should therefore combine the low overhead of bounded transformations with explicit control over residual-stream scale and variance.


We introduce Bounded Hyperbolic Tanh (BHyT) to address this gap. 
As shown in Figure~\ref{fig:overall_figure}, BHyT is a drop-in replacement for Pre-LN that couples a \(\tanh(\cdot)\) transformation with explicit, data-driven input bounding. 
By keeping the pre-\(\tanh\) argument within a non-saturating range with high probability, BHyT limits depth-wise growth in activation magnitude and variance while maintaining effective forward signal propagation as depth increases.


Beyond this forward-path control, BHyT admits a finite-depth variance propagation bound under explicit hyperparameter conditions. 
Specifically, for a network of depth $L$, the output variance under BHyT is provably smaller than that under LayerNorm Scaling (LNS)~\citep{sun2026curse} under suitable hyperparameter conditions. 
This result provides theoretical support for BHyT as a mechanism for suppressing depth-wise variance growth, complementing its empirical stability in deep Transformer models.


To reduce overhead, BHyT avoids recomputing the exact variance at both Pre-LN sites in each Transformer block. 
It computes the input variance once, reuses this statistic, and replaces the second exact variance computation with a lightweight attention-output variance approximation.
This block-level design reduces reduction operations and memory movement while preserving the input-bounding mechanism that stabilizes the pre-\(\tanh\) argument. 
As a result, BHyT improves the stability--efficiency trade-off without requiring exact statistic computation at every Pre-LN site.



Empirically, BHyT improves the stability--efficiency trade-off in LLM pretraining across model scales. 
For Llama-374M and Llama-1B models pretrained on C4~\citep{raffel2020exploring} with model-size-proportional token budgets, BHyT suppresses depth-wise activation growth while maintaining strong 3-shot pretraining-only and post-SFT performance. 
On Llama-1B, BHyT reaches 83.0K training tokens/s and 1{,}199.9 generation tokens/s, outperforming RMSNorm by 1.63\% in training throughput and 1.77\% in average generation throughput. 
We further include a 20B-token comparison between Llama-3B models using BHyT and Peri-LN~\citep{kim2025periln}, showing that BHyT retains its advantage over a strong stability-oriented baseline at a larger scale, with a full-baseline study on Llama-3B provided in Appendix~\ref{appx:llama3b_full_baseline}.



Our contributions are summarized as follows.

\begin{itemize}
    \item We propose BHyT, a drop-in replacement for normalization at Pre-LN sites that combines a bounded \(\tanh(\cdot)\) transformation with data-driven input bounding to control depth-wise activation growth.

    \item We provide a finite-depth variance analysis showing that BHyT yields a smaller variance-propagation bound than LNS under explicit hyperparameter conditions, and we introduce a block-level variance approximation that reduces repeated exact variance computation.

    \item We evaluate BHyT on Llama-374M and Llama-1B models pretrained on C4, and further study Llama-3B through a 20B-token comparison against Peri-LN. Across stability, performance, and efficiency evaluations, BHyT improves the stability--efficiency trade-off while maintaining strong downstream performance.
\end{itemize}

\section{Related works}
\subsection{Limitations of Pre-layer Normalization}

Layer normalization has been a key component in Transformer architectures~\citep{ba2016layer}, and its Pre-LN variant has become the default in most LLMs~\citep{xiong2020layer}. 
The Pre-LN design, which normalizes hidden states before attention and feed-forward layers, improves stability during training. Still, this design is not without drawbacks. From a stability perspective, recent studies have shown that Pre-LN suffers from massive activations as depth increases~\citep{sun2026curse,kim2025periln}. 
In this regime, the mean and variance of hidden states grow across layers, destabilizing optimization. 
From an efficiency perspective, normalization requires computing statistics such as the mean and variance of activations. 
These reductions introduce latency and memory overhead, which accumulate as depth increases. 
As a result, Pre-LN can destabilize very deep models and become a bottleneck for training throughput and scalability.

\subsection{Normalization-based Enhancements}

The main bottleneck of normalization-based designs lies in the repeated computation of statistics in every block. 
A first attempt to reduce this cost was RMSNorm, which discards the mean and normalizes activations only by their root mean square of activations~\citep{zhang2019root}. 
This removes part of the overhead while keeping activations within a reasonable range. 
However, per-token variance computation is still required for every layer, and the instability of Pre-LN remains unresolved.

To address this instability, LayerNorm Scaling (LNS)~\citep{sun2026curse} introduces a layer index scaling that constrains variance growth across depth. 
By rescaling activations according to the layer index, it alleviates the massive activation problem and stabilizes optimization in very deep networks. 
Although LNS improves stability, its reliance on per-block statistics keeps its cost high in large-scale training. 
Peri-LN further reinforces stability by applying normalization both before and after each sublayer~\citep{kim2025periln}. 
This suppresses variance spikes and improves convergence and downstream accuracy; however, doubling normalization operations adds substantial overhead, making it less suitable for large-scale pretraining.

\subsection{Normalization-free Transformers}

To avoid the overhead of statistical computation altogether, recent work has explored removing normalization. 
DyT follows this direction by replacing RMSNorm with a bounded activation function, $\tanh(\alpha_\text{DyT} x)$, where $\alpha_\text{DyT}$ is a learnable scalar~\citep{zhu2025transformers}. 
This eliminates the need for mean and variance estimation, improving both training and inference throughput.
The $\tanh(\cdot)$ ensures that activations remain bounded, and the learnable scalar $\alpha_\text{DyT}$ rescales inputs so that the effective range of $\tanh$ is better utilized.

Although DyT improves efficiency, it does not explicitly control how the output mean and variance grow with depth. Moreover, its learnable scalar parameter $\alpha_\text{DyT}$ is not specifically designed for stability. It only rescales the inputs indirectly, without actively suppressing variance escalation in deep networks. As a result, its training stability at scale is not explicitly guaranteed and may be more susceptible to saturation-induced vanishing gradient and related instabilities than normalization-based methods.

In contrast to DyT, BHyT provides an explicit stability guarantee by constraining variance growth. 
To address the computational bottleneck of the required statistical calculations, we introduce a variance approximation mechanism. 
This design effectively combines robust training stability with the efficiency benefits of normalization-free architectures.

Concurrent work has explored complementary alternatives. Derf~\citep{chen2025stronger} replaces the \(\tanh(\cdot)\) function in DyT with an \(\mathrm{erf}(\cdot)\)-based activation, whereas SeeDNorm~\citep{cai2026seednorm} remains within the normalization family by dynamically rescaling RMSNorm based on the current input.
Relative to these methods, BHyT is designed as a bounded, data-dependent alternative to Pre-LN that ties the pre-$\tanh$ range directly to per-token statistics while also providing finite-depth variance control.

\section{Methodology}
BHyT serves as a practical replacement for normalization at Pre-LN sites, controlling depth-wise activation growth while preserving the efficiency benefits of bounded transformations. The key idea is to use a $\tanh(\cdot)$ nonlinearity, as in DyT, but to explicitly bound its input using data-dependent statistics rather than relying only on a learnable global scalar parameter. This input bounding keeps the pre-$\tanh$ argument in a non-saturating range with high probability, thereby helping to limit the magnitude and variance of residual updates across depth. We first derive an ideal form, $\text{BHyT}^{\ast}$, that computes input statistics exactly, and then introduce the practical BHyT implementation that reduces statistical computation through block-level variance approximation.

\subsection{Input Bounding for Stable Bounded Transformations}
To avoid saturation, BHyT constrains the pre-$\tanh$ input to lie within a predefined interval $[-\lambda,\lambda]$ with high probability. This condition preserves the local sensitivity of the bounded transformation while helping to limit the scale of residual updates. Rather than imposing a deterministic coordinate-wise bound, we derive a probabilistic scaling rule from the mean and variance of the input distribution.


Specifically, we use Chebyshev's inequality to obtain a distribution-agnostic high-probability bound. 
For any random variable $X$ with a finite mean $\mu$ and variance $s^2$, and for any $\kappa>1$,
\begin{equation}
\mathbb{P}\left(|X-\mu|\leq \kappa s\right) \geq 1-\kappa^{-2}.
\end{equation}
This bound implies that most coordinates lie within $\kappa$ standard deviations of the mean, without assuming a particular input distribution.


Motivated by this distribution-agnostic bound, we first define an idealized form, $\text{BHyT}^{\ast}$, that uses the exact mean and standard deviation of the input. 
Given $x\in\mathbb{R}^{d}$ with a mean $\mu_x$ and a standard deviation $s_x$ computed across coordinates, $\text{BHyT}^{\ast}$ rescales $x$ so that a coordinate of the pre-$\tanh$ input lies in $[-\lambda,\lambda]$ with probability at least $1-\kappa^{-2}$:
\begin{equation}
\text{BHyT}^{\ast}(x)
=
\gamma \odot
\tanh\!\left(
\frac{\lambda}{\kappa s_x + |\mu_x|}x
\right).
\label{eq:bhyt_star}
\end{equation}


Here, $\gamma\in\mathbb{R}^{d}$ is a learnable scale parameter, $\odot$ denotes element-wise multiplication, and $\lambda>0$ specifies the desired pre-$\tanh$ range. 
For a target probability level $p\in(0,1)$, we set $\kappa=(1-p)^{-1/2}$, as formalized in Proposition~\ref{pro:bhyt_bound}. 
The proof is provided in Appendix~\ref{proof:bhyt_bound}.

\begin{restatable}[Input scaling bound at the probability level $p$]{proposition}{bhytbound}
\label{pro:bhyt_bound}
\edit{Let $x\in\mathbb{R}$ with $\mathbb{E}[x]=\mu_x$ and $\operatorname{Var}(x)=s_x^2$. Given a predefined bound $\lambda>0$ and target probability $p\in(0,1)$, if $\alpha=\frac{\lambda}{\kappa s_x+|\mu_x|}$, then $P(|\alpha x|\le \lambda)\ge p$ for any distribution with finite variance, where $\kappa\coloneqq(1-p)^{-1/2}$.}
\end{restatable}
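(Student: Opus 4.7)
The plan is to deduce the bound as a direct consequence of Chebyshev's inequality combined with a triangle inequality, following the informal motivation already given in the text. The key observation is that Chebyshev yields a high-probability envelope for $x$ centered at $\mu_x$, and this envelope can be translated into a symmetric bound around zero by the triangle inequality, which then dictates the correct scaling factor $\alpha$.

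Concretely, I would proceed in three steps. First, I would apply Chebyshev's inequality to $x$ with the choice of $\kappa = (1-p)^{-1/2}$: this gives
\[
P\bigl(|x - \mu_x| \le \kappa s_x\bigr) \ge 1 - \kappa^{-2} = 1 - (1 - p) = p,
\]
which holds for any distribution of finite variance, so the distribution-agnostic claim is already built in at this stage. Second, I would observe that on the event $\{|x - \mu_x| \le \kappa s_x\}$, the triangle inequality implies
\[
|x| \le |x - \mu_x| + |\mu_x| \le \kappa s_x + |\mu_x|.
\]
Third, since $\alpha = \lambda / (\kappa s_x + |\mu_x|) > 0$, multiplying by $\alpha$ preserves the inequality, giving $|\alpha x| \le \lambda$ on the same event. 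Monotonicity of probability then yields $P(|\alpha x| \le \lambda) \ge p$.

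There is essentially no serious obstacle here: the statement is close to a direct corollary of Chebyshev's inequality, and the only mildly delicate point is being careful that $\kappa s_x + |\mu_x| > 0$ so that $\alpha$ is well-defined and positive. If $s_x > 0$, this is automatic; if $s_x = 0$, then $x = \mu_x$ almost surely and one should interpret the bound as $\alpha = \lambda/|\mu_x|$ (or trivially as $+\infty$ if $\mu_x = 0$ as well), in which case $|\alpha x| \le \lambda$ holds with probability one. I would add a short remark to this effect for completeness, but the main argument is a two-line chain of inequalities.
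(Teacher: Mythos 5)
Your proposal is correct and follows essentially the same route as the paper's proof: Chebyshev's inequality with $\kappa=(1-p)^{-1/2}$, the triangle inequality to convert the envelope around $\mu_x$ into a bound on $|x|$, and monotonicity of probability to conclude. Your added remark on the degenerate case $s_x=0$ is a small point of extra care not present in the paper, but the argument is otherwise identical.
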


\begin{remark}
For $p=0.99$, we have $\kappa=10$ and therefore
\[
\alpha = \frac{\lambda}{10s_x + |\mu_x|}.
\]
Thus, $\lambda$ controls the target pre-$\tanh$ range, while $\kappa$ determines how conservatively the input is scaled.
\end{remark}

\subsection{Accelerating BHyT with Variance Approximation}



The exact form $\text{BHyT}^{\ast}$ provides a clean probabilistic input bound, but it requires per-instance mean and variance computation at each normalization site. This would weaken the main efficiency benefit of replacing RMSNorm-like normalization with a bounded transformation. We therefore introduce a practical BHyT module that maintains the same bounded $\tanh(\cdot)$ structure but avoids repeated exact variance computation through a block-level variance approximation.

BHyT follows the RMSNorm-style approximation that treats the input as zero-mean and therefore uses variance-based scaling without mean-centering. Within each Transformer block, BHyT computes the input variance exactly once at the first Pre-LN site, reuses this statistic, and approximates the variance needed at the second Pre-LN site after attention. This design reduces repeated reduction operations and memory movement while preserving the input-bounding mechanism that stabilizes the pre-$\tanh$ argument.

\subsubsection{BHyT before the Attention Layer}
At the first Pre-LN site of a Transformer block, BHyT computes the per-instance variance of the input $x$ across the feature dimension and applies the bounded transformation

\begin{equation}\label{eq:bhyt_attn}
z_{\text{Attn}} = \text{BHyT}_{\text{Attn}}(x) = \gamma_{\text{Attn}} \odot \tanh(\frac{\lambda_{\text{Attn}}}{\kappa s_x}x)    
\end{equation}


Here, $s_x^2$ denotes the per-instance feature variance of $x$, and $\lambda_{\text{Attn}}$ specifies the target pre-$\tanh$ range at the Pre-LN site before the attention layer. 
The output $z_{\text{Attn}}$ is then passed to the self-attention layer, producing $h_{\text{Attn}}=\mathcal{A}(z_{\text{Attn}})$.

Equation~\ref{eq:bhyt_attn} follows from $\text{BHyT}^{\ast}$ under the RMSNorm-style zero-mean approximation, which removes the $|\mu_x|$ term from the scaling denominator.
For the corresponding zero-mean BHyT transformation, the Jacobian is a diagonally scaled version of the RMSNorm Jacobian, yielding
\[
\norm{J^{\text{BHyT}}(x)}_2
\le
\frac{\lambda}{\kappa}
\norm{J^{\text{RMS}}(x)}_2 .
\]
The full derivation is provided in Appendix~\ref{proof:jacobian_bound}.

\subsubsection{BHyT before the MLP Layer}


Let $x' = x + h_{\text{Attn}}$ denote the output of the first residual connection and serves as the input to the second Pre-LN site, $\text{BHyT}_{\text{MLP}}$. Instead of recomputing the feature variance of $x'$ exactly, BHyT approximates it as

\begin{equation}
\label{eq:var_approx}
\tilde{s}^2_{x'} = s^2_x + \tilde s^2_{h_\text{Attn}}.
\end{equation}


The first term, $s_x^2$, is reused from the exact computation at the first Pre-LN site.
The second term, $\tilde{s}_{h_{\text{Attn}}}^2$, approximates the variance contributed by the attention output under the simplifying assumptions stated in Assumption~\ref{ass:attn}.
Using $\tilde{s}_{x'}^2$, the MLP-side BHyT is computed as
\begin{equation}
\label{eq:bhyt_mlp}
z_{\text{MLP}}
=
\text{BHyT}_{\text{MLP}}(x')
=
\gamma_{\text{MLP}} \odot
\tanh\!\left(
\frac{\lambda_{\text{MLP}}}{\kappa \tilde{s}_{x'}}x'
\right).
\end{equation}
This avoids a second exact variance computation within the block while preserving the bounded transformation form.

\begin{algorithm}[t]
\caption{Transformer Decoder Block with BHyT}
\label{alg:bhyt-parallel-dep}
\begin{algorithmic}[1]
\REQUIRE Token embedding $x \in \mathbb{R}^{d}$,
\REQUIRE Hyperparameters $\lambda_{\mathrm{Attn}}, \lambda_{\mathrm{MLP}}, \kappa > 1$, sequence length $T$
\REQUIRE Learnable scales $\gamma_{\mathrm{Attn}}, \gamma_{\mathrm{MLP}} \in \mathbb{R}^{d}$
\REQUIRE Weights $W_V \in \mathbb{R}^{d \times d_V}$, $W_O \in \mathbb{R}^{d_V \times d}$,
         $W_1 \in \mathbb{R}^{d \times d_m}$, $W_2 \in \mathbb{R}^{d_m \times d}$
\OUTPUT $x_{\mathrm{out}} \in \mathbb{R}^{d}$

\STATE $s_x^{2} \gets \frac{1}{d}\sum_{j=1}^{d} x_{j}^{2}$ $\triangleright$ \COMMENT{RMS-style input variance}

\STATE \textbf{parallel do}
\STATE \quad \textbf{(A) Variance approximation}
\STATE \quad $\tilde{s}_{x'}^{\,2} \gets s_x^2 + \frac{1}{Td}\,\lVert W_V W_O\rVert_F^{2}\left(\frac{\lambda_{\mathrm{Attn}}}{\kappa}\right)^{2}$
\STATE \quad
\STATE \quad \textbf{(B) Attention path}
\STATE \quad $s_x \gets \sqrt{s_x^{2}}$
\STATE \quad $\alpha_{\mathrm{Attn}} \gets \frac{\lambda_{\mathrm{Attn}}}{\kappa\, s_x}$
\STATE \quad $z_{\mathrm{Attn}} \gets \gamma_{\mathrm{Attn}} \odot \tanh\!\bigl(\alpha_{\mathrm{Attn}}\, x\bigr)$ $\triangleright$ \COMMENT{$\mathrm{BHyT}_{\mathrm{Attn}}$}
\STATE \quad $h_{\mathrm{Attn}} \gets \mathcal{A}(z_{\mathrm{Attn}}; W_V, W_O)$ $\triangleright$ \COMMENT{Attention sublayer}
\STATE \textbf{end parallel}

\STATE $x' \gets x + h_{\mathrm{Attn}}$ $\triangleright$ \COMMENT{Residual}

\STATE $\tilde{s}_{x'} \gets \sqrt{\tilde{s}_{x'}^{\,2}}$
\STATE $\alpha_{\mathrm{MLP}} \gets \frac{\lambda_{\mathrm{MLP}}}{\kappa\, \tilde{s}_{x'}}$
\STATE $z_{\mathrm{MLP}} \gets \gamma_{\mathrm{MLP}} \odot \tanh\!\bigl(\alpha_{\mathrm{MLP}}\, x'\bigr)$ $\triangleright$ \COMMENT{$\mathrm{BHyT}_{\mathrm{MLP}}$}
\STATE $h_{\mathrm{MLP}} \gets \mathrm{MLP}(z_{\mathrm{MLP}}; W_1, W_2)$ $\triangleright$ \COMMENT{MLP sublayer}

\STATE $x_{\mathrm{out}} \gets x' + h_{\mathrm{MLP}}$ $\triangleright$ \COMMENT{Residual}
\OUTPUT $x_{\mathrm{out}}$
\end{algorithmic}
\end{algorithm}

\begin{assumption}
\label{ass:attn}
For the purpose of deriving a tractable approximation, we consider an idealized attention block in which the attention weights are approximately uniform over a sufficiently long sequence of length $T$~\citep{ledoux2001concentration}. We assume that the token-level activations $x\in\mathbb{R}^{d}$ and the projection matrices $W_V$ and $W_O$ are independent and zero-mean, with Gaussian entries. We further assume that the pre-\(\tanh\) argument of BHyT remains in the near-linear regime of \(\tanh(\cdot)\), so that the variance of the BHyT output is approximated by \(\lambda_{\mathrm{Attn}}^2/\kappa^2\).
\end{assumption}

\begin{restatable}[\edit{Variance approximation of the attention output}]{theorem}{attnvariance}
\label{thm:attn_variance}
Under Assumption~\ref{ass:attn}, let $h_{\text{Attn}} \in \mathbb{R}^d$ denote the output vector of the attention layer. The average coordinate-wise variance of this output, denoted by $\tilde{s}_{h_{\text{Attn}}}^2 := \frac{1}{d} \sum_{k=1}^d \text{Var}((h_{\text{Attn}})_k)$, is approximated as
\begin{equation}
    \tilde{s}_{h_{\text{Attn}}}^2 \approx \frac{1}{Td} \|W_V W_O\|_F^2 \cdot \frac{\lambda_{\text{Attn}}^2}{\kappa^2}.
\end{equation}
\end{restatable}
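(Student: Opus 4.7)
The plan is to unpack self-attention under the stated assumptions, reduce the output to a linear form in the BHyT-transformed inputs, and then recognize the resulting coordinate-wise variance as a Frobenius-norm expression. First, I would write the $k$-th coordinate of the attention output at a fixed query position explicitly as $(h_{\text{Attn}})_k = \sum_t A_{\cdot t}\,(z_t W_V W_O)_k$ and apply the uniform-attention approximation from Assumption~\ref{ass:attn}, replacing each softmax weight by $1/T$. This collapses the per-token output to $h_{\text{Attn}} \approx \tfrac{1}{T}\sum_{t=1}^T z_t\,(W_V W_O)$, where $z_t$ denotes the $t$-th BHyT-transformed token.

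Next, I would substitute the near-linear regime property: since $\tanh(y)\approx y$ for $|y|\le \lambda_{\text{Attn}}$ and $z_t$ is scaled by $\lambda_{\text{Attn}}/(\kappa s_x)$, each coordinate $(z_t)_j$ is approximately zero-mean with variance $\lambda_{\text{Attn}}^2/\kappa^2$. Combined with the zero-mean normal assumption on $x$ and the independence of $W_V, W_O$ from the activations, I would treat the token contributions as approximately independent zero-mean summands. A direct variance computation, conditioning on $M := W_V W_O$, then gives
\begin{equation}
\text{Var}((h_{\text{Attn}})_k) \;\approx\; \frac{1}{T^2}\cdot T\cdot \frac{\lambda_{\text{Attn}}^2}{\kappa^2}\sum_j M_{jk}^2 \;=\; \frac{1}{T}\cdot \frac{\lambda_{\text{Attn}}^2}{\kappa^2}\sum_j M_{jk}^2.
\end{equation}
Averaging over $k\in\{1,\dots,d\}$ converts the double sum $\sum_{k,j} M_{jk}^2$ into $\|W_V W_O\|_F^2$, yielding the claimed formula.

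The main obstacle is justifying the two approximations rigorously rather than formally. The uniform-attention step is only valid up to concentration-of-measure fluctuations of order $O(1/\sqrt{T})$ in the softmax weights (hence the Ledoux reference in Assumption~\ref{ass:attn}), and the ``independent token contributions'' step is delicate because the activations $z_t$ at different positions are functions of the same underlying sequence and share normalization statistics. I would handle these by arguing that both effects contribute only higher-order corrections relative to the leading-order expression, so the stated result holds as an approximation (the theorem statement explicitly uses $\approx$). A more rigorous version would require a concentration bound on the softmax weights and a cross-covariance analysis across token positions, but for the intended use—producing a cheap variance proxy that feeds $\text{BHyT}_{\text{MLP}}$—the leading-order term is what matters.
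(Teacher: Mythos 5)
Your proposal is correct and follows essentially the same route as the paper's proof: uniform attention weights give $h_{\text{Attn}} \approx \frac{1}{T}\sum_t z_t W_V W_O$, the near-linear regime gives $\operatorname{Cov}(z_t)\approx(\lambda_{\text{Attn}}^2/\kappa^2)I_d$, and the paper's trace-of-covariance computation $\frac{1}{d}\Tr\bigl(\frac{1}{T}\frac{\lambda_{\text{Attn}}^2}{\kappa^2}W^\top W\bigr)$ is exactly your coordinate-wise sum $\frac{1}{Td}\frac{\lambda_{\text{Attn}}^2}{\kappa^2}\sum_{j,k}M_{jk}^2$ written in matrix form. Your closing caveats about cross-token dependence through the shared normalization statistic match the paper's own post-proof remark that these dependencies contribute only $O(1/d)$ corrections.
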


The quantity \(\tilde{s}_{h_{\mathrm{Attn}}}^2\) is a model-dependent variance estimate rather than an instance-wise statistic computed from the current input. 
In high-dimensional hidden spaces, activation variance is typically well approximated by its expectation, making this model-level estimate a practical proxy for the instance-wise variance. 
We empirically verify the quality of this approximation in Figure~\ref{fig:var_infer}.

\begin{figure}
    \centering
    \includegraphics[width=\linewidth]{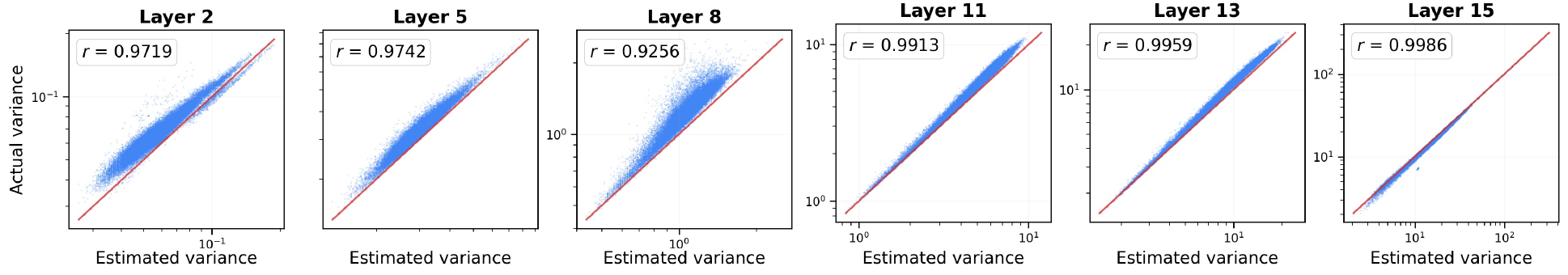}
    \caption{Approximated and actual activation variances for the second \(\mathrm{BHyT}_{\mathrm{MLP}}\) layer across selected Transformer blocks (layers 2, 5, 8, 11, 13, and 15) in Llama-1B. The variance estimates are evaluated using 100 randomly sampled inputs from the C4 training corpus. The diagonal line represents the ideal \(y=x\) reference, and \(r\) denotes the Pearson correlation coefficient between the approximated and actual variances. Complete results for all 16 layers are reported in Appendix~\ref{appx:var_approx}.}
    \label{fig:var_infer}
    \vspace{-0.5em}
\end{figure}

Theorem~\ref{thm:attn_variance} shows that the attention-output variance can be approximated using model-level quantities: the attention projection weights, the sequence length, and the BHyT parameters. 
Because this approximation does not depend on current activations, it can be computed in parallel with the forward pass or updated periodically.
Figure~\ref{fig:var_infer} shows that the approximate variance closely tracks the empirical variance of intermediate hidden states. 
Algorithm~\ref{alg:bhyt-parallel-dep} summarizes this block-level computation, in which the RMS-style input variance \(s_x^2\) from the first BHyT site is reused to approximate the second-site variance \(\tilde{s}_{x'}^2\). 
The proof is provided in Appendix~\ref{proof:attn_variance}.

At inference, $W_V$ and $W_O$ remain fixed, so the model-dependent term can be precomputed or cached with the parameters. In training, the projection weights change over optimization steps; therefore, we update the approximation term periodically rather than recomputing it at every step. The variance approximation avoids adding an additional reduction operation to the critical forward path.

\subsection{Depth-wise Variance Propagation and Stability of BHyT}
\begin{restatable}[Finite-depth variance bound of BHyT]{theorem}{finitedepthvarbound}
\label{thm:bhyt-var-bound}
For a network of depth \(L\), if the BHyT hyperparameters satisfy \(\lambda/\kappa < 1/\sqrt{L}\), then the output variance under BHyT is strictly smaller than that under LNS~\citep{sun2026curse} for every layer \(\ell\) up to depth \(L\), i.e., for all \(1 \le \ell \le L\):
\begin{equation}
\tilde{s}_{x_\ell}^{\text{BHyT}} < \tilde{s}_{x_\ell}^{\text{LNS}}, \quad \forall \ell \in \{1, \dots, L\}.
\end{equation}
\end{restatable}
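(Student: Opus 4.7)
The plan is to derive termwise variance recurrences along the residual stream for both BHyT and LNS, and then compare them layer by layer using the hypothesis $\lambda/\kappa < 1/\sqrt{L}$. The BHyT recurrence will exploit two facts already established: the input-bounding property of Proposition~\ref{pro:bhyt_bound}, which keeps $\tanh$ in its near-linear regime, and Theorem~\ref{thm:attn_variance}, which quantifies the per-sublayer variance contribution as proportional to $\lambda^2/\kappa^2$.

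\textbf{BHyT recurrence.} Let $\tilde{s}^{2}_{x_\ell}$ denote the average coordinate-wise variance of the residual stream entering block $\ell$. Proposition~\ref{pro:bhyt_bound} ensures that the argument of each $\tanh(\cdot)$ lies in $(-\lambda,\lambda)$ with probability at least $1-\kappa^{-2}$, so the sublayer behaves approximately linearly and its pre-activation has coordinate variance $\lambda^2/\kappa^2$ by construction. Combining this with Theorem~\ref{thm:attn_variance} for the attention branch and the analogous linearization plus Gaussian-weight argument for the MLP branch, and assuming that the residual stream and the sublayer output are approximately uncorrelated at initialization, the residual variance satisfies
\begin{equation}
\tilde{s}^{2,\text{BHyT}}_{x_{\ell+1}} \;\le\; \tilde{s}^{2,\text{BHyT}}_{x_\ell} \;+\; \tfrac{\lambda^2}{\kappa^2},
\end{equation}
so by induction $\tilde{s}^{2,\text{BHyT}}_{x_\ell} \le \tilde{s}^{2}_{x_0} + \ell\,\lambda^2/\kappa^2$, with constants such as $\|W_V W_O\|_F^2/(Td)$ absorbed into the effective $\lambda$.

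\textbf{LNS recurrence and termwise comparison.} For LNS~\citep{sun2025curse}, the sublayer output at block $\ell$ is multiplied by $1/\sqrt{\ell}$ applied to a LayerNorm output of unit variance, so it contributes variance of order $1/\ell$ to the residual stream. Summing over blocks gives $\tilde{s}^{2,\text{LNS}}_{x_\ell} \ge \tilde{s}^{2}_{x_0} + H_\ell$, where $H_\ell = \sum_{k=1}^\ell 1/k$ denotes the harmonic number. Since both schemes share the same input embedding variance $\tilde{s}^{2}_{x_0}$, it suffices to compare the excess. Plugging in $\lambda^2/\kappa^2 < 1/L$, for every $1 \le \ell \le L$ we get
\begin{equation}
\tilde{s}^{2,\text{BHyT}}_{x_\ell} - \tilde{s}^{2}_{x_0} \;\le\; \ell\,\tfrac{\lambda^2}{\kappa^2} \;<\; \tfrac{\ell}{L} \;\le\; 1 \;\le\; H_\ell \;\le\; \tilde{s}^{2,\text{LNS}}_{x_\ell} - \tilde{s}^{2}_{x_0},
\end{equation}
and adding $\tilde{s}^{2}_{x_0}$ back yields the strict inequality at every layer $\ell$ up to depth $L$. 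The strictness is inherited from the strict hypothesis $\lambda/\kappa < 1/\sqrt{L}$, while the fact that $H_\ell \ge H_1 = 1$ for every $\ell \ge 1$ closes the chain uniformly.

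\textbf{Main obstacle.} The delicate step is producing a clean per-block additive constant $\lambda^2/\kappa^2$ in the BHyT recurrence: Theorem~\ref{thm:attn_variance} carries a spectral factor $\|W_V W_O\|_F^2/(Td)$, and an analogous factor arises for the MLP branch, so the argument requires either a standard scaled-initialization assumption making those Frobenius factors $\le 1$, or a redefinition of $\lambda$ that absorbs them. A secondary concern is the bookkeeping when each block contains both an attention and an MLP sublayer, which doubles the additive constant; this is harmless because $H_\ell \ge 2$ once $\ell \ge 4$, and the cases $\ell \in \{1,2,3\}$ can be handled either by a slightly tighter hypothesis $\lambda/\kappa < 1/\sqrt{cL}$ with $c=2$, or by directly verifying them against $H_\ell \ge 1$. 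Beyond these, the comparison itself is a short arithmetic chain.
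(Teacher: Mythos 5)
Your core reduction --- getting the claim down to $\lambda^2/\kappa^2 < 1/\ell$ for each $\ell \le L$ --- is the same one the paper uses, but your route to it has a genuine gap in how the structural constants are handled. The paper's proof (via Lemmas~\ref{lem:variance-lns} and~\ref{lem:variance-bhyt}) writes the per-sublayer variance gain of \emph{both} methods with the \emph{same} prefactor $C_{\text{Attn}}\overline{\gamma^2}$, where $C_{\text{Attn}} = \|W_V W_O\|_F^2/(Td)$: the gain is proportional to $C_{\text{Attn}}\overline{\gamma^2}\cdot\lambda^2/\kappa^2$ for BHyT and to $C_{\text{Attn}}\overline{\gamma^2}\cdot(1/\ell)$ for LNS, so the prefactor cancels and the condition $\lambda^2/\kappa^2 < 1/\ell$ falls out with no assumption on the prefactor's magnitude. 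Your additive chain instead absorbs that prefactor into an ``effective $\lambda$'' on the BHyT side while silently normalizing it to $1$ on the LNS side (so that the LNS excess can be lower-bounded by the harmonic number $H_\ell$). These two normalizations are incompatible: $\ell\,\lambda^2/\kappa^2 < H_\ell$ only implies the claimed variance ordering if the \emph{same} constant multiplies both cumulative sums, and if you reinstate it symmetrically your hypothesis becomes a condition on the effective quantity rather than on the hyperparameter ratio $\lambda/\kappa$ that the theorem actually constrains. Your own ``main obstacle'' paragraph flags this, but neither proposed fix (a scaled-initialization assumption making the Frobenius factors at most one, or redefining $\lambda$) recovers the theorem as stated; the cancellation of identical prefactors is the missing idea.

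Two smaller divergences from the paper are worth noting. First, you drop the residual-stream/sublayer cross-covariance by assuming uncorrelatedness, whereas the paper retains it through the amplification function $\delta(\rho,\pi)=1+\pi+2\rho\sqrt{\pi}$ and needs only its monotonicity in $\pi$ for $\rho\ge 0$; the paper's per-layer comparison therefore survives nonzero correlation while yours does not. Second, your comparison is additive and telescoping (cumulative sums against $H_\ell$), whereas the paper compares multiplicative per-layer amplification factors and concludes by monotonicity. The additive route is not intrinsically wrong --- once the constants are treated symmetrically it even demands slightly less than the paper's termwise inequality, since $\ell\,\lambda^2/\kappa^2 < H_\ell$ is weaker than $\lambda^2/\kappa^2 < 1/k$ for every $k\le\ell$ --- but as written the asymmetric treatment of $C_{\text{Attn}}$, $C_{\text{MLP}}$, and $\overline{\gamma^2}$ breaks the chain of inequalities.
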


Theorem~\ref{thm:bhyt-var-bound} gives a finite-depth variance bound for BHyT: if \(\lambda/\kappa < 1/\sqrt{L}\), then for every layer \(1 \le \ell \le L\), the variance under BHyT is smaller than the corresponding variance under LNS. 
For example, choosing \(\lambda = 1\) and \(\kappa = 10\) satisfies the condition for networks with depth \(L < 100\), suggesting that BHyT can achieve a smaller variance bound than LNS for typical model depths. 
The detailed proof is provided in Appendix~\ref{proof:appx_finite_bound}.

\section{Experiments \& results}\label{sec:experiments}
We evaluate BHyT along three axes that directly reflect its design goals: \textbf{stability}, \textbf{performance}, and \textbf{efficiency}. 
For stability, we test whether BHyT suppresses depthwise growth in activation magnitude and variance after pretraining under scaling-law-based token budgets~\citep{hoffmann2022training}.
For performance, we evaluate both pretraining-only and post-SFT models on downstream benchmarks. 
For efficiency, we measure pretraining throughput, generation throughput, and the effect of the proposed variance approximation. 
Our main experiments use Llama-374M and Llama-1B, and we further include a 20B-token comparison between Llama-3B models using BHyT and Peri-LN~\citep{kim2025periln}. 
A complementary full-baseline Llama-3B study is reported in Appendix~\ref{appx:llama3b_full_baseline}.
Together, these experiments assess whether BHyT provides a practical replacement for RMSNorm that improves the stability--efficiency trade-off in LLM training and inference.

\subsection{Experimental Setup}
\label{sec:exp_setup}

\paragraph{Models.}
We conduct experiments with two Llama-3.2-style~\citep{grattafiori2024llama} model scales: Llama-374M and Llama-1B. We instantiate only the architectures and train all models from scratch.
Following Chinchilla-style compute-optimal scaling, we allocate approximately 20 training tokens per parameter~\citep{hoffmann2022training}, resulting in 7.5B training tokens for Llama-374M and 20B training tokens for Llama-1B.
This setup allows us to evaluate whether BHyT remains effective across model scales under comparable compute-aware pretraining budgets.
To further assess larger-scale behavior, we additionally train Llama-3B on 20B tokens and compare BHyT with Peri-LN.
\paragraph{Baselines.}
We compare BHyT against four representative normalization or normalization-replacement methods.
RMSNorm~\citep{zhang2019root} is the default normalization used in Llama-style architectures.
LNS~\citep{sun2026curse} and Peri-LN~\citep{kim2025periln} are designed to improve training stability in deep Transformers.
DyT~\citep{zhu2025transformers} replaces conventional normalization with a $\tanh(\cdot)$-based transformation using a learnable scalar parameter $\alpha_{\text{DyT}}$. A side-by-side formulation and comparison of these methods is provided in Appendix~\ref{appx:baseline_comp}.

\paragraph{Data and evaluation.}
Pretraining is performed on the C4 corpus~\citep{raffel2020exploring}.
For supervised fine-tuning (SFT), we use Lima1K~\citep{zhou2023lima}, a compact and high-quality instruction-tuning dataset.
We evaluate all pretraining-only and post-SFT models on eight language understanding and reasoning benchmarks: ARC-C and ARC-E~\citep{clark2018think}, PIQA~\citep{bisk2020piqa}, HellaSwag~\citep{zellers2019hellaswag}, OpenBookQA (OBQA)~\citep{OpenBookQA2018}, Winogrande~\citep{sakaguchi2021winogrande}, MMLU~\citep{hendrycks2020measuring}, and BoolQ~\citep{clark-etal-2019-boolq}.
All downstream tasks are evaluated in the 3-shot setting using lm-evaluation-harness~\citep{eval-harness}.
We report task-specific accuracy and macro-average accuracy across tasks.

\paragraph{Implementation.}
All pretraining and SFT experiments are implemented using LlamaFactory~\citep{zheng2024llamafactory}.
For downstream evaluation, we use standardized task configurations from lm-evaluation-harness~\citep{eval-harness} without modification.
For each method, we select hyperparameters from a shared search range using the Llama-1B 20K-step run, and then use the selected configuration for the full pretraining run at each model's token budget.
The detailed search ranges, selected configurations, hardware setup, and parallelism strategy are provided in Appendix~\ref{appx:exp_setup}.

\subsection{Pretraining and Fine-tuning Protocols}
\label{sec:pt_sft_protocol}

We pretrain all methods on C4 using token budgets proportional to model size. 
Llama-374M is trained on 7.5B tokens, and Llama-1B is trained on 20B tokens.
These pretrained checkpoints are used for both pretraining-only and post-SFT evaluations.
For pretraining-only evaluation, each method is trained once, and downstream accuracy is averaged over five random 3-shot seeds.

For SFT, we fine-tune each pretrained checkpoint on Lima1K for 15 epochs using five random seeds.
Following the protocol used in Peri-LN~\citep{kim2025periln}, we evaluate each SFT run on CommonSenseQA~\citep{talmor-etal-2019-commonsenseqa} after each epoch and select the checkpoint with the lowest CommonSenseQA validation loss.
Each selected checkpoint is then evaluated on the same 3-shot downstream benchmark suite, using the corresponding seed for 3-shot sampling.
We report the average downstream accuracy over the five SFT and evaluation seeds.
In addition, we report token-level perplexity (PPL) on the C4 validation set to assess forgetting after SFT and on WikiText-2~\citep{merity2016pointer} to assess language modeling generalization beyond the pretraining validation distribution.

\begin{figure*}[t!]
\centering
\includegraphics[width=\linewidth]{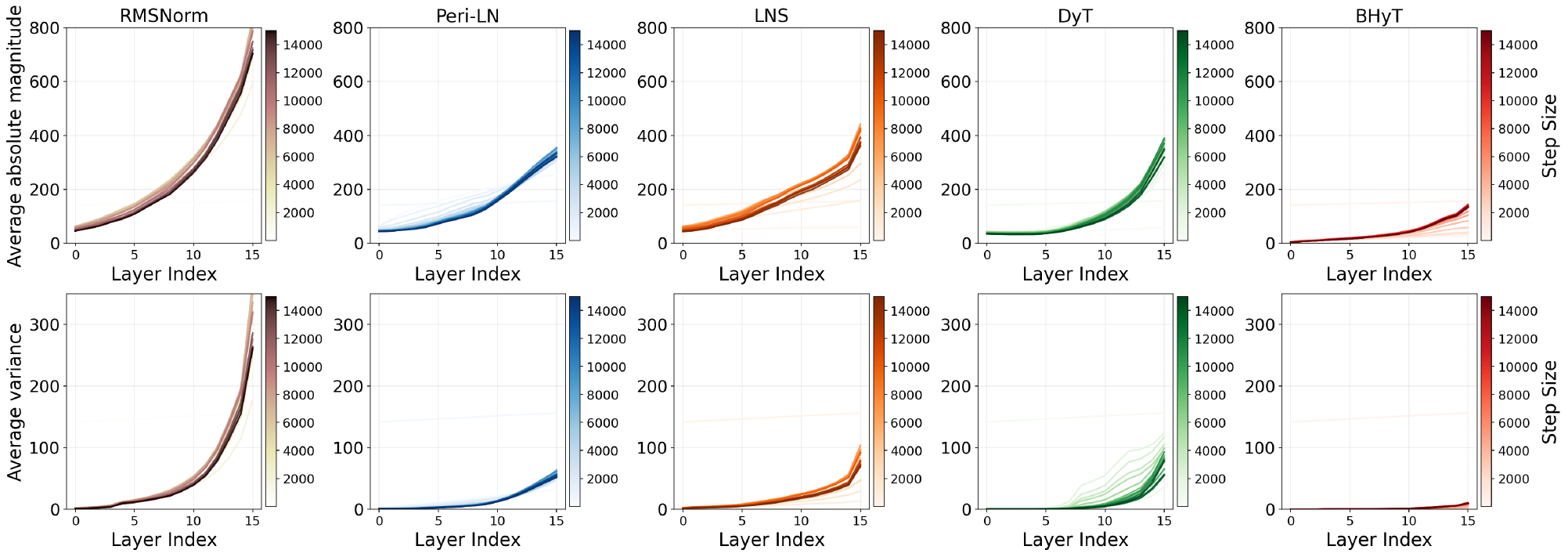}
\caption{Layer-wise activation magnitude and variance for Llama-1B after 20B-token C4 pretraining.
Each method is evaluated under its selected best hyperparameter configuration from the shared search.
BHyT suppresses depth-wise growth more effectively than RMSNorm and DyT while maintaining controlled activation statistics comparable to stability-oriented normalization methods.} \label{fig:avg_mean_var_plot}
\end{figure*}

\begin{figure*}[t!]
    \centering
    \includegraphics[width=\linewidth]{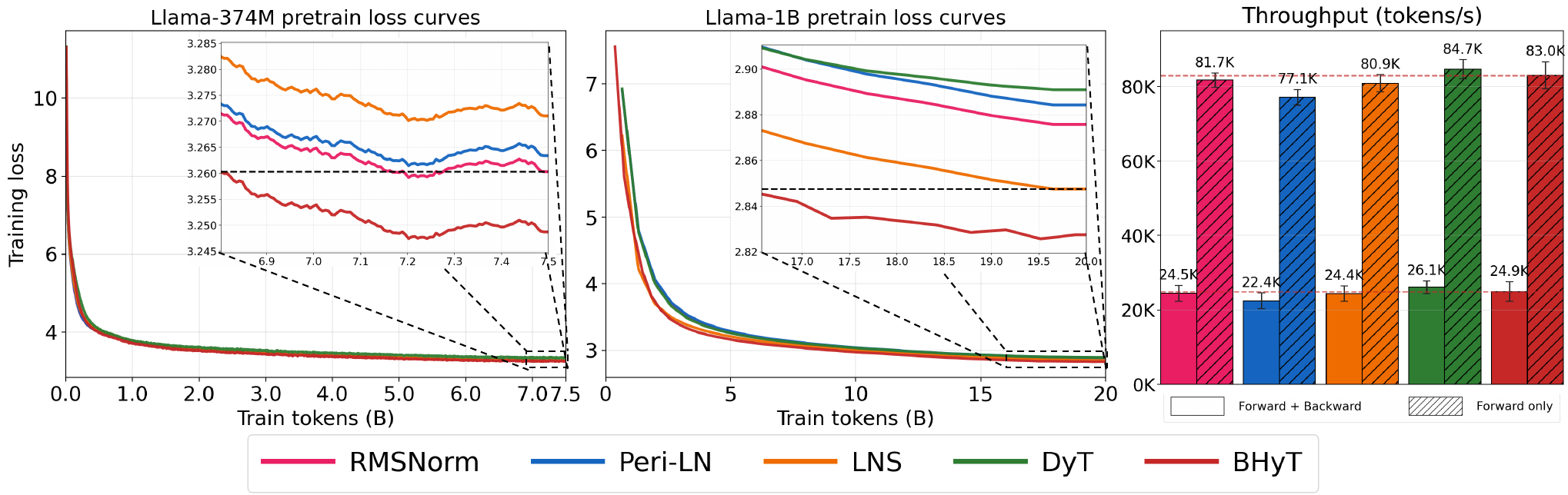}
    \caption{
    Pretraining loss curves and training throughput for Llama-374M and Llama-1B.
    BHyT achieves stable convergence while providing higher throughput than normalization-based baselines and remaining close to DyT, which avoids statistical computation entirely.
    }
    \label{fig:loss_and_speed}
\end{figure*}

\subsection{Stability Analysis}
\label{sec:stability_analysis}

BHyT stabilizes deep Transformer training by suppressing depth-wise activation growth.
Figure~\ref{fig:avg_mean_var_plot} reports the layer-wise activation magnitude and variance of Llama-1B after 20B-token C4 pretraining.
Unlike a fixed-learning-rate diagnostic, this comparison uses each method's selected best hyperparameter configuration, reflecting the final training setup used for the main experiments.
RMSNorm exhibits clear growth in both activation magnitude and variance as depth increases.
DyT also shows stronger depth-wise growth than BHyT, despite using its selected configuration; additional learning-rate sensitivity analysis is provided in Appendix~\ref{appx:lr_robust}.
In contrast, BHyT keeps both statistics substantially more controlled across layers.
This behavior supports the central motivation of BHyT: replacing RMSNorm with a bounded transformation should not merely remove normalization overhead but should also preserve stable activation dynamics in deep LLMs.
Appendix~\ref{appx:dyt_growth} further analyzes why BHyT suppresses depth-wise growth more effectively than DyT.

\subsection{Pretraining and SFT Performance}
\label{sec:pt_sft_performance}

BHyT achieves stable pretraining while improving training throughput across model scales. 
Figure~\ref{fig:loss_and_speed} shows pretraining loss curves for both Llama-374M and Llama-1B, along with training throughput for Llama-1B.
BHyT maintains stable convergence on both model scales and achieves higher training throughput than normalization-based baselines, while remaining close to DyT, which avoids statistical computation entirely.

BHyT's controlled activation dynamics translate into strong pretraining-only performance.
Table~\ref{tab:pt_20b_token} reports pretraining-only results for Llama-374M and Llama-1B under model-size-proportional C4 token budgets.
Across both model scales, BHyT achieves the lowest PT train loss, PT evaluation loss, and perplexity.
It also obtains the highest average downstream accuracy, outperforming the second-best method by \(0.3\) points on Llama-374M and \(0.6\) points on Llama-1B.
These results indicate that BHyT's stability advantage is not only visible in internal activation statistics but also translates into stronger downstream performance after pretraining.

BHyT remains effective after SFT.
Table~\ref{tab:sft_pt_20b_token} reports post-SFT results on Llama-374M and Llama-1B.
Besides downstream 3-shot accuracy, we report PPL on C4 to assess forgetting after SFT and on WikiText-2 to assess language modeling generalization beyond the pretraining distribution.
At both model scales, BHyT achieves the lowest C4 PPL, the lowest WikiText-2 PPL, and the highest average accuracy.
It outperforms the second-best method in average downstream accuracy by \(0.2\) points on Llama-374M and \(1.4\) points on Llama-1B.
These results suggest that BHyT's pretraining stability enables reliable adaptation while preserving language modeling ability after SFT.

\begin{table*}[t!]
\centering
\caption{
Pretraining-only evaluation on C4 for Llama-374M and Llama-1B.
Training budgets follow approximately 20 tokens per parameter, with Llama-374M trained on 7.5B tokens and Llama-1B on 20B tokens.
All downstream tasks are evaluated in the 3-shot setting.
BHyT obtains the lowest pretraining loss and the highest average downstream accuracy at both model scales.
PPL denotes perplexity.
}
\label{tab:pt_20b_token}
\renewcommand{\arraystretch}{1}
\setlength{\tabcolsep}{2pt}
\centering
\resizebox{\linewidth}{!}{%
\begin{tabular}{@{}ccccccccccccc@{}}
\toprule
\multicolumn{13}{c}{Llama-374M}                                                                                                                                                                                                                                                                                                                                                                                                                                                                                                                                                                                                                                                                          \\ \midrule
\multicolumn{1}{c|}{Method}  & \begin{tabular}[c]{@{}c@{}}PT Train\\ Loss\end{tabular} & \begin{tabular}[c]{@{}c@{}}PT Eval\\ Loss\end{tabular} & \multicolumn{1}{c|}{\begin{tabular}[c]{@{}c@{}}PT Eval\\ PPL\end{tabular}} & ARC-C                                            & ARC-E                                            & PIQA                                             & Hellaswag                                        & OBQA                                             & Winogrande                                       & MMLU                                             & \multicolumn{1}{c|}{BoolQ}                        & Avg.                                             \\ \midrule
\multicolumn{1}{c|}{RMSNorm} & 3.532                                                   & 3.216                                                  & \multicolumn{1}{c|}{24.916}                                                & \multicolumn{1}{l}{24.13$_{\pm{0.58}}$}          & \multicolumn{1}{l}{\textit{38.63$_{\pm{0.45}}$}} & \multicolumn{1}{l}{\textbf{66.62$_{\pm{0.36}}$}} & \multicolumn{1}{l}{\textit{36.92$_{\pm{0.15}}$}} & \multicolumn{1}{l}{26.84$_{\pm{1.35}}$}          & \multicolumn{1}{l}{\textit{51.51$_{\pm{0.56}}$}} & \multicolumn{1}{l}{25.18$_{\pm{0.02}}$}          & \multicolumn{1}{l|}{50.40$_{\pm{0.26}}$}          & \multicolumn{1}{l}{\textit{40.03$_{\pm{0.21}}$}} \\
\multicolumn{1}{c|}{Peri-LN} & 3.532                                                   & 3.219                                                  & \multicolumn{1}{c|}{25.015}                                                & \multicolumn{1}{l}{24.03$_{\pm{0.43}}$}          & \multicolumn{1}{l}{37.31$_{\pm{0.32}}$}          & \multicolumn{1}{l}{64.91$_{\pm{0.48}}$}          & \multicolumn{1}{l}{36.77$_{\pm{0.23}}$}          & \multicolumn{1}{l}{27.44$_{\pm{0.85}}$}          & \multicolumn{1}{l}{50.70$_{\pm{0.94}}$}          & \multicolumn{1}{l}{25.22$_{\pm{0.01}}$}          & \multicolumn{1}{l|}{\textbf{51.30$_{\pm{0.07}}$}} & \multicolumn{1}{l}{39.71$_{\pm{0.18}}$}          \\
\multicolumn{1}{c|}{LNS}     & 3.549                                                   & 3.229                                                  & \multicolumn{1}{c|}{25.250}                                                & \multicolumn{1}{l}{23.81$_{\pm{0.49}}$}          & \multicolumn{1}{l}{36.32$_{\pm{0.50}}$}          & \multicolumn{1}{l}{\textit{66.12$_{\pm{0.33}}$}} & \multicolumn{1}{l}{36.31$_{\pm{0.12}}$}          & \multicolumn{1}{l}{26.72$_{\pm{0.82}}$}          & \multicolumn{1}{l}{49.20$_{\pm{1.13}}$}          & \multicolumn{1}{l}{\textit{26.06$_{\pm{0.01}}$}} & \multicolumn{1}{l|}{50.24$_{\pm{0.31}}$}          & \multicolumn{1}{l}{39.35$_{\pm{0.24}}$}          \\
\multicolumn{1}{c|}{DyT}     & 3.614                                                   & 3.299                                                  & \multicolumn{1}{c|}{27.085}                                                & \multicolumn{1}{l}{\textit{24.27$_{\pm{0.65}}$}} & \multicolumn{1}{l}{36.66$_{\pm{0.56}}$}          & \multicolumn{1}{l}{63.59$_{\pm{0.35}}$}          & \multicolumn{1}{l}{34.19$_{\pm{0.22}}$}          & \multicolumn{1}{l}{\textit{27.64$_{\pm{0.57}}$}} & \multicolumn{1}{l}{50.24$_{\pm{0.70}}$}          & \multicolumn{1}{l}{25.96$_{\pm{0.02}}$}          & \multicolumn{1}{l|}{50.42$_{\pm{0.22}}$}          & \multicolumn{1}{l}{39.12$_{\pm{0.18}}$}          \\
\multicolumn{1}{c|}{BHyT}    & \textbf{3.519}                                          & \textbf{3.207}                                         & \multicolumn{1}{c|}{\textbf{24.714}}                                       & \textbf{24.40$_{\pm{0.35}}$}                     & \textbf{38.68$_{\pm{0.29}}$}                     & 65.80$_{\pm{0.42}}$                              & \textbf{37.31$_{\pm{0.09}}$}                     & \textbf{27.88$_{\pm{0.52}}$}                     & \textbf{51.71$_{\pm{0.56}}$}                     & \textbf{26.19$_{\pm{0.01}}$}                     & \multicolumn{1}{c|}{\textit{50.53$_{\pm{0.63}}$}} & \textbf{40.31$_{\pm{0.11}}$}                     \\ \midrule
\multicolumn{13}{c}{Llama-1B}                                                                                                                                                                                                                                                                                                                                                                                                                                                                                                                                                                                                                                                                            \\ \midrule
\multicolumn{1}{c|}{Method}  & \begin{tabular}[c]{@{}c@{}}PT Train\\ Loss\end{tabular} & \begin{tabular}[c]{@{}c@{}}PT Eval\\ Loss\end{tabular} & \multicolumn{1}{c|}{\begin{tabular}[c]{@{}c@{}}PT Eval\\ PPL\end{tabular}} & ARC-C                                            & ARC-E                                            & PIQA                                             & Hellaswag                                        & OBQA                                             & Winogrande                                       & MMLU                                             & \multicolumn{1}{c|}{BoolQ}                        & Avg.                                             \\ \midrule
\multicolumn{1}{c|}{RMSNorm} & 2.876                                                   & 2.877                                                  & \multicolumn{1}{c|}{17.753}                                                & \textit{24.33$_{\pm{0.22}}$}                     & 39.43$_{\pm{0.47}}$                              & 68.16$_{\pm{0.54}}$                              & 44.83$_{\pm{0.15}}$                              & 28.32$_{\pm{0.72}}$                              & 50.29$_{\pm{0.93}}$                              & 26.55$_{\pm{0.03}}$                              & \multicolumn{1}{c|}{53.62$_{\pm{0.21}}$}          & 41.94$_{\pm{0.15}}$                              \\
\multicolumn{1}{c|}{Peri-LN} & 2.884                                                   & 2.884                                                  & \multicolumn{1}{c|}{17.892}                                                & 23.17$_{\pm{0.63}}$                              & 39.45$_{\pm{0.50}}$                              & 68.07$_{\pm{0.50}}$                              & 44.69$_{\pm{0.19}}$                              & 29.40$_{\pm{0.63}}$                              & \textit{52.15$_{\pm{0.17}}$}                     & 26.99$_{\pm{0.01}}$                              & \multicolumn{1}{c|}{54.51$_{\pm{0.71}}$}          & 42.31$_{\pm{0.22}}$                              \\
\multicolumn{1}{c|}{LNS}     & \textit{2.848}                                          & \textit{2.848}                                         & \multicolumn{1}{c|}{\textit{17.259}}                                       & 24.15$_{\pm{0.80}}$                              & 38.86$_{\pm{0.17}}$                              & 68.70$_{\pm{0.41}}$                              & \textit{46.52$_{\pm{0.14}}$}                     & \textbf{30.56$_{\pm{0.96}}$}                     & 50.80$_{\pm{0.65}}$                              & 26.30$_{\pm{0.02}}$                              & \multicolumn{1}{c|}{\textbf{56.90$_{\pm{0.26}}$}} & \textit{42.85$_{\pm{0.11}}$}                     \\
\multicolumn{1}{c|}{DyT}     & \textit{2.891}                                          & 2.894                                                  & \multicolumn{1}{c|}{18.065}                                                & \textbf{24.50$_{\pm{0.56}}$}                     & \textbf{43.03$_{\pm{0.33}}$}                     & \textit{68.74$_{\pm{0.49}}$}                     & 43.55$_{\pm{0.13}}$                              & 28.96$_{\pm{1.00}}$                              & \textbf{52.41$_{\pm{0.38}}$}                     & \textbf{27.24$_{\pm{0.03}}$}                     & \multicolumn{1}{c|}{53.30$_{\pm{0.71}}$}          & 42.71$_{\pm{0.21}}$                              \\
\multicolumn{1}{c|}{BHyT}    & \textbf{2.828}                                          & \textbf{2.802}                                         & \multicolumn{1}{c|}{\textbf{16.470}}                                       & \multicolumn{1}{l}{24.28$_{\pm{0.43}}$}          & \multicolumn{1}{l}{\textit{40.94$_{\pm{0.38}}$}} & \multicolumn{1}{l}{\textbf{69.06$_{\pm{0.36}}$}} & \multicolumn{1}{l}{\textbf{48.97$_{\pm{0.03}}$}} & \multicolumn{1}{l}{\textit{30.04$_{\pm{1.13}}$}} & \multicolumn{1}{l}{51.29$_{\pm{0.65}}$}          & \multicolumn{1}{l}{\textit{27.01$_{\pm{0.06}}$}} & \multicolumn{1}{l|}{\textit{55.74$_{\pm{0.43}}$}} & \multicolumn{1}{l}{\textbf{43.42$_{\pm{0.25}}$}} \\ \bottomrule
\end{tabular}%
}
\end{table*}

\begin{table*}[t!]
\centering
\caption{
Post-SFT evaluation for Llama-374M and Llama-1B after C4 pretraining with model-size-proportional token budgets.
Each model is fine-tuned on Lima1K for 15 epochs, and the checkpoint with the lowest CommonSenseQA validation loss is selected.
C4 and WikiText-2 token-level PPL are reported to assess post-SFT forgetting and language modeling generalization on a separate corpus, respectively.
All downstream tasks are evaluated in the 3-shot setting.
BHyT attains the best results in C4 PPL, WikiText-2 PPL, and average downstream accuracy at both model scales.
}
\label{tab:sft_pt_20b_token}
\renewcommand{\arraystretch}{1.0}
\setlength{\tabcolsep}{2pt}
\centering
\resizebox{\linewidth}{!}{%
\begin{tabular}{@{}cccccccccccc@{}}
\toprule
\multicolumn{12}{c}{Llama-374M}                                                                                                                                                                                                                                                                                                                                                                                         \\ \midrule
\multicolumn{1}{c|}{Method}  & C4 PPL                     & \multicolumn{1}{c|}{WikiText-2 PPL}             & ARC-C                        & ARC-E                        & PIQA                         & Hellaswag                    & OBQA                         & Winogrande                   & MMLU                         & \multicolumn{1}{c|}{BoolQ}                        & Avg.                         \\ \midrule
\multicolumn{1}{c|}{RMSNorm} & \textit{27.90}$_{\pm0.03}$ & \multicolumn{1}{c|}{39.45$_{\pm0.08}$}          & 24.45$_{\pm{0.79}}$          & \textbf{43.16$_{\pm{0.44}}$} & \textbf{67.57$_{\pm{0.43}}$} & \textbf{37.74$_{\pm{0.15}}$} & \textit{28.48$_{\pm{0.57}}$} & 50.66$_{\pm{0.50}}$          & 24.83$_{\pm{0.08}}$          & \multicolumn{1}{c|}{51.50$_{\pm{0.74}}$}          & \textit{41.02$_{\pm{0.22}}$} \\
\multicolumn{1}{c|}{Peri-LN} & 28.20$_{\pm0.04}$          & \multicolumn{1}{c|}{\textit{38.62}$_{\pm0.09}$} & 24.22$_{\pm{0.41}}$          & 41.98$_{\pm{0.72}}$          & 66.82$_{\pm{0.30}}$          & 37.51$_{\pm{0.24}}$          & 27.96$_{\pm{0.50}}$          & 50.26$_{\pm{0.51}}$          & 24.44$_{\pm{0.06}}$          & \multicolumn{1}{c|}{\textit{51.84$_{\pm{0.45}}$}} & 40.61$_{\pm{0.20}}$          \\
\multicolumn{1}{c|}{LNS}     & 28.39$_{\pm0.03}$          & \multicolumn{1}{c|}{39.47$_{\pm0.06}$}          & \textit{24.91$_{\pm{0.26}}$} & 41.28$_{\pm{0.20}}$          & \textit{67.51$_{\pm{0.33}}$} & \textit{37.46$_{\pm{0.16}}$} & 28.32$_{\pm{0.68}}$          & 48.49$_{\pm{0.43}}$          & \textit{26.65$_{\pm{0.04}}$} & \multicolumn{1}{c|}{51.34$_{\pm{0.41}}$}          & 40.79$_{\pm{0.09}}$          \\
\multicolumn{1}{c|}{DyT}     & 31.12$_{\pm0.02}$          & \multicolumn{1}{c|}{44.67$_{\pm0.07}$}          & 24.85$_{\pm{0.34}}$          & 41.20$_{\pm{0.28}}$          & 65.06$_{\pm{0.70}}$          & 35.11$_{\pm{0.18}}$          & 28.44$_{\pm{1.15}}$          & \textbf{51.32$_{\pm{0.59}}$} & \textbf{26.83$_{\pm{0.07}}$} & \multicolumn{1}{c|}{50.55$_{\pm{0.34}}$}          & 40.42$_{\pm{0.20}}$          \\
\multicolumn{1}{c|}{BHyT}    & \textbf{27.40$_{\pm0.13}$} & \multicolumn{1}{c|}{\textbf{38.39$_{\pm0.39}$}} & \textbf{25.44$_{\pm{0.27}}$} & \textit{42.12$_{\pm{0.48}}$} & 66.18$_{\pm{0.21}}$          & 37.31$_{\pm{0.11}}$          & \textbf{29.08$_{\pm{0.45}}$} & \textit{51.21$_{\pm{0.91}}$} & 26.23$_{\pm{0.12}}$          & \multicolumn{1}{c|}{\textbf{52.02$_{\pm{0.40}}$}} & \textbf{41.20$_{\pm{0.17}}$} \\ \midrule
\multicolumn{12}{c}{Llama-1B}                                                                                                                                                                                                                                                                                                                                                                                           \\ \midrule
\multicolumn{1}{c|}{Method}  & C4 PPL                     & \multicolumn{1}{c|}{WikiText-2 PPL}             & ARC-C                        & ARC-E                        & PIQA                         & Hellaswag                    & OBQA                         & Winogrande                   & MMLU                         & \multicolumn{1}{c|}{BoolQ}                        & Avg.                         \\ \midrule
\multicolumn{1}{c|}{RMSNorm} & 20.34$_{\pm0.01}$          & \multicolumn{1}{c|}{25.99$_{\pm0.01}$}          & \textbf{27.34$_{\pm0.50}$}   & \textit{49.88$_{\pm0.40}$}   & \textit{71.16$_{\pm0.63}$}   & 47.93$_{\pm0.17}$            & 30.76$_{\pm0.92}$            & 51.90$_{\pm0.87}$            & \textit{26.80$_{\pm0.07}$}   & \multicolumn{1}{c|}{50.27$_{\pm0.74}$}            & 44.51$_{\pm0.23}$            \\
\multicolumn{1}{c|}{Peri-LN} & 21.33$_{\pm0.75}$          & \multicolumn{1}{c|}{26.88$_{\pm0.97}$}          & 25.87$_{\pm0.49}$            & 48.93$_{\pm0.32}$            & 70.17$_{\pm0.29}$            & 47.61$_{\pm0.70}$            & \textit{31.76$_{\pm1.21}$}   & 52.17$_{\pm0.57}$            & 25.51$_{\pm0.10}$            & \multicolumn{1}{c|}{48.79$_{\pm0.25}$}            & 43.85$_{\pm0.12}$            \\
\multicolumn{1}{c|}{LNS}     & 19.89$_{\pm0.21}$          & \multicolumn{1}{c|}{25.99$_{\pm0.35}$}          & 26.52$_{\pm0.28}$            & 48.90$_{\pm0.48}$            & 70.74$_{\pm0.57}$            & \textit{50.11$_{\pm0.21}$}   & 31.04$_{\pm1.62}$            & \textit{51.76$_{\pm0.68}$}   & 24.57$_{\pm0.09}$            & \multicolumn{1}{c|}{\textit{54.02$_{\pm0.36}$}}   & \textit{44.71$_{\pm0.19}$}   \\
\multicolumn{1}{c|}{DyT}     & \textit{20.27$_{\pm0.01}$} & \multicolumn{1}{c|}{\textit{24.61$_{\pm0.01}$}} & 25.70$_{\pm0.69}$            & 48.99$_{\pm0.35}$            & 70.17$_{\pm0.45}$            & 44.92$_{\pm0.19}$            & 31.36$_{\pm1.35}$            & \textbf{52.68$_{\pm0.68}$}   & 27.24$_{\pm0.07}$            & \multicolumn{1}{c|}{50.29$_{\pm0.65}$}            & 43.92$_{\pm0.29}$            \\
\multicolumn{1}{c|}{BHyT}    & \textbf{19.37$_{\pm0.15}$} & \multicolumn{1}{c|}{\textbf{23.84$_{\pm0.29}$}} & \textit{27.18$_{\pm0.69}$}   & \textbf{50.35$_{\pm0.67}$}   & \textbf{72.29$_{\pm0.23}$}   & \textbf{50.88$_{\pm0.71}$}   & \textbf{32.16$_{\pm1.11}$}   & 51.07$_{\pm0.94}$            & \textbf{27.54$_{\pm0.07}$}   & \multicolumn{1}{c|}{\textbf{57.05$_{\pm0.91}$}}   & \textbf{46.07$_{\pm0.16}$}   \\ \bottomrule
\end{tabular}%
}
\end{table*}

\begin{table*}[t!]
\centering
\caption{
Performance of Llama-3B pretrained on 20B tokens, evaluated before and after supervised fine-tuning (SFT). Post-SFT C4 and WikiText-2 PPL measure forgetting after SFT and language modeling generalization on a separate corpus, respectively. BHyT consistently outperforms the stability-oriented baseline, Peri-LN, achieving lower losses, lower PPL, and higher downstream accuracy.
}
\label{tab:3b_pt_n_sft}
\renewcommand{\arraystretch}{1.0}
\setlength{\tabcolsep}{2pt}
\centering
\resizebox{\linewidth}{!}{%
\begin{tabular}{@{}ccclllllllll@{}}
\toprule
\multicolumn{12}{c}{Llama-3B (Pretrained on 20B tokens only)}                                                                                                                                                                                                                                                                                                                                                                                                         \\ \midrule
\multicolumn{1}{c|}{Method}  & \begin{tabular}[c]{@{}c@{}}PT Train\\ Loss\end{tabular} & \multicolumn{1}{c|}{\begin{tabular}[c]{@{}c@{}}PT Eval\\ Loss\end{tabular}} & \multicolumn{1}{c}{ARC-C}  & \multicolumn{1}{c}{ARC-E}  & \multicolumn{1}{c}{PIQA}   & \multicolumn{1}{c}{Hellaswag} & \multicolumn{1}{c}{OBQA}   & \multicolumn{1}{c}{Winograde} & \multicolumn{1}{c}{MMLU}   & \multicolumn{1}{c|}{BoolQ}                      & \multicolumn{1}{c}{Avg.}    \\ \midrule
\multicolumn{1}{c|}{Peri-LN} & 2.811                                                   & \multicolumn{1}{c|}{2.812}                                                  & 23.29$_{\pm0.56}$          & 40.90$_{\pm0.34}$          & 68.86$_{\pm0.67}$          & 47.54$_{\pm0.13}$             & 30.36$_{\pm0.67}$          & 51.51$_{\pm0.22}$             & 27.02$_{\pm0.01}$          & \multicolumn{1}{l|}{47.20$_{\pm0.66}$}          & 42.08$_{\pm0.25}$           \\
\multicolumn{1}{c|}{BHyT}    & \textbf{2.756}                                          & \multicolumn{1}{c|}{\textbf{2.760}}                                         & \textbf{25.03$_{\pm0.48}$} & \textbf{45.97$_{\pm0.58}$} & \textbf{70.18$_{\pm0.08}$} & \textbf{51.02$_{\pm0.20}$}    & \textbf{31.12$_{\pm1.26}$} & \textbf{52.14$_{\pm0.29}$}    & \textbf{27.50$_{\pm0.02}$} & \multicolumn{1}{l|}{\textbf{55.88$_{\pm0.34}$}} & \textbf{44.86$_{\pm0.11}$}  \\ \midrule
\multicolumn{12}{c}{Llama-3B (Pretrained on 20B tokens \& SFT)}                                                                                                                                                                                                                                                                                                                                                                                                       \\ \midrule
\multicolumn{1}{c|}{Method}  & C4 PPL                                                  & \multicolumn{1}{c|}{WikiText-2 PPL}                                         & \multicolumn{1}{c}{ARC-C}  & \multicolumn{1}{c}{ARC-E}  & \multicolumn{1}{c}{PIQA}   & \multicolumn{1}{c}{Hellaswag} & \multicolumn{1}{c}{OBQA}   & \multicolumn{1}{c}{Winograde} & \multicolumn{1}{c}{MMLU}   & \multicolumn{1}{c|}{BoolQ}                      & \multicolumn{1}{c}{Avg.}    \\ \midrule
\multicolumn{1}{c|}{Peri-LN} & 18.72$_{\pm0.06}$                                       & \multicolumn{1}{c|}{23.04$_{\pm0.03}$}                                      & 27.51$_{\pm0.60}$          & 51.08$_{\pm0.50}$          & 71.76$_{\pm0.48}$          & 50.10$_{\pm0.18}$             & 32.16$_{\pm0.53}$          & 52.23$_{\pm0.51}$             & 26.54$_{\pm0.25}$          & \multicolumn{1}{l|}{42.56$_{\pm1.19}$}          & 44.24$_{\pm0.20}$           \\
\multicolumn{1}{c|}{BHyT}    & \textbf{17.43$_{\pm{0.15}}$}                            & \multicolumn{1}{c|}{\textbf{21.33$_{\pm0.20}$}}                             & \textbf{28.21$_{\pm0.53}$} & \textbf{54.62$_{\pm0.4}$}  & \textbf{73.46$_{\pm0.43}$} & \textbf{53.85$_{\pm0.50}$}    & \textbf{32.36$_{\pm0.69}$} & \textbf{53.12$_{\pm0.40}$}    & \textbf{26.79$_{\pm0.23}$} & \multicolumn{1}{l|}{\textbf{49.16$_{\pm1.36}$}} & \textbf{46.45$_{\pm0.28}$} \\ \bottomrule
\end{tabular}%
}
\end{table*}

We further evaluate BHyT at the larger Llama-3B scale under a 20B-token pretraining budget.
Table~\ref{tab:3b_pt_n_sft} compares BHyT with Peri-LN, the strongest stability-oriented baseline, before and after SFT.
In the pretraining-only setting, BHyT achieves lower PT train and evaluation losses and outperforms Peri-LN by \(3.4\) percentage points in average downstream accuracy.
After SFT, BHyT achieves lower PPL on both C4 and WikiText-2 and outperforms Peri-LN by \(2.5\) percentage points in average downstream accuracy.
These results suggest that BHyT's stability advantage persists at a larger model scale under longer pretraining.
A short-budget full-baseline comparison at the Llama-3B scale is provided in Appendix~\ref{appx:llama3b_full_baseline}.

\subsection{Efficiency Analysis}
\label{sec:efficiency_analysis}

\begin{wraptable}{r}{0.55\textwidth}
\vspace{-1.0em}
\centering
\captionsetup{skip=1pt}
\caption{
Generation throughput for Llama-1B. BHyT is faster than normalization-based baselines, while DyT remains the fastest statistics-free baseline.
}
\label{tab:gen_speed}
\resizebox{\linewidth}{!}{
\renewcommand{\arraystretch}{0.4}
\setlength{\tabcolsep}{1pt}
\resizebox{0.95\columnwidth}{!}{%
\begin{tabular}{@{}ccc@{}}
\toprule
\multirow{2}{*}{\begin{tabular}[c]{@{}c@{}}Throughput\\ (tokens/s)\end{tabular}} & \multicolumn{2}{c}{Max new token length}                                          \\ \cmidrule(l){2-3} 
                                                                                 & \multicolumn{1}{c|}{128}                          & 512                           \\ \midrule
RMSNorm                                                                          & \multicolumn{1}{c|}{1152.2$_{\pm 20.9}$ (-2.1\%)} & 1181.1$_{\pm 2.9}$ (-1.6\%)   \\
Peri-LN                                                                          & \multicolumn{1}{c|}{957.5$_{\pm 4.3}$ (-18.6\%)}  & 984.4$_{\pm 6.9}$ (-18.0\%)   \\
LNS                                                                              & \multicolumn{1}{c|}{1095.1$_{\pm 39.9}$ (-6.9\%)} & 1150.7$_{\pm 11.7}$ (-4.1\%)  \\ \midrule
DyT                                                                              & \multicolumn{1}{c|}{1363.9$_{\pm 9.3}$ (+15.9\%)} & 1352.0$_{\pm 11.9}$ (+12.7\%) \\ \midrule
BHyT                                                                             & \multicolumn{1}{c|}{1176.7$_{\pm 39.1}$}          & 1199.9$_{\pm 3.0}$            \\ \bottomrule
\end{tabular}%
}
}
\vspace{-1.0em}
\end{wraptable}

BHyT also improves generation throughput compared with normalization-based baselines.
Table~\ref{tab:gen_speed} reports single-GPU generation throughput for Llama-1B using an input length of 512 and a maximum new token length of 512; Appendix~\ref{appx:tps} reports additional output lengths and measurement details.
BHyT achieves 1{,}199.9 tokens/s, exceeding RMSNorm, LNS, and Peri-LN by 1.6\%, 4.1\%, and 18.0\%, respectively.
DyT remains faster at 1{,}352.0 tokens/s because it avoids statistical computation entirely.
Thus, BHyT offers a practical trade-off: it reduces RMSNorm-like normalization overhead while retaining stronger depth-wise stability than statistics-free DyT.

\begin{table*}[t!]
\centering
\caption{Ablation of the proposed variance approximation.
$\text{BHyT}^{\ast}$ denotes BHyT with exact variance computation, while BHyT uses the proposed approximation.
This ablation follows the short-budget pretraining setting; detailed setup information is provided in Appendix~\ref{appx:var_ablation_setup}.}
\label{tab:var_ablation}
\renewcommand{\arraystretch}{1.0}
\setlength{\tabcolsep}{1pt}
\centering
\resizebox{\linewidth}{!}{%
\begin{tabular}{@{}cccccccccccccc@{}}
\toprule
Llama-1B          & \begin{tabular}[c]{@{}c@{}}Variance\\ Approx.\end{tabular} & \begin{tabular}[c]{@{}c@{}}PT Train\\ Loss\end{tabular} & \begin{tabular}[c]{@{}c@{}}PT Eval\\ Loss\end{tabular} & \begin{tabular}[c]{@{}c@{}}PT Eval\\ PPL\end{tabular} & \begin{tabular}[c]{@{}c@{}}Train steps\\ per sec.\end{tabular} & ARC-E                        & PIQA                         & Hellaswag                    & OBQA                   & Winogrande                   & MMLU                         & BoolQ                        & Avg.                         \\ \midrule
RMSNorm           & X                                                          & 3.281                                                   & \textit{3.272}                                         & 26.353                                                & 0.346                                                          & $\textit{30.97}_{\pm{0.20}}$ & $\textit{62.89}_{\pm{0.74}}$ & $\textit{32.77}_{\pm{0.09}}$ & $32.32_{\pm{0.94}}$          & $\textit{50.54}_{\pm{0.48}}$ & $\textbf{25.70}_{\pm{0.01}}$ & $\textit{56.26}_{\pm{0.44}}$ & $41.64_{\pm{0.10}}$          \\
$\text{BHyT}^{\ast}$ & X                                                          & \textbf{3.266}                                          & \textbf{3.254}                                         & \textbf{25.885}                                       & 0.335                                                          & $\textbf{39.82}_{\pm{0.52}}$ & $\textbf{64.86}_{\pm{0.50}}$ & $\textbf{35.67}_{\pm{0.15}}$ & $27.68_{\pm{0.93}}$          & $50.40_{\pm{0.80}}$          & $\textit{25.19}_{\pm{0.00}}$ & $55.34_{\pm{0.67}}$          & $\textbf{42.71}_{\pm{0.15}}$ \\ \midrule
RMSNorm-Approx    & O                                                          & \textit{3.293}                                          & \textit{3.284}                                         & \textit{26.672}                                       & \textit{0.381}                                                 & $31.15_{\pm{0.44}}$          & $61.98_{\pm{0.57}}$          & $31.93_{\pm{0.04}}$          & $\textit{33.08}_{\pm{0.04}}$ & $\textbf{50.67}_{\pm{0.94}}$ & $25.16_{\pm{0.03}}$          & $46.32_{\pm{0.26}}$          & $40.04_{\pm{0.09}}$          \\
BHyT              & O                                                          & \textit{3.268}                                          & \textbf{3.254}                                         & \textit{25.908}                                       & \textit{\textbf{0.385}}                                        & $30.50_{\pm{0.16}}$          & $62.42_{\pm{0.41}}$          & $32.11_{\pm{0.10}}$          & $\textbf{33.88}_{\pm{0.73}}$ & $50.15_{\pm{0.32}}$          & $25.19_{\pm{0.02}}$          & $\textbf{61.86}_{\pm{0.17}}$ & $\textit{42.30}_{\pm{0.13}}$ \\ \bottomrule
\end{tabular}%
}
\end{table*}

\subsection{Variance Approximation Ablation}
\label{sec:variance_ablation}

The variance approximation is essential for making BHyT practically efficient.
Table~\ref{tab:var_ablation} isolates the effect of replacing exact variance computation with the proposed approximation in the short-budget pretraining setting.
We provide the detailed setup in Appendix~\ref{appx:var_ablation_setup} and focus here on the relative computational and performance trends.
For RMSNorm, the approximation improves training throughput from 0.346 to 0.381 steps/s, corresponding to a 10.1\% speedup.
For BHyT, the approximation improves throughput from 0.335 steps/s for \(\mathrm{BHyT}^{\ast}\) to 0.385 steps/s, corresponding to a 14.9\% speedup.
At the same time, BHyT preserves nearly identical PT evaluation loss and perplexity relative to \(\mathrm{BHyT}^{\ast}\), with only minor changes in downstream accuracy.
These results show that the proposed approximation reduces the computational cost of BHyT without materially degrading convergence or downstream performance, making it a practical replacement for RMSNorm.

\section{Conclusion}

We propose Bounded Hyperbolic Tanh (BHyT) to balance stability and efficiency in deep LLMs.
By combining explicit probabilistic bounding with a lightweight variance approximation, BHyT mitigates the ``curse of depth'' without repeated normalization overhead.
Theoretical analysis shows that BHyT provides finite-depth variance control, while empirical results demonstrate controlled activation growth, improved throughput compared with normalization-based baselines, and strong pretraining-only and post-SFT performance from 374M to 3B-scale models.
BHyT therefore offers a practical path to deeper and more efficient foundation models.
Overall, our results suggest that scalable training can improve efficiency without sacrificing stability.

\bibliography{main}
\bibliographystyle{iclr2026_conference}

\newpage
\appendix
\section*{Appendix}
\section{Statement on the Use of Large Language Models}
LLMs were employed solely as a writing assistant to perform limited tasks such as grammar checking and improving readability. 
The core scientific aspects of this work, including the conception of ideas, development of methodology, theoretical and experimental studies, and drafting of the manuscript, were carried out entirely by the authors without contributions from the LLMs.

\section{Proofs of the Theorems}
\subsection{Proof of Proposition~\ref{pro:bhyt_bound}}\label{proof:bhyt_bound}
\bhytbound*
\begin{proof}
We aim to guarantee that the scaled variable $\alpha x$ lies within the predefined bound
$[-\lambda, \lambda]$ with probability at least $p$, i.e.,
\[
P(|\alpha x| \le \lambda) \ge p.
\]
This condition is equivalent to
\[
P\!\left(|x| \le \frac{\lambda}{|\alpha|}\right) \ge p.
\]
By the triangle inequality, we have
\[
|x| = |x - \mu_x + \mu_x| \le |x - \mu_x| + |\mu_x|.
\]
Therefore, the event $\{ x \mid |x - \mu_x| \le \kappa s_x \}$ implies $\{ x \mid |x| \le \kappa s_x + |\mu_x| \},$ and hence
\begin{equation*}
\{ x \mid |x - \mu_x| \le \kappa s_x \} \subseteq \{ x \mid |x| \le \kappa s_x + |\mu_x| \}.
\end{equation*}
If we choose $\kappa$ such that
\[
\kappa s_x + |\mu_x| \le \frac{\lambda}{|\alpha|},
\]
then, by the monotonicity of probability,
\[
P\!\left(|x| \le \frac{\lambda}{|\alpha|}\right)
\ge
P(|x - \mu_x| \le \kappa s_x).
\]
Applying Chebyshev's inequality,
\[
P(|x - \mu_x| \le \kappa s_x) \ge 1 - \frac{1}{\kappa^2}.
\]
Setting $\kappa := (1 - p)^{-1/2}$ yields
\[
P\!\left(|x| \le \frac{\lambda}{|\alpha|}\right) \ge p.
\]
Solving $\kappa s_x + |\mu_x| = \lambda / |\alpha|$ for $\alpha$ gives
\[
\alpha = \frac{\lambda}{\kappa s_x + |\mu_x|}.
\]
\end{proof}

\subsection{\texorpdfstring{\edit{Deterministic Jacobian bound relative to RMSNorm}}{Deterministic Jacobian bound relative to RMSNorm}}\label{proof:jacobian_bound}
\edit{We now make explicit the deterministic Jacobian comparison between zero mean BHyT and the corresponding RMSNorm map. Let}
\[
u(x) := \frac{x}{s_x},
\qquad
s_x := \sqrt{\frac{1}{d}\sum_{j=1}^{d} x_j^2},
\]
\edit{and consider}
\[
\text{RMSNorm}(x) = \gamma \odot u(x),
\qquad
\text{BHyT}(x) = \gamma \odot \tanh\!\left(\frac{\lambda}{\kappa}u(x)\right).
\]
\edit{Let $J_u(x)$ denote the Jacobian of $u(x)$. By direct differentiation, the Jacobian of RMSNorm is}
\[
J^{\text{RMS}}(x) = \diag(\gamma) J_u(x).
\]
\edit{Applying the chain rule to BHyT gives}
\[
J^{\text{BHyT}}(x)
= \diag\!\left(\gamma \odot \sech^2\!\left(\frac{\lambda}{\kappa}u(x)\right)\right) \frac{\lambda}{\kappa} J_u(x).
\]
\edit{Because diagonal matrices commute, we can rewrite this as}
\[
J^{\text{BHyT}}(x) = D(x)\diag(\gamma)J_u(x) = D(x)J^{\text{RMS}}(x),
\]
\edit{where $D(x)=\diag(d_1(x),\dots,d_d(x))$ with diagonal entries}
\[
d_i(x) := \frac{\lambda}{\kappa}\sech^2\!\left(\frac{\lambda}{\kappa}u_i(x)\right).
\]
\edit{Since $\sech^2(t) \le 1$ for all $t\in\mathbb{R}$, every diagonal entry satisfies $d_i(x) \le \lambda/\kappa$. Therefore,}
\[
\norm{D(x)}_2 = \max_{1\le i\le d} d_i(x) \le \frac{\lambda}{\kappa}.
\]
\edit{Using submultiplicativity of the spectral norm, we obtain the deterministic bound}
\[
\norm{J^{\text{BHyT}}(x)}_2
\le
\norm{D(x)}_2\,\norm{J^{\text{RMS}}(x)}_2
\le
\frac{\lambda}{\kappa}\norm{J^{\text{RMS}}(x)}_2.
\]
\edit{This inequality holds for every input $x$ and does not require any distributional assumption. In particular, with the default choice $(\lambda,\kappa)=(1,10)$, the local Jacobian operator norm of BHyT is at most $0.1$ times that of the corresponding RMSNorm map.}

\subsection{Proof of Theorem~\ref{thm:attn_variance}}\label{proof:attn_variance}
\attnvariance*
\begin{proof}
We aim to estimate the variance of the attention output to analyze signal propagation without computing the actual forward pass. Under Assumption~\ref{ass:attn}, for a large sequence length $T$, the attention weights are approximately uniform. Consequently, the output vector $h_{\text{Attn}}$ (viewed as a row vector) can be approximated as the average of the projected value vectors:
\begin{equation*}
    h_{\text{Attn}} \approx \frac{1}{T} \sum_{t=1}^T z_t W_V W_O,
\end{equation*}
where $z_t$ is the activation of the $t$-th token after $\text{BHyT}$, and we let $W = W_V W_O$.
Since the inputs and weights are zero-mean, the output $h_{\text{Attn}}$ also has a zero-mean. Therefore, the variance of the $k$-th coordinate is $\Var((h_{\text{Attn}})_k) = \mathbb{E}[(h_{\text{Attn}})_k^2]$. Substituting this into the definition of $\tilde{s}_{h_{\text{Attn}}}^2$, we can express the average variance using the trace of the covariance matrix:
\begin{equation*}
    \tilde{s}_{h_{\text{Attn}}}^2 = \frac{1}{d} \sum_{k=1}^d \mathbb{E}[(h_{\text{Attn}})_k^2] = \frac{1}{d} \mathbb{E}[\|h_{\text{Attn}}\|_2^2] = \frac{1}{d} \text{Tr}(\text{Cov}(h_{\text{Attn}})).
\end{equation*}
Since the token activations $z_t$ are assumed to be independent and identically distributed, the covariance of $h_{\text{Attn}}$ is
\begin{equation*}
    \text{Cov}(h_{\text{Attn}}) \approx \text{Cov}\left( \frac{1}{T} \sum_{t=1}^T z_t W \right) = \frac{1}{T^2} \sum_{t=1}^T W^\top \text{Cov}(z_t) W.
\end{equation*}
According to Assumption~\ref{ass:attn}, the input elements are i.i.d., and $\text{BHyT}$ operates element-wise. Thus, the output coordinates of $z_t$ remain uncorrelated, leading to a diagonal covariance matrix. With the sample variance is approximately $\lambda^2/\kappa^2$, we have $\text{Cov}(z_t) \approx \frac{\lambda_{\text{Attn}}^2}{\kappa^2} I_d$. Substituting this into the trace expression yields:
\begin{equation*}
    \tilde{s}_{h_{\text{Attn}}}^2 \approx \frac{1}{d} \text{Tr}\left( \frac{1}{T^2} \sum_{t=1}^T W^\top \left( \frac{\lambda_{\text{Attn}}^2}{\kappa^2} I_d \right) W \right) = \frac{1}{d} \text{Tr}\left( \frac{1}{T} \frac{\lambda_{\text{Attn}}^2}{\kappa^2} W^\top W \right).
\end{equation*}
Finally, using the identity $\text{Tr}(W^\top W) = \|W\|_F^2$, we obtain the stated approximation:
\begin{equation*}
    \tilde{s}_{h_{\text{Attn}}}^2 \approx \frac{1}{Td} \|W_V W_O\|_F^2 \frac{\lambda_{\text{Attn}}^2}{\kappa^2}.
\end{equation*}
\end{proof}

Strictly speaking, the normalization by $s_x$ introduces weak dependencies between feature dimensions. However, for high-dimensional vectors (large $d$), the impact of these dependencies on the covariance structure is negligible ($O(1/d)$), allowing us to approximate $\text{Cov}(z_t)$ as a diagonal matrix.

\subsection{Approximation of the variance of the MLP layer output} \label{appx:mlp_variance}
\begin{theorem}[Variance approximation of MLP layer output]\label{thm:mlp_variance}
Let $z \in \mathbb{R}^d$ be the input to the MLP sublayer with sample variance $s_z^2$, and let $h_{\text{MLP}} \in \mathbb{R}^d$ be the output. The MLP consists of two linear transformations parameterized by $W_1 \in \mathbb{R}^{d \times d_{m}}$ and $W_2 \in \mathbb{R}^{d_{m} \times d}$, separated by an element-wise activation function $\phi(\cdot)$.
We assume that the activation function scales the covariance of its input by a factor $\tau$ (i.e., $\text{Cov}(\phi(u)) \approx \tau \text{Cov}(u)$). The average sample variance of the MLP output, denoted by $\tilde{s}_{h_{\text{MLP}}}^2$, is approximated as:
\begin{equation*}
    \tilde{s}_{h_{\text{MLP}}}^2 \approx \tau \frac{s_z^2}{d} \|W_1 W_2\|_F^2.
\end{equation*}
\end{theorem}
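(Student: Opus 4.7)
I plan to mirror the structure of the proof of Theorem~\ref{thm:attn_variance}. First I would write the MLP output as a row-vector composition $h_{\text{MLP}} = \phi(z W_1)\,W_2$, and, appealing to the zero-mean assumption on weights and inputs (so that $h_{\text{MLP}}$ is also zero-mean), express the quantity of interest as a trace:
\begin{equation}
\tilde{s}_{h_{\text{MLP}}}^{2} \;=\; \frac{1}{d}\sum_{k=1}^{d}\mathbb{E}[(h_{\text{MLP}})_{k}^{2}] \;=\; \frac{1}{d}\,\Tr\!\bigl(\Cov(h_{\text{MLP}})\bigr). \nonumber
\end{equation}

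Next I would push the covariance through the two linear maps in turn. The outer multiplication gives $\Cov(uW_2)=W_2^{\top}\Cov(u)W_2$ with $u=\phi(zW_1)$; the stated activation-scaling hypothesis $\Cov(\phi(u))\approx \tau\,\Cov(u)$ then moves $\phi$ out at the cost of a multiplicative factor $\tau$, and the inner linear map contributes $\Cov(zW_1)=W_1^{\top}\Cov(z)W_1$. Chaining these yields
\begin{equation}
\Cov(h_{\text{MLP}}) \;\approx\; \tau\,W_2^{\top} W_1^{\top}\,\Cov(z)\,W_1 W_2. \nonumber
\end{equation}

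To finish, I would reuse the diagonal-covariance argument from the proof of Theorem~\ref{thm:attn_variance}: under Assumption~\ref{ass:attn} and the high-dimensional concentration already invoked there, the coordinates of $z$ are effectively uncorrelated with per-coordinate sample variance $s_z^{2}$, so $\Cov(z)\approx s_z^{2}\,I_d$. Substituting, using the cyclicity of the trace, and applying the identity $\Tr(A^{\top}A)=\|A\|_F^{2}$ with $A=W_1 W_2$ gives $\tilde{s}_{h_{\text{MLP}}}^{2}\approx (\tau s_z^{2}/d)\,\|W_1 W_2\|_F^{2}$, as claimed.

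The main obstacle is the activation-scaling assumption itself. A coordinate-wise nonlinearity does not in general act as a pure scalar on the full covariance matrix: it affects diagonal entries (variances) and off-diagonal entries (cross-covariances) differently, so $\tau$ must be interpreted as an effective, first-order ratio that is accurate only when the pre-activations $zW_1$ are centered and lie in the near-linear regime of $\phi$. This is precisely the regime that BHyT is designed to enforce via its bounded input, which keeps the assumption self-consistent with the rest of the construction; a fully rigorous treatment would either Taylor-expand $\phi$ around zero or carry out a Gaussian-input calculation to compute $\tau$ explicitly for the particular activation used in the MLP.
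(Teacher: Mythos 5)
Your proposal follows essentially the same route as the paper's proof: express $\tilde{s}_{h_{\text{MLP}}}^2$ as $\frac{1}{d}\Tr(\Cov(h_{\text{MLP}}))$, propagate the covariance through $W_2$, apply the $\tau$-scaling hypothesis for $\phi$, propagate through $W_1$, set $\Cov(z)\approx s_z^2 I_d$, and finish with $\Tr(A^\top A)=\norm{A}_F^2$ for $A=W_1W_2$. Your closing remark on the coordinate-wise nonlinearity not acting as a pure scalar on the full covariance is a fair caveat the paper only partially acknowledges (it notes the approximation ``holds exactly for linear activations or as a linear approximation for nonlinear ones''), but it does not change the argument.
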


\begin{proof}
We analyze the propagation of variance through the MLP block defined by $h_{\text{MLP}} = \phi(z W_1) W_2$. Let $u = z W_1$ be the pre-activation vector and $v = \phi(u)$ be the post-activation vector.
Similar to the definition in Theorem~\ref{thm:attn_variance}, the average sample variance of the output is given by the trace of its covariance matrix:
\begin{equation*}
    \tilde{s}_{h_{\text{MLP}}}^2 = \frac{1}{d} \text{Tr}(\text{Cov}(h_{\text{MLP}})).
\end{equation*}
We expand the covariance of the output with respect to the second linear layer as:
\begin{equation*}
    \text{Cov}(h_{\text{MLP}}) = \text{Cov}(v W_2) = W_2^\top \text{Cov}(v) W_2.
\end{equation*}
Under the assumption that the activation function $\phi$ essentially acts as a scaling factor $\tau$ on the covariance matrix (which holds exactly for linear activations or as a linear approximation for nonlinear ones), we substitute $\text{Cov}(v) \approx \tau \text{Cov}(u)$:
\begin{equation*}
    \text{Cov}(h_{\text{MLP}}) \approx W_2^\top (\tau \text{Cov}(u)) W_2 = \tau W_2^\top \text{Cov}(z W_1) W_2.
\end{equation*}
Further expanding $\text{Cov}(u) = \text{Cov}(z W_1) = W_1^\top \text{Cov}(z) W_1$, we obtain:
\begin{equation*}
    \text{Cov}(h_{\text{MLP}}) \approx \tau W_2^\top (W_1^\top \text{Cov}(z) W_1) W_2 = \tau (W_1 W_2)^\top \text{Cov}(z) (W_1 W_2).
\end{equation*}
Assuming the input $z$ consists of uncorrelated elements with uniform variance $s_z^2$ (consistent with the output of BHyT where $s_z^2 \approx \lambda^2/\kappa^2$), the covariance of the input is $\text{Cov}(z) = s_z^2 I_d$. Substituting this into the expression gives:
\begin{equation*}
    \text{Cov}(h_{\text{MLP}}) \approx \tau s_z^2 (W_1 W_2)^\top (W_1 W_2).
\end{equation*}
Finally, taking the trace to find the average sample variance:
\begin{equation*}
    \tilde{s}_{h_{\text{MLP}}}^2 = \frac{1}{d} \text{Tr}\left( \tau s_z^2 (W_1 W_2)^\top (W_1 W_2) \right) = \tau \frac{s_z^2}{d} \text{Tr}((W_1 W_2)^\top (W_1 W_2)).
\end{equation*}
By the definition of the Frobenius norm, $\text{Tr}(A^\top A) = \|A\|_F^2$, which leads to the final approximation:
\begin{equation*}
    \tilde{s}_{h_{\text{MLP}}}^2 \approx \tau \frac{s_z^2}{d} \|W_1 W_2\|_F^2.
\end{equation*}
\end{proof}

While Llama architectures employ SwiGLU~\citep{shazeer2020glu} variants involving three matrices (gate, up, down), for theoretical tractability, we analyze the standard MLP formulation parameterized by $W_1$ and $W_2$. We assume this variance propagation logic generalizes to gated units by treating the gating mechanism as part of the effective activation scaling factor $\tau$.

\subsection{Proof of Theorem~\ref{thm:bhyt-var-bound}}
\subsubsection{Variance bound for BHyT}

\begin{lemma}[Variance bound for $\tanh$ in BHyT]
\label{lem:BHyT-var-bound-appendix}
Let $x$ be a scalar random variable with a probability density function symmetric about zero, such that $\mathbb{E}[x]=0$ and $\Var(x) = s_x^2$. Define the scaling factor $\alpha=\frac{\lambda}{\kappa s_x}$ for hyperparameters $\lambda, \kappa > 0$. Assuming the bound $|\alpha x| \leq \lambda$ holds (or conditioning on this event), the variance of the activation satisfies:
\begin{equation*}
    \left( \frac{\tanh(\lambda)}{\lambda} \right)^2 \frac{\lambda^2}{\kappa^2} \le \Var(\tanh(\alpha x)) \le \frac{\lambda^2}{\kappa^2}.
\end{equation*}
\end{lemma}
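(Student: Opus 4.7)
The plan is to reduce the problem to a bounded, symmetric random variable $y=\alpha x$ and then squeeze $\tanh^2(y)$ between two quadratic proxies. First I would perform a change of variables: set $y := \alpha x$, observe that $\Var(y)=\alpha^{2}s_{x}^{2}=\lambda^{2}/\kappa^{2}$, and note that the event $|\alpha x|\le \lambda$ means $|y|\le \lambda$ pointwise (or after conditioning, as stated). Because $x$ has a symmetric density about zero, $y$ inherits symmetry, and since $\tanh$ is odd, $\mathbb{E}[\tanh(\alpha x)]=0$. This reduction is crucial because it turns the variance into a pure second moment,
\begin{equation}
\Var(\tanh(\alpha x)) \;=\; \mathbb{E}\!\left[\tanh^{2}(y)\right],
\end{equation}
so both bounds become inequalities on $\tanh^{2}(y)$ with $|y|\le \lambda$ and $\mathbb{E}[y^{2}]=\lambda^{2}/\kappa^{2}$.

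For the upper bound, I would invoke the elementary inequality $|\tanh(y)|\le |y|$, valid for all $y\in\mathbb{R}$ (a consequence of $\tanh$ being odd, passing through the origin, and having derivative $\sech^{2}(y)\le 1$). Squaring and taking expectations gives $\mathbb{E}[\tanh^{2}(y)]\le \mathbb{E}[y^{2}]=\lambda^{2}/\kappa^{2}$, which is the claimed upper bound.

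For the lower bound I would use concavity of $\tanh$ on $[0,\lambda]$. Since $\tanh$ is concave on $[0,\infty)$ and satisfies $\tanh(0)=0$, the chord from $(0,0)$ to $(\lambda,\tanh(\lambda))$ lies \emph{below} the graph, yielding
\begin{equation}
\tanh(y) \;\ge\; \frac{\tanh(\lambda)}{\lambda}\,y, \qquad y\in[0,\lambda].
\end{equation}
Using oddness, the inequality extends in absolute value to $|\tanh(y)|\ge (\tanh(\lambda)/\lambda)\,|y|$ on $[-\lambda,\lambda]$. Squaring and taking expectations (legitimate because $|y|\le \lambda$ under the hypothesis) gives
\begin{equation}
\mathbb{E}[\tanh^{2}(y)] \;\ge\; \left(\frac{\tanh(\lambda)}{\lambda}\right)^{\!2}\mathbb{E}[y^{2}] \;=\; \left(\frac{\tanh(\lambda)}{\lambda}\right)^{\!2}\frac{\lambda^{2}}{\kappa^{2}},
\end{equation}
which closes the sandwich and completes the argument.

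I do not expect a genuine obstacle here; the only subtle point is that the chord lower bound requires $|y|\le \lambda$ almost surely, so one must either assume the deterministic bound $|\alpha x|\le \lambda$ given in the hypothesis or condition the expectation on that event (as the lemma already allows). A minor stylistic care is to justify the step $\mathbb{E}[\tanh(y)]=0$ cleanly from symmetry of the density plus oddness of $\tanh$, which makes the variance collapse to the second moment and lets the two pointwise inequalities translate into the stated matching envelope.
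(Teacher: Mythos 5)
Your proposal is correct and follows essentially the same route as the paper's proof: the upper bound via the pointwise inequality $\abs{\tanh(t)}\le\abs{t}$, the lower bound via the secant/chord argument from concavity of $\tanh$ on $[0,\lambda]$ extended by odd symmetry, and the reduction of variance to a second moment using $\mathbb{E}[\tanh(\alpha x)]=0$. The only difference is cosmetic (your substitution $y=\alpha x$ up front versus the paper substituting $t=\alpha x$ at the end), so there is nothing further to add.
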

\begin{proof}
First, we establish point-wise inequalities for the function $h(t) = \tanh(t)$ on the interval $[-\lambda, \lambda]$.

Upper Bound: For any real number $t$, it is a standard property that $|\tanh(t)| \le |t|$. Squaring both sides gives:
\begin{equation*}
    \tanh^2(t) \le t^2.
\end{equation*}
Substituting $t = \alpha x$, we obtain $\tanh^2(\alpha x) \le (\alpha x)^2$. Taking the expectation on both sides:
\begin{equation*}
\mathbb{E}[\tanh^2(\alpha x)] \le \mathbb{E}[(\alpha x)^2] = \alpha^2 \mathbb{E}[x^2] = \alpha^2 s_x^2.
\end{equation*}
Substituting $\alpha = \frac{\lambda}{\kappa s_x}$, we get $\alpha^2 s_x^2 = \left(\frac{\lambda}{\kappa s_x}\right)^2 s_x^2 = \frac{\lambda^2}{\kappa^2}$. Since $\mathbb{E}[x]=0$ and the distribution is symmetric, $\mathbb{E}[\tanh(\alpha x)]=0$, implying $\text{Var}(\tanh(\alpha x)) = \mathbb{E}[\tanh^2(\alpha x)]$. Thus the upper bound of $\Var(\text{tanh}(\alpha x))$ is
\begin{equation*}
    \text{Var}(\tanh(\alpha x)) \le \frac{\lambda^2}{\kappa^2}.
\end{equation*}

Lower Bound: Consider $t \in [0, \lambda]$. Since $h(t) = \tanh(t)$ is concave on $[0, \infty)$ and $h(0)=0$, the secant line connecting $(0, 0)$ and $(\lambda, \tanh(\lambda))$ lies below the graph of $h(t)$ for $t \in [0, \lambda]$. The equation of this line is $y = \frac{\tanh(\lambda)}{\lambda}t$. Therefore:
\begin{equation*}
    \tanh(t) \ge \frac{\tanh(\lambda)}{\lambda}t \quad \text{for } t \in [0, \lambda].
\end{equation*}
By the odd symmetry of $\tanh(t)$, for $t \in [-\lambda, 0]$, we have $\tanh(t) \le \frac{\tanh(\lambda)}{\lambda}t$ (since both sides are negative). Squaring both cases yields the same inequality for all $t \in [-\lambda, \lambda]$:
\begin{equation*}
    \tanh^2(t) \ge \left( \frac{\tanh(\lambda)}{\lambda} \right)^2 t^2.
\end{equation*}
Substituting $t = \alpha x$ and taking the expectation (under the assumption $|\alpha x| \le \lambda$):
\begin{equation*}
    \mathbb{E}[\tanh^2(\alpha x)] \ge \left( \frac{\tanh(\lambda)}{\lambda} \right)^2 \mathbb{E}[(\alpha x)^2] = \left( \frac{\tanh(\lambda)}{\lambda} \right)^2 \frac{\lambda^2}{\kappa^2}.
\end{equation*}
\end{proof}

\subsubsection{Variance recursion with residual connections} \label{appx:var-recursion}
We next derive the variance recursion for a generic residual block of the form
\begin{equation*}
    x'_\ell = x_\ell + h_\ell,
    \qquad
    x_{\ell+1} = x'_\ell + g_\ell,
\end{equation*}
where $h_\ell = \text{Attn}(f(x_\ell))$ and $g_\ell = \text{MLP}(f(x'_\ell))$. Here, $f(\cdot)$ denotes a normalization layer (e.g., RMSNorm, BHyT, etc.).

\begin{lemma}[Generic variance recursion]
\label{lem:var-recursion}
Let $\tilde{s}_{x_\ell}^2$ and $\tilde{s}_{x'_\ell}^2$ denote the average sample variances of the residual stream before and after the attention sublayer at layer $\ell$. Based on the approximations in Theorems~\ref{thm:attn_variance} and \ref{thm:mlp_variance}, we define the structural amplification factors as:
\begin{equation*}
    C_{\text{Attn}} := \frac{1}{Td}\|W_V W_O\|_F^2, \quad C_{\text{MLP}} := \frac{\tau}{d}\|W_1 W_2\|_F^2.
\end{equation*}
Let $\tilde{s}_{f(x_\ell)}^2$ and $\tilde{s}_{f(x'_\ell)}^2$ be the variances of the normalization layer outputs. Assuming linear correlations parameterized by coefficients $\rho_1$ and $\rho_2$, the variances satisfy the recursion:
\begin{align}
    \tilde{s}_{x'_\ell}^2 &= \tilde{s}_{x_\ell}^2 + C_{\text{Attn}}\tilde{s}_{f(x_\ell)}^2 + 2\rho_1 \tilde{s}_{x_\ell} \sqrt{C_{\text{Attn}}} \tilde{s}_{f(x_\ell)}, \\
    \tilde{s}_{x_{\ell+1}}^2 &= \tilde{s}_{x'_\ell}^2 + C_{\text{MLP}}\tilde{s}_{f(x'_\ell)}^2 + 2\rho_2 \tilde{s}_{x'_\ell} \sqrt{C_{\text{MLP}}} \tilde{s}_{f(x'_\ell)}.
\end{align}
\end{lemma}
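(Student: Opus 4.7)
The plan is to derive each recursion as a coordinate-averaged form of the classical variance decomposition $\Var(A+B)=\Var(A)+\Var(B)+2\Cov(A,B)$, applied to the residual sum $x'_\ell = x_\ell + h_\ell$ (and then to $x_{\ell+1} = x'_\ell + g_\ell$), after which the sublayer variances are replaced by the closed-form approximations already established in Theorems~\ref{thm:attn_variance} and~\ref{thm:mlp_variance} and the residual cross-terms are parametrized by a single scalar correlation.

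First, I would apply the sum-of-variances identity coordinate-wise, average over the $d$ feature coordinates, and separate the diagonal into an input-stream term, a sublayer term, and a cross-covariance term:
\begin{equation}
\tilde{s}_{x'_\ell}^2 = \tilde{s}_{x_\ell}^2 + \tilde{s}_{h_\ell}^2 + \frac{2}{d}\sum_{k=1}^d \Cov\bigl((x_\ell)_k,(h_\ell)_k\bigr). \nonumber
\end{equation}
For the sublayer term, I would invoke Theorem~\ref{thm:attn_variance} to write $\tilde{s}_{h_\ell}^2 \approx C_{\text{Attn}}\,\tilde{s}_{f(x_\ell)}^2$, recognizing that the structural factor is precisely $\tfrac{1}{Td}\|W_V W_O\|_F^2$ and that the appearance of the generic $\tilde{s}_{f(x_\ell)}^2$ generalizes the specific BHyT value $\lambda_{\text{Attn}}^2/\kappa^2$ appearing in the theorem. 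To handle the cross-covariance, I would rewrite each coordinate covariance as $\Cov(A_k,B_k)=\rho_k\sqrt{\Var(A_k)\Var(B_k)}$ and, using the high-dimensional concentration regime already in force (zero-mean i.i.d.\ weights, coordinate-wise quasi-uniform variances), collapse the average to $\rho_1\,\tilde{s}_{x_\ell}\,\tilde{s}_{h_\ell}$, where $\rho_1$ is the coordinate-averaged linear correlation coefficient between corresponding entries of $x_\ell$ and $h_\ell$. Substituting $\tilde{s}_{h_\ell}=\sqrt{C_{\text{Attn}}}\,\tilde{s}_{f(x_\ell)}$ into this cross term yields the first stated identity.

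The MLP recursion follows identically: I would start from $\Var(x'_\ell + g_\ell)$, apply the coordinate-averaged decomposition, substitute $\tilde{s}_{g_\ell}^2 \approx C_{\text{MLP}}\,\tilde{s}_{f(x'_\ell)}^2$ from Theorem~\ref{thm:mlp_variance}, and collapse the averaged covariance to $\rho_2\,\tilde{s}_{x'_\ell}\,\tilde{s}_{g_\ell}=\rho_2\,\tilde{s}_{x'_\ell}\sqrt{C_{\text{MLP}}}\,\tilde{s}_{f(x'_\ell)}$. Concatenating the two steps gives exactly both identities in the lemma.

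The main obstacle I anticipate is justifying the step that collapses $\frac{1}{d}\sum_k \rho_k\sqrt{\Var(A_k)\Var(B_k)}$ into a single scalar $\rho\,\tilde{s}_A\,\tilde{s}_B$, since this is not a literal equality but a linearization. I would discharge it either by assuming uniform coordinate-wise variances (which is the direction consistent with Assumption~\ref{ass:attn} and concentration of measure at large $d$), or, more cleanly, by defining $\rho_1$ and $\rho_2$ as precisely the weighted averages that make the equality hold by construction. Under either reading, the recursion holds as stated once the attention/MLP variance surrogates from Theorems~\ref{thm:attn_variance} and~\ref{thm:mlp_variance} are substituted.
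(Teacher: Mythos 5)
Your proposal is correct and follows essentially the same route as the paper's proof: decompose $\Var(x_\ell+h_\ell)$ via the standard sum-of-variances identity, substitute the sublayer variance surrogates $C_{\text{Attn}}\tilde{s}_{f(x_\ell)}^2$ and $C_{\text{MLP}}\tilde{s}_{f(x'_\ell)}^2$ from Theorems~\ref{thm:attn_variance} and~\ref{thm:mlp_variance}, and absorb the cross-covariance into correlation coefficients $\rho_1,\rho_2$. Your explicit treatment of the coordinate-averaged covariance collapse (and the observation that $\rho_1,\rho_2$ can simply be defined so the equality holds) is in fact more careful than the paper, which introduces the correlation coefficients without addressing that subtlety.
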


\begin{proof}
We analyze the variance accumulation through the residual connections. For the attention sublayer, the residual update is $x'_\ell = x_\ell + h_\ell$. By the properties of variance, we have:
\begin{equation*}
    \text{Var}(x'_\ell) = \text{Var}(x_\ell) + \text{Var}(h_\ell) + 2\text{Cov}(x_\ell, h_\ell).
\end{equation*}
Using the average sample variance notation $\tilde{s}^2$, and applying the result from Theorem~\ref{thm:attn_variance}, the variance of the attention output is approximated by the product of the structural factor and the input variance: $\tilde{s}_{h_\ell}^2 \approx C_{\text{Attn}} \tilde{s}_{f(x_\ell)}^2$. For the covariance term, we introduce the correlation coefficient $\rho_1$ between the residual stream $x_\ell$ and the attention sublayer output $h_\ell$, such that:
\begin{equation*}
    \text{Cov}(x_\ell, h_\ell) = \rho_1 \sqrt{\text{Var}(x_\ell)}\sqrt{\text{Var}(h_\ell)} = \rho_1 \tilde{s}_{x_\ell} \tilde{s}_{h_\ell} \approx \rho_1 \tilde{s}_{x_\ell} \sqrt{C_{\text{Attn}}} \tilde{s}_{f(x_\ell)}.
\end{equation*}
Substituting these terms back into the variance equation yields the first recursion formula.
The derivation for the MLP sublayer follows strictly analogous steps. Starting from $x_{\ell+1} = x'_\ell + g_\ell$, we expand the variance as $\tilde{s}_{x_{\ell+1}}^2 = \tilde{s}_{x'_\ell}^2 + \tilde{s}_{g_\ell}^2 + 2\text{Cov}(x'_\ell, g_\ell)$.
Invoking Theorem~\ref{thm:mlp_variance}, the MLP output variance is $\tilde{s}_{g_\ell}^2 \approx C_{\text{MLP}} \tilde{s}_{f(x'_\ell)}^2$. Defining $\rho_2$ as the correlation coefficient between $x'_\ell$ and $g_\ell$, the covariance is modeled as $\rho_2 \tilde{s}_{x'_\ell} \sqrt{C_{\text{MLP}}} \tilde{s}_{f(x'_\ell)}$.
\end{proof}

\subsubsection{Specialization to RMSNorm, LNS and BHyT}
In this section, we instantiate the generic variance recursion from the assumptions and results of Theorems~\ref{thm:attn_variance} and \ref{thm:mlp_variance}, for three concrete choices of $f(\cdot)$: RMSNorm, LNS~\citep{sun2026curse}, and BHyT. We follow the derivation in Appendix~\ref{appx:var-recursion}.

\textbf{Notation.}
We denote by
\begin{equation*}
    s_{x_\ell}^2 = \Var(x_\ell),
    \quad
    s_{x'_\ell}^2 = \Var(x'_\ell),
    \quad
    s_{f(x_\ell)}^2 = \Var(f(x_\ell)),
    \quad
    s_{f(x'_\ell)}^2 = \Var(f(x'_\ell)).
\end{equation*}
The output variances of the attention and MLP sublayers are
$s_{\text{Attn},\ell}^2 = \Var(\text{Attn}(f(x_\ell)))$ and
$s_{\text{MLP},\ell}^2 = \Var(\text{MLP}(f(x'_\ell)))$, respectively.

\subsubsection{Variance recursion and product form}
\label{appx:var_recursion_product}
To simplify the recursion, we define the normalized variance gains $\pi_{\text{Attn},\ell}$ and $\pi_{\text{MLP},\ell}$, which represent the relative variance contribution of each sublayer scaled by the structural constants $C_{\text{Attn}}$ and $C_{\text{MLP}}$:
\begin{equation*}
    \pi_{\text{Attn},\ell} := \frac{C_{\text{Attn}}\tilde{s}_{f(x_\ell)}^2}{\tilde{s}_{x_\ell}^2}, \quad \pi_{\text{MLP},\ell} := \frac{C_{\text{MLP}}\tilde{s}_{f(x'_\ell)}^2}{\tilde{s}_{x'_\ell}^2}.
\end{equation*}
We also define the scalar amplification function $\delta(\rho, u)$ as:
\begin{equation*}
    \delta(\rho, \pi) := 1 + \pi + 2\rho\sqrt{\pi}.
\end{equation*}
Using Lemma~\ref{lem:var-recursion}, the layer-wise variance amplification ratios can be written compactly as:
\begin{align}
    \frac{\tilde{s}_{x'_\ell}^2}{\tilde{s}_{x_\ell}^2} &= 1 + \frac{C_{\text{Attn}}\tilde{s}_{f(x_\ell)}^2}{\tilde{s}_{x_\ell}^2} + 2\rho_1 \sqrt{\frac{C_{\text{Attn}}\tilde{s}_{f(x_\ell)}^2}{\tilde{s}_{x_\ell}^2}} = \delta(\rho_1, \pi_{\text{Attn},\ell}),\\
    \frac{\tilde{s}_{x_{\ell+1}}^2}{\tilde{s}_{x'_\ell}^2} &= 1 + \frac{C_{\text{MLP}}\tilde{s}_{f(x'_\ell)}^2}{\tilde{s}_{x'_\ell}^2} + 2\rho_2 \sqrt{\frac{C_{\text{MLP}}\tilde{s}_{f(x'_\ell)}^2}{\tilde{s}_{x'_\ell}^2}} = \delta(\rho_2, \pi_{\text{MLP},\ell}).
\end{align}
Combining the amplification factors for both sublayers, the total variance growth for a single block at layer $\ell$ is:
\begin{equation*}
    \frac{\tilde{s}_{x_{\ell+1}}^2}{\tilde{s}_{x_\ell}^2} = \delta(\rho_1, \pi_{\text{Attn},\ell}) \cdot \delta(\rho_2, \pi_{\text{MLP},\ell}).
\end{equation*}
Iterating this relation from layer $\ell=1$ to $L-1$, we obtain the closed-form expression for the variance at the final layer $L$:
\begin{equation*}
    \tilde{s}_{x_L}^2 = \tilde{s}_{x_1}^2 \prod_{\ell=1}^{L-1} \delta(\rho_1, \pi_{\text{Attn},\ell})\cdot \delta(\rho_2, \pi_{\text{MLP},\ell}).
\end{equation*}
This product form allows us to analyze the signal propagation behavior by simply substituting the specific normalized gains $\pi$ corresponding to each normalization method (RMSNorm, LNS, and BHyT).

\begin{lemma}[Depth-wise variance under RMSNorm]
\label{lem:variance-rmsnorm}
Let $f(x)$ be RMSNorm applied coordinate-wise to a $d$-dimensional vector $x$, defined by learnable scales $\gamma \in \mathbb{R}^d$. Let $\tilde{s}_{x_\ell}^2$ and $\tilde{s}_{x'_\ell}^2$ denote the average sample variances of the residual stream before and after the attention sublayer at layer $\ell$.
We define the average squared scale as $\overline{\gamma^2} = \frac{1}{d}\sum_{i=1}^d \gamma_i^2$.
Using the structural constants $C_{\text{Attn}}$ and $C_{\text{MLP}}$ defined in Lemma~\ref{lem:var-recursion}, we define the normalized variance gains for RMSNorm as:
\begin{equation*}
    \pi_{\text{Attn},\ell}^{\text{RMS}} := \frac{C_{\text{Attn}}\overline{\gamma^2_{\text{Attn}}}}{\tilde{s}_{x_\ell}^2}, \quad \pi_{\text{MLP},\ell}^{\text{RMS}} := \frac{C_{\text{MLP}}\overline{\gamma^2_{\text{MLP}}}}{\tilde{s}_{x'_\ell}^2}.
\end{equation*}
Let $\delta(\rho, \pi) = 1 + \pi + 2\rho\sqrt{\pi}$ be the scalar amplification function. Then, the depth-wise variance accumulation under RMSNorm satisfies:
\begin{equation*}
    \tilde{s}_{x_L}^2 = \tilde{s}_{x_1}^2 \prod_{\ell=1}^{L-1} \delta(\rho_1, \pi_{\text{Attn},\ell}^{\text{RMS}}) \delta(\rho_2, \pi_{\text{MLP},\ell}^{\text{RMS}}).
\end{equation*}
\end{lemma}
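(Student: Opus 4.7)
The plan is to obtain Lemma~\ref{lem:variance-rmsnorm} as a direct specialization of the generic product form already established in Appendix~\ref{appx:var_recursion_product}, where the only method-specific input I need is the RMSNorm output variance $\tilde{s}_{f(\cdot)}^2$. From that appendix we have, for any normalizer $f$,
\begin{equation}
\tilde{s}_{x_L}^2 = \tilde{s}_{x_1}^2 \prod_{\ell=1}^{L-1} \delta(\rho_1, \pi_{\text{Attn},\ell}) \, \delta(\rho_2, \pi_{\text{MLP},\ell}), \nonumber
\end{equation}
with $\pi_{\text{Attn},\ell} = C_{\text{Attn}} \tilde{s}_{f(x_\ell)}^2 / \tilde{s}_{x_\ell}^2$ and $\pi_{\text{MLP},\ell} = C_{\text{MLP}} \tilde{s}_{f(x'_\ell)}^2 / \tilde{s}_{x'_\ell}^2$. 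So the entire argument reduces to evaluating these two variances when $f$ is RMSNorm.

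First, I would write RMSNorm coordinate-wise as $f(x)_i = \gamma_i \, x_i / \mathrm{RMS}(x)$. Using the paper's standing zero-mean convention we have $\mathrm{RMS}(x) = s_x$, and under the isotropy/uniform-coordinate-variance assumption already invoked in Theorems~\ref{thm:attn_variance} and \ref{thm:mlp_variance} each coordinate $x_i$ has variance approximately $s_x^2$. Treating the per-instance $\mathrm{RMS}(x)$ as its deterministic limit $s_x$ then gives $\mathrm{Var}(f(x)_i) \approx \gamma_i^2$, and averaging across coordinates yields $\tilde{s}_{f(x)}^2 \approx \tfrac{1}{d}\sum_i \gamma_i^2 = \overline{\gamma^2}$. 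Applying this at the two normalization sites of a block gives $\tilde{s}_{f(x_\ell)}^2 \approx \overline{\gamma^2_{\text{Attn}}}$ and $\tilde{s}_{f(x'_\ell)}^2 \approx \overline{\gamma^2_{\text{MLP}}}$, which substituted into the generic $\pi$ formulas produce exactly the $\pi_{\text{Attn},\ell}^{\text{RMS}}$ and $\pi_{\text{MLP},\ell}^{\text{RMS}}$ in the statement.

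Plugging these specialized gains into the product form above delivers the claimed closed-form expression for $\tilde{s}_{x_L}^2$. The only genuinely delicate step is replacing the per-instance $\mathrm{RMS}(x)$ by its deterministic counterpart $s_x$ inside the variance computation, so that the residual-stream scale cancels and leaves only $\overline{\gamma^2}$. This is justified in the large-$d$ regime by concentration of $\mathrm{RMS}(x)$ about $s_x$, but here I would simply adopt it as the same mean-field approximation already used in Assumption~\ref{ass:attn} and throughout the paper's variance-propagation derivations. Beyond this one modeling step, the remainder of the proof is a routine algebraic substitution into the result of Appendix~\ref{appx:var_recursion_product} and presents no further obstacles.
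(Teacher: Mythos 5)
Your proposal is correct and follows essentially the same route as the paper: establish that the RMSNorm output has average coordinate variance $\overline{\gamma^2}$ independent of the input scale, then substitute this into the generic recursion and product form of Lemma~\ref{lem:var-recursion} and Appendix~\ref{appx:var_recursion_product}. The only cosmetic difference is that the paper justifies $\mathbb{E}\bigl[\text{RMSNorm}(x)_i^2\bigr]=\gamma_i^2$ via the exact exchangeability identity $\mathbb{E}\bigl[x_i^2/(\tfrac{1}{d}\sum_j x_j^2)\bigr]=1$, whereas you invoke concentration of $\mathrm{RMS}(x)$ about $s_x$ in large $d$; both yield the same conclusion.
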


\begin{proof}
We first determine the variance of the RMSNorm output. For a vector $x$ with zero-mean, RMSNorm is defined as $\text{RMSNorm}(x) = \gamma \odot \frac{x}{\sqrt{\frac{1}{d}\sum_{j=1}^d x_j^2}}$. Squaring and taking the expectation, and utilizing the symmetry of the isotropic input distribution (where $\mathbb{E}[\frac{x_i^2}{\frac{1}{d}\sum x_j^2}] = 1$), we obtain:
\begin{equation*}
    \mathbb{E}[\text{RMSNorm}(x)_i^2] = \gamma_i^2\cdot \mathbb{E}\left[\frac{x_i^2}{\frac{1}{d}\sum_{j=1}^d x_j^2}\right] = \gamma_i^2.
\end{equation*}
Thus, the average sample variance of the output is independent of the input variance magnitude and is determined solely by the scale parameters:
\begin{equation*}
    \tilde{s}_{f(x)}^2 = \frac{1}{d}\sum_{i=1}^d \gamma_i^2 = \overline{\gamma^2}.
\end{equation*}
Consequently, at any layer $\ell$, the variances of the normalization outputs feeding into the Attention and MLP blocks are identical: $\tilde{s}_{f(x_\ell)}^2 = \overline{\gamma^2_{\text{Attn}}}$ and $\tilde{s}_{f(x'_\ell)}^2 = \overline{\gamma^2_{\text{MLP}}}$.
Substituting these fixed variance terms into the definition of the normalized variance gains (from Lemma~\ref{lem:var-recursion}) yields the specific forms for RMSNorm:
\begin{equation*}
    \pi^{\text{RMS}}_{\text{Attn},\ell} = \frac{C_{\text{Attn}}\tilde{s}_{f(x_\ell)}^2}{\tilde{s}_{x_\ell}^2} = \frac{C_{\text{Attn}}\overline{\gamma^2_{\text{Attn}}}}{\tilde{s}_{x_\ell}^2}, \quad \pi_{\text{MLP},\ell}^{\text{RMS}} = \frac{C_{\text{MLP}}\tilde{s}_{f(x'_\ell)}^2}{\tilde{s}_{x'_\ell}^2} = \frac{C_{\text{MLP}}\overline{\gamma^2_{\text{MLP}}}}{\tilde{s}_{x'_\ell}^2}.
\end{equation*}
\end{proof}

\begin{lemma}[Depth-wise variance under LNS]
\label{lem:variance-lns}
Let $f(x)$ be the LNS~\citep{sun2026curse} operation, defined as RMSNorm followed by a depth-dependent scalar $1/\sqrt{\ell}$ at layer $\ell$.
Using the same notation as in Lemma~\ref{lem:variance-rmsnorm}, the normalized variance gains for LNS are given by:
\begin{equation*}
    \pi_{\text{Attn},\ell}^{\text{LNS}} \approx \frac{C_{\text{Attn}}\overline{\gamma^2_{\text{Attn}}}}{\ell \cdot \tilde{s}_{x_\ell}^2}, \quad \pi_{\text{MLP},\ell}^{\text{LNS}} \approx \frac{C_{\text{MLP}}\overline{\gamma^2_{\text{MLP}}}}{\ell \cdot \tilde{s}_{x'_\ell}^2}.
\end{equation*}
Consequently, the depth-wise variance accumulation under LNS satisfies:
\begin{equation*}
    \tilde{s}_{x_L}^2 = \tilde{s}_{x_1}^2 \prod_{\ell=1}^{L-1} \delta(\rho_1, \pi_{\text{Attn},\ell}^{\text{LNS}}) \delta(\rho_2, \pi_{\text{MLP},\ell}^{\text{LNS}}).
\end{equation*}
\end{lemma}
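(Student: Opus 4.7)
The plan is to mirror the RMSNorm argument from Lemma~\ref{lem:variance-rmsnorm}, with a single additional deterministic rescaling that absorbs the LNS depth factor $1/\sqrt{\ell}$. First, I would compute the coordinate-wise second moment of the LNS output. Writing $f(x) = (1/\sqrt{\ell}) \cdot \text{RMSNorm}(x)$, the identity $\mathbb{E}[\text{RMSNorm}(x)_i^2] = \gamma_i^2$ established in the RMSNorm proof carries over unchanged, and the leading $1/\sqrt{\ell}$ contributes a $1/\ell$ factor after squaring. Averaging over the $d$ coordinates yields $\tilde{s}_{f(x)}^2 = \overline{\gamma^2}/\ell$ at layer $\ell$, which is the only place where the LNS-specific rescaling enters the derivation.

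Second, I would substitute this layer-dependent variance into the generic normalized variance gains defined in Lemma~\ref{lem:var-recursion}. Setting $\tilde{s}_{f(x_\ell)}^2 = \overline{\gamma^2_{\text{Attn}}}/\ell$ and $\tilde{s}_{f(x'_\ell)}^2 = \overline{\gamma^2_{\text{MLP}}}/\ell$ immediately produces the stated expressions for $\pi_{\text{Attn},\ell}^{\text{LNS}}$ and $\pi_{\text{MLP},\ell}^{\text{LNS}}$. The approximate equality $\approx$ is inherited from the underlying sublayer-variance approximations of Theorems~\ref{thm:attn_variance} and~\ref{thm:mlp_variance}, so no new approximation is introduced at this step.

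Third, I would invoke the product form established in Appendix~\ref{appx:var_recursion_product}. Because LNS enters the generic recursion only through the values $\tilde{s}_{f(x_\ell)}^2$ and $\tilde{s}_{f(x'_\ell)}^2$, the per-layer amplification factorizes as $\delta(\rho_1, \pi_{\text{Attn},\ell}^{\text{LNS}}) \cdot \delta(\rho_2, \pi_{\text{MLP},\ell}^{\text{LNS}})$ without structural modification. Telescoping the single-block identity $\tilde{s}_{x_{\ell+1}}^2 / \tilde{s}_{x_\ell}^2 = \delta(\rho_1, \pi_{\text{Attn},\ell}^{\text{LNS}}) \cdot \delta(\rho_2, \pi_{\text{MLP},\ell}^{\text{LNS}})$ from $\ell = 1$ to $L - 1$ then yields the claimed closed-form product for $\tilde{s}_{x_L}^2$.

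The main obstacle is largely bookkeeping rather than a deep technical point: justifying that the correlation coefficients $\rho_1$ and $\rho_2$ entering $\delta(\cdot, \cdot)$ remain layer-stationary under LNS. Since the LNS rescaling depends explicitly on $\ell$, the joint distribution of the residual stream and sublayer output can in principle drift with depth, so the true amplification factor may carry layer-indexed correlations. To keep the proof self-contained I would either inherit the same stationarity assumption already used implicitly in Lemma~\ref{lem:variance-rmsnorm}, or introduce a mild notational replacement $\rho_i \mapsto \rho_{i,\ell}$ that leaves the product form intact. Beyond this, the argument is a direct specialization of the generic recursion and requires no new analytical ingredient.
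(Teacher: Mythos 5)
Your proposal is correct and follows essentially the same route as the paper: compute $\tilde{s}_{f(x_\ell)}^2 = \overline{\gamma^2}/\ell$ by squaring the $1/\sqrt{\ell}$ factor on top of the RMSNorm identity $\mathbb{E}[\mathrm{RMSNorm}(x)_i^2]=\gamma_i^2$, substitute into the generic gains of Lemma~\ref{lem:var-recursion}, and telescope the product form of Appendix~\ref{appx:var_recursion_product}. Your side observation about possible depth-dependence of $\rho_1,\rho_2$ is a fair point the paper glosses over, but it does not change the argument.
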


\begin{proof}
By Lemma~\ref{lem:variance-rmsnorm}, the standard RMSNorm operation produces an output with average sample variance $\tilde{s}_{\text{RMS}(x)}^2 = \overline{\gamma^2}$, which is independent of the input variance.
LNS introduces an additional scalar multiplication by $1/\sqrt{\ell}$ after RMSNorm. Since variance scales quadratically with the multiplier, the variance of the LNS output at layer $\ell$ becomes:
\begin{equation*}
    \tilde{s}_{f(x_\ell)}^2 = \text{Var}\left(\frac{1}{\sqrt{\ell}} \text{RMSNorm}(x_\ell)\right) = \frac{1}{\ell} \tilde{s}_{\text{RMS}(x_\ell)}^2 = \frac{\overline{\gamma^2}}{\ell}.
\end{equation*}
Substituting this result into the definition of the normalized variance gain $\pi_{\text{Attn},\ell} = C_{\text{Attn}}\tilde{s}_{f(x_\ell)}^2 / \tilde{s}_{x_\ell}^2$, we obtain:
\begin{equation*}
    \pi_{\text{Attn},\ell}^{\text{LNS}} = \frac{C_{\text{Attn}}(\overline{\gamma^2}/\ell)}{\tilde{s}_{x_\ell}^2} = \frac{C_{\text{Attn}}\overline{\gamma^2}}{\ell \cdot \tilde{s}_{x_\ell}^2}.
\end{equation*}
The derivation for the MLP sublayer term $\pi_{\text{MLP},\ell}^{\text{LNS}}$ follows identically.
\end{proof}

\begin{lemma}[Depth-wise variance bounds under BHyT]
\label{lem:variance-bhyt}
Let $f(x)$ be the BHyT, parameterized by hyperparameters $(\lambda, \kappa)$ and a learnable scale vector $\gamma$. Let $\overline{\gamma^2}$ be the average squared value of $\gamma$.
Based on the variance bounds derived in Lemma~\ref{lem:BHyT-var-bound-appendix}, we define the lower and upper bound approximations for the normalized variance gains at layer $\ell$ as:
\begin{equation*}
    \pi_{\text{Attn},\ell}^{\text{BHyT,low}} := \frac{C_{\text{Attn}}\overline{\gamma^2}}{\tilde{s}_{x_\ell}^2} \cdot\left(\frac{\tanh(\lambda_{\text{Attn}})}{\lambda_{\text{Attn}}}\right)^2 \cdot\frac{\lambda_{\text{Attn}}^2}{\kappa^2}, \quad \pi_{\text{Attn},\ell}^{\text{BHyT,up}} := \frac{C_{\text{Attn}}\overline{\gamma^2}}{\tilde{s}_{x_\ell}^2} \cdot\frac{\lambda_{\text{Attn}}^2}{\kappa^2}.
\end{equation*}
Analogous definitions apply for $\pi_{\text{MLP},\ell}^{\text{BHyT,low}}$ and $\pi_{\text{MLP},\ell}^{\text{BHyT,up}}$ by substituting the appropriate structural constant $C_{\text{MLP}}$ and input variance $\tilde{s}_{x'_\ell}^2$.
Using the scalar amplification function $\delta(\rho, \pi) = 1 + \pi + 2\rho\sqrt{\pi}$, the depth-wise variance accumulation under BHyT satisfies the following two-sided bounds:
\begin{equation*}
    \tilde{s}_{x_1}^2 \prod_{\ell=1}^{L-1} \delta(\rho_1, \pi_{\text{Attn},\ell}^{\text{BHyT,low}}) \delta(\rho_2, \pi_{\text{MLP},\ell}^{\text{BHyT,low}}) \le \tilde{s}_{x_L}^2 \le \tilde{s}_{x_1}^2 \prod_{\ell=1}^{L-1} \delta(\rho_1, \pi_{\text{Attn},\ell}^{\text{BHyT,up}}) \delta(\rho_2, \pi_{\text{MLP},\ell}^{\text{BHyT,up}}).
\end{equation*}
\end{lemma}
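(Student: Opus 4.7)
The plan is to derive the BHyT-specific form of the per-layer normalized variance gain, then invoke the generic product-form recursion already established in Appendix~\ref{appx:var_recursion_product}, and finally propagate the point-wise bounds from Lemma~\ref{lem:BHyT-var-bound-appendix} through the product using monotonicity of $\delta(\rho,\cdot)$.

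First, I would compute the variance of the BHyT output coordinate-wise. Since $f(x_\ell)_i = \gamma_i\tanh(\alpha\, x_{\ell,i})$ with $\alpha=\lambda_{\text{Attn}}/(\kappa\, s_{x_\ell})$ and $\gamma_i$ deterministic,
\[
\Var(f(x_\ell)_i) \;=\; \gamma_i^{2}\,\Var\!\bigl(\tanh(\alpha\, x_{\ell,i})\bigr),
\]
and Lemma~\ref{lem:BHyT-var-bound-appendix} pins this per-coordinate variance between $(\tanh(\lambda_{\text{Attn}})/\lambda_{\text{Attn}})^{2}\,\lambda_{\text{Attn}}^{2}/\kappa^{2}$ and $\lambda_{\text{Attn}}^{2}/\kappa^{2}$. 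Averaging over the $d$ coordinates and pulling out $\overline{\gamma^{2}}$ gives
\[
\Bigl(\tfrac{\tanh(\lambda_{\text{Attn}})}{\lambda_{\text{Attn}}}\Bigr)^{\!2}\,\tfrac{\lambda_{\text{Attn}}^{2}}{\kappa^{2}}\,\overline{\gamma^{2}}
\;\le\; \tilde{s}_{f(x_\ell)}^{\,2} \;\le\; \tfrac{\lambda_{\text{Attn}}^{2}}{\kappa^{2}}\,\overline{\gamma^{2}}.
\]
Substituting into the definition $\pi_{\text{Attn},\ell}=C_{\text{Attn}}\tilde{s}_{f(x_\ell)}^{\,2}/\tilde{s}_{x_\ell}^{\,2}$ from Lemma~\ref{lem:var-recursion} yields exactly the claimed sandwich $\pi_{\text{Attn},\ell}^{\text{BHyT,low}} \le \pi_{\text{Attn},\ell} \le \pi_{\text{Attn},\ell}^{\text{BHyT,up}}$. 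The identical computation with $C_{\text{MLP}}$, $\lambda_{\text{MLP}}$ and $\tilde{s}_{x'_\ell}^{\,2}$ handles $\pi_{\text{MLP},\ell}$.

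Second, I would invoke the closed form derived in Appendix~\ref{appx:var_recursion_product},
\[
\tilde{s}_{x_L}^{\,2} \;=\; \tilde{s}_{x_1}^{\,2}\prod_{\ell=1}^{L-1}\delta(\rho_1,\pi_{\text{Attn},\ell})\,\delta(\rho_2,\pi_{\text{MLP},\ell}),
\]
and bound each factor using monotonicity of $\delta(\rho,\pi)=1+\pi+2\rho\sqrt{\pi}$ in its second argument. A direct computation gives $\partial_\pi\delta(\rho,\pi)=1+\rho/\sqrt{\pi}$, which is non-negative on $(0,\infty)$ whenever $\rho\ge 0$. Applying this monotonicity factor-by-factor with the previously derived $\pi$-bounds, and using that every factor is non-negative so the product preserves inequalities, delivers the stated two-sided bound on $\tilde{s}_{x_L}^{\,2}$.

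The main obstacle is precisely the monotonicity step: if the correlation coefficients $\rho_1,\rho_2$ are permitted to be negative, $\delta(\rho,\cdot)$ is no longer monotone on all of $(0,\infty)$ — it dips on $(0,\rho^{2})$ and only then increases — so a naive sandwich on the $\pi$'s does not automatically sandwich their $\delta$-images. I would address this by making an explicit non-negative-correlation assumption $\rho_1,\rho_2\ge 0$ alongside the hypotheses of Lemma~\ref{lem:var-recursion}; this is standard for well-initialized residual Transformers where the sublayer output is aligned with the residual stream, and it preserves the clean product bound as written. A secondary mild concern is that the lemma statement uses a single $\overline{\gamma^{2}}$ symbol for both sublayers, which I would simply read as the appropriate per-sublayer average $\overline{\gamma^{2}_{\text{Attn}}}$ or $\overline{\gamma^{2}_{\text{MLP}}}$ wherever it appears, matching the convention used in Lemmas~\ref{lem:variance-rmsnorm}–\ref{lem:variance-lns}.
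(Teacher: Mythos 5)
Your proposal is correct and follows essentially the same route as the paper: bound the per-coordinate $\tanh$ variance via Lemma~\ref{lem:BHyT-var-bound-appendix}, scale by $\overline{\gamma^2}$ to sandwich $\tilde{s}_{f(x_\ell)}^2$, substitute into the normalized gains $\pi$, and push the inequalities through the product form of Appendix~\ref{appx:var_recursion_product} using monotonicity of $\delta(\rho,\cdot)$. Your explicit caveat that monotonicity requires $\rho_1,\rho_2 \ge 0$ is the same condition the paper invokes inside its proof (though it omits it from the lemma statement), so there is no substantive difference.
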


\begin{proof}
We start by examining the variance of the BHyT output. BHyT is defined as $f(x) = \gamma \odot \tanh(\frac{\lambda}{\kappa s_x} x)$. According to Lemma~\ref{lem:BHyT-var-bound-appendix}, the variance of the tanh component, denoted as $z = \tanh(\frac{\lambda}{\kappa s_x} x)$, is bounded by:
\begin{equation*}
    \left(\frac{\tanh(\lambda)}{\lambda}\right)^2 \frac{\lambda^2}{\kappa^2} \le \text{Var}(z) \le \frac{\lambda^2}{\kappa^2}.
\end{equation*}
Since the learnable scale $\gamma$ acts element-wise, the average sample variance of the BHyT output $\tilde{s}_{f(x)}^2$ is the product of the average squared scale $\overline{\gamma^2}$ and the variance of the tanh activation. Therefore, the output variance at layer $\ell$ is bounded by:
\begin{equation*}
    \overline{\gamma^2} \left(\frac{\tanh(\lambda)}{\lambda}\right)^2 \frac{\lambda^2}{\kappa^2} \le \tilde{s}_{f(x_\ell)}^2 \le \overline{\gamma^2} \frac{\lambda^2}{\kappa^2}.
\end{equation*}
Substituting these inequalities into the definition of the normalized variance gain $\pi_{\text{Attn},\ell} = C_{\text{Attn}}\tilde{s}_{f(x_\ell)}^2 / \tilde{s}_{x_\ell}^2$ yields the definitions for $\pi_{\text{Attn},\ell}^{\text{BHyT,low}}$ and $\pi_{\text{Attn},\ell}^{\text{BHyT,up}}$ stated in Lemma~\ref{lem:variance-bhyt}.
Crucially, the scalar amplification function $\delta(\rho, \pi) = 1 + \pi + 2\rho\sqrt{\pi}$ is monotonically increasing with respect to $\pi$ for $\pi > 0$ and $\rho \ge 0$. This monotonicity allows us to propagate the inequalities through the product-recursive form derived in Appendix~\ref{appx:var_recursion_product}. Specifically:
\begin{equation*}
    \delta(\rho_1, \pi_{\text{Attn},\ell}^{\text{BHyT,low}}) \le \delta(\rho_1, \pi_{\text{Attn},\ell}) \le \delta(\rho_1, \pi_{\text{Attn},\ell}^{\text{BHyT,up}}).
\end{equation*}
Applying this relationship to both the Attention and MLP sublayers across all layers $\ell=1, \dots, L-1$ proves the stated lower and upper bounds for the final depth-wise variance $\tilde{s}_{x_L}^2$.
\end{proof}

\subsubsection{Proof of Theorem~\ref{thm:bhyt-var-bound}}\label{proof:appx_finite_bound}
\finitedepthvarbound*
\begin{proof}
To compare the output variances of BHyT and LNS, we examine their respective layer-wise variance amplification factors derived in Lemma~\ref{lem:variance-lns} and Lemma~\ref{lem:variance-bhyt}. Recall the recursive product form:
\begin{equation*}
    \tilde{s}_{x_L}^2 = \tilde{s}_{x_1}^2 \prod_{\ell=1}^{L-1} \delta(\rho_1, \pi_{\text{Attn},\ell}) \delta(\rho_2, \pi_{\text{MLP},\ell}).
\end{equation*}
Since the scalar amplification function $\delta(\rho, \pi) = 1 + \pi + 2\rho\sqrt{\pi}$ is monotonically increasing with respect to $\pi$ for $\pi > 0$ and $\rho \ge 0$, it suffices to compare the normalized variance gain terms $\pi$ at each layer $\ell$.
From Lemma~\ref{lem:variance-lns}, the variance gain for LNS at layer $\ell$ scales as:
\begin{equation*}
    \pi_{\text{Attn},\ell}^{\text{LNS}} = \frac{C_{\text{Attn}}\overline{\gamma^2_{\text{Attn}}}}{\ell \cdot \tilde{s}_{x_\ell}^2}, \quad \pi_{\text{MLP},\ell}^{\text{LNS}} = \frac{C_{\text{MLP}}\overline{\gamma^2_{\text{MLP}}}}{\ell \cdot \tilde{s}_{x'_\ell}^2}.
\end{equation*}
From Lemma~\ref{lem:variance-bhyt}, the upper bound of the variance gain for BHyT at layer $\ell$ is determined by:
\begin{equation*}
    \pi_{\text{Attn},\ell}^{\text{BHyT,up}} = \frac{C_{\text{Attn}}\overline{\gamma^2_{\text{Attn}}} (\frac{\lambda_{\text{Attn}}^2}{\kappa^2})}{\tilde{s}_{x_\ell}^2}, \quad \pi_{\text{MLP},\ell}^{\text{BHyT,up}} = \frac{C_{\text{MLP}}\overline{\gamma^2_{\text{MLP}}} (\frac{\lambda_{\text{MLP}}^2}{\kappa^2})}{\tilde{s}_{x'_\ell}^2}.
\end{equation*}
For the output variance of BHyT to be strictly smaller than that of LNS, the amplification factor of BHyT must be smaller than that of LNS at each layer. Comparing the Attention terms (the MLP terms follow analogously), we require:
\begin{equation*}
    \delta(\rho_1, \pi_{\text{Attn},\ell}^{\text{BHyT,up}}) < \delta(\rho_1, \pi_{\text{Attn},\ell}^{\text{LNS}}).
\end{equation*}
Due to the monotonicity of $\delta$, this inequality holds if and only if $\pi_{\text{Attn},\ell}^{\text{BHyT,up}} < \pi_{\text{Attn},\ell}^{\text{LNS}}.$
Substituting the explicit forms into the comparison:
\begin{equation*}
    \frac{C_{\text{Attn}}\overline{\gamma_{\text{Attn}}^2} \frac{\lambda_{\text{Attn}}^2}{\kappa^2}}{\tilde{s}_{x_\ell}^2} < \frac{C_{\text{Attn}}\overline{\gamma_{\text{Attn}}^2}}{\ell \cdot \tilde{s}_{x_\ell}^2}.
\end{equation*}
Canceling the common positive terms ($C_{\text{Attn}}, \overline{\gamma_{\text{Attn}}^2}, \tilde{s}_{x_\ell}^2$), we obtain the condition for the scaling factors:
\begin{equation*}
    \frac{\lambda_{\text{Attn}}^2}{\kappa^2} < \frac{1}{\ell}.
\end{equation*}
Taking the square root implies $\frac{\lambda_{\text{Attn}}}{\kappa} < \frac{1}{\sqrt{\ell}}$. The term $\frac{1}{\sqrt{\ell}}$ is a decreasing function of $\ell$. Therefore, satisfying this condition for the maximum depth $L$ (i.e., $\frac{\lambda_{\text{Attn}}}{\kappa} < \frac{1}{\sqrt{L}}$) ensures that it holds for all shallower layers $\ell < L$.
Consequently, if this condition is met, the per-layer variance multiplier of BHyT is strictly smaller than that of LNS for all layers, leading to the final inequality:
\begin{equation*}
    \tilde{s}_{x_L}^{\text{BHyT}} < \tilde{s}_{x_L}^{\text{LNS}}.
\end{equation*}
\end{proof}

\section{Hyperparameter Search and Experimental Setup} \label{appx:exp_setup}
\paragraph{Hyperparameter Search Protocol}
We perform hyperparameter tuning within a shared sweep range to ensure a fair comparison across all normalization methods. 
For each method, we identify the optimal configuration by selecting the hyperparameters that achieve the lowest evaluation loss after the Llama-1B model has completed 20K pretraining steps. 
This selected configuration is then used for the full pretraining run and subsequently for supervised fine-tuning (SFT).

\subsection{Hyperparameters for Pretraining}

\paragraph{Common sweep ranges}
\begin{itemize}
    \item \textbf{Learning rate (LR):} \texttt{\{1e-4, 3e-4, 5e-4, 1e-3, 3e-3\}}
    \item \textbf{Weight decay (WD):} \texttt{\{0.0, 0.1\}}
    \item \textbf{Min LR ratio:} \texttt{\{1e-1, 1e-2\}}
    \item \textbf{Warmup ratio:} \texttt{\{5e-2, 1e-1\}}
\end{itemize}

\paragraph{Method-specific sweep ranges}
\begin{itemize}
    \item \textbf{BHyT:} initial values of $\lambda$: \texttt{\{1.0, 2.0, 3.0, 4.0, 5.0\}}
    \item \textbf{DyT:} learnable tanh-scaling parameter initialized using recommended values from the DyT~\citep{zhu2025transformers}: for Llama-374M and Llama-1B, we set the scalars to \texttt{1.0} (before Attention), \texttt{0.5} (before MLP), and \texttt{0.5} (last layer), while for Llama-3B we use \texttt{0.2}, \texttt{0.05}, and \texttt{0.05}, respectively.
\end{itemize}

\subsection{SFT Hyperparameter Sweep}
{\begin{itemize}
    \item \textbf{Learning rate (LR):} \texttt{\{1e-7, 5e-7, 1e-6, 1e-5, 5e-5, 1e-4\}}
    \item \textbf{Weight decay (WD):} \texttt{\{0.0, 0.1\}}
    \item \textbf{Min LR ratio:} \texttt{\{1e-1, 1e-2, 1e-3\}}
    \item \textbf{Warmup ratio:} \texttt{\{3e-2, 5e-2, 1e-1\}}
\end{itemize}}

\subsection{Final Hyperparameter Configurations}
\begin{itemize}
\item RMSNorm: \{LR: \texttt{3e-4}, WD: \texttt{0.1}, Min LR ratio: \texttt{1e-1}, Warmup ratio: \texttt{1e-1}\}
\item Peri-LN: \{LR: \texttt{3e-4}, WD: \texttt{0.1}, Min LR ratio: \texttt{1e-1}, Warmup ratio: \texttt{1e-1}\}
\item LNS: \{LR: \texttt{5e-4}, WD: \texttt{0.1}, Min LR ratio: \texttt{1e-1}, Warmup ratio: \texttt{5e-2}\}
\item DyT: \{LR: \texttt{1e-4}, WD: \texttt{0.1}, Min LR ratio: \texttt{1e-1}, Warmup ratio: \texttt{1e-1}\}
\item BHyT: \{LR: \texttt{5e-4}, WD: \texttt{0.1}, Min LR ratio: \texttt{1e-2}, Warmup ratio: \texttt{1e-1}, $\lambda$ values: $\text{BHyT}_{\text{Attn}}$: \texttt{2.0}, $\text{BHyT}_{\text{MLP}}$: \texttt{1.0}\}
\end{itemize}

\subsection{Hardware and system configuration}
\begin{itemize}
    \item \textbf{Pretraining and SFT framework:} Llama-Factory~\citep{zheng2024llamafactory}
    \item \textbf{Llama-1B GPU:} RTX A6000
    \item \textbf{Llama-3B GPU:} RTX PRO 6000 Blackwell
    \item \textbf{Parallel computation:} For efficient parallel computation, we use FlashAttention-2~\citep{dao2023flashattention}
    \item \textbf{Distributed training:} DeepSpeed ZeRO-2
\end{itemize}

\subsection{Full-baseline comparison on Llama-3B under a short-budget setting} \label{appx:llama3b_full_baseline}

\begin{table*}[ht!]
\caption{Pretraining (PT)-only evaluation for Llama-3B; downstream benchmarks in a 5-shot setting, averaged over five training seeds. The results demonstrate that BHyT achieves performance comparable to or superior to strong baselines while maintaining high computational speed. PPL denotes the perplexity score.}
\label{tab:pt_3b}
\renewcommand{\arraystretch}{0.8}
\setlength{\tabcolsep}{2pt}
\resizebox{\textwidth}{!}{%
\begin{tabular}{@{}cccccccccccc@{}}
\toprule
\multicolumn{12}{c}{Llama-3B}                                                                                                                                                                                                                                                                                                                                                                                                              \\ \midrule
Method  & \begin{tabular}[c]{@{}c@{}}PT Train\\ Loss\end{tabular} & \begin{tabular}[c]{@{}c@{}}PT Eval\\ Loss\end{tabular} & \begin{tabular}[c]{@{}c@{}}PT Eval\\ PPL\end{tabular} & Arc-e                        & PIQA                         & Hellaswag                    & OpenBookQA                   & Winogrande                   & MMLU                         & BoolQ                        & Avg.                         \\ \midrule
RMSNorm & 3.203                                                   & 3.180                                                  & 24.040                                                & $31.01_{\pm{0.21}}$          & $\textbf{66.57}_{\pm{0.41}}$ & $\textbf{41.52}_{\pm{0.25}}$ & $\textbf{33.92}_{\pm{0.33}}$ & $50.50_{\pm{0.78}}$          & $25.67_{\pm{0.02}}$          & $37.94_{\pm{0.12}}$          & $40.89_{\pm{0.18}}$          \\
Peri-LN & 3.165                                                   & 3.142                                                  & 23.156                                                & $31.82_{\pm{0.38}}$          & $64.52_{\pm{0.24}}$          & $36.05_{\pm{0.11}}$          & $32.28_{\pm{0.59}}$          & $49.30_{\pm{0.52}}$          & $25.99_{\pm{0.00}}$          & $\textit{57.46}_{\pm{0.31}}$ & $\textit{42.49}_{\pm{0.25}}$ \\
LNS     & 3.160                                                   & 3.139                                                  & 23.091                                                & $\textbf{31.88}_{\pm{0.42}}$ & $64.68_{\pm{0.46}}$          & $36.25_{\pm{0.10}}$          & $\textit{32.45}_{\pm{0.53}}$ & $\textit{51.18}_{\pm{0.75}}$ & $\textbf{26.87}_{\pm{0.01}}$ & $37.83_{\pm{0.08}}$          & $41.16_{\pm{0.11}}$          \\
DyT     & 3.877                                                   & 3.855                                                  & 47.244                                                & $27.69_{\pm{0.27}}$          & $59.20_{\pm{0.15}}$          & $25.96_{\pm{0.17}}$          & $31.85_{\pm{0.38}}$          & $49.17_{\pm{1.28}}$          & $\textit{25.87}_{\pm{0.02}}$ & $48.05_{\pm{0.56}}$          & $38.26_{\pm{0.12}}$          \\
BHyT    & \textbf{3.133}                                          & \textbf{3.107}                                         & \textbf{22.346}                                       & $\textit{31.84}_{\pm{0.15}}$ & $\textit{65.08}_{\pm{0.39}}$ & $\textit{36.48}_{\pm{0.04}}$ & $31.76_{\pm{0.33}}$          & $\textbf{51.62}_{\pm{0.66}}$ & $25.70_{\pm{0.01}}$          & $\textbf{60.84}_{\pm{0.29}}$ & $\textbf{43.33}_{\pm{0.16}}$ \\ \bottomrule
\end{tabular}
}
\end{table*}

Table~\ref{tab:pt_3b} shows the pretraining evaluation results, including training loss. The results confirm that BHyT consistently achieves the lowest pretraining loss and superior average accuracy, validating its robust scalability and performance advantage. 

\begin{table*}[ht!]
\centering
\caption{Supervised fine-tuning (SFT) results across five training seeds for Llama-3B. Reported values include SFT losses and 5-shot downstream benchmark accuracies; BHyT attains lower SFT loss and competitive performance, indicating effective transfer of its pretraining stability to instruction-tuned settings.}
\renewcommand{\arraystretch}{0.8}
\setlength{\tabcolsep}{2pt}
\resizebox{\textwidth}{!}{%
\begin{tabular}{@{}cccccccccccc@{}}
\toprule
\multicolumn{12}{c}{Llama-3B}                                                                                                                                                                                                                                                                                                                                                                                                                               \\ \midrule
Method  & \begin{tabular}[c]{@{}c@{}}PT Eval\\ Loss\end{tabular} & \begin{tabular}[c]{@{}c@{}}SFT Train\\ Loss\end{tabular} & \begin{tabular}[c]{@{}c@{}}SFT Eval\\ Loss\end{tabular} & Arc-e                        & PIQA                         & Hellaswag                    & OpenBookQA                   & Winogrande                   & MMLU                                      & BoolQ                        & Avg.                         \\ \midrule
RMSNorm & 3.272                                                  & $\textit{2.646}_{\pm{0.011}}$                            & $3.217_{\pm{0.130}}$                                    & $\textbf{37.67}_{\pm{0.23}}$ & $\textit{66.96}_{\pm{0.29}}$ & $31.70_{\pm{0.12}}$          & $\textit{31.12}_{\pm{0.18}}$ & $\textbf{51.22}_{\pm{1.60}}$ & $22.95_{\pm{0.04}}$                       & $53.31_{\pm{2.02}}$          & $39.83_{\pm{0.21}}$          \\
Peri-LN & 3.279                                                  & $\textbf{2.614}_{\pm{0.011}}$                            & $3.178_{\pm{0.132}}$                                    & $\textit{36.80}_{\pm{0.27}}$ & $\textbf{67.05}_{\pm{0.12}}$ & $\textbf{37.13}_{\pm{0.09}}$ & $\textbf{31.44}_{\pm{0.61}}$ & $49.06_{\pm{0.40}}$          & $22.94_{\pm{0.02}}$                       & $52.75_{\pm{1.54}}$          & $42.45_{\pm{0.28}}$           \\
LNS     & \textit{3.271}                                         & $2.652_{\pm{0.011}}$                                     & $\textit{3.157}_{\pm{0.132}}$                           & $34.49_{\pm{0.17}}$          & $65.59_{\pm{0.16}}$          & $\textit{37.12}_{\pm{0.09}}$ & $28.96_{\pm{0.54}}$          & $50.61_{\pm{0.55}}$          & $22.99_{\pm{0.02}}$                       & $52.69_{\pm{0.66}}$          & $41.78_{\pm{0.20}}$          \\
DyT     & 3.855                                                  & $3.361_{\pm{0.010}}$                                     & $3.971_{\pm{0.132}}$                                    & $29.78_{\pm{0.40}}$          & $58.42_{\pm{0.53}}$          & $25.95_{\pm{0.17}}$          & $28.28_{\pm{0.90}}$          & $49.53_{\pm{0.79}}$          & $\textbf{23.46}_{\pm{0.13}}$              & $\textit{56.64}_{\pm{1.02}}$ & $38.87_{\pm{0.17}}$          \\
BHyT    & \textbf{3.254}                                         & $2.693_{\pm{0.012}}$                                     & $\textbf{3.130}_{\pm{0.133}}$                           & $34.61_{\pm{0.20}}$          & $66.47_{\pm{0.27}}$          & $36.95_{\pm{0.12}}$          & $29.68_{\pm{0.46}}$          & $\textit{51.14}_{\pm{1.02}}$ & $\textit{23.07}_{\pm{0.05}}$              & $\textbf{58.21}_{\pm{0.81}}$ & $\textbf{42.88}_{\pm{0.19}}$ \\ \bottomrule
\end{tabular}}
\label{tab:sft_3b}
\vspace{-1em}
\end{table*}

We apply LoRA-based supervised fine-tuning to assess whether the stability achieved during pretraining carries over to instruction-tuned performance. Table~\ref{tab:sft_3b} summarizes the SFT losses and downstream benchmark accuracies for Llama-3B. Across all methods, BHyT maintains the stable optimization observed during pretraining and effectively transfers this advantage to the SFT stage. Notably, BHyT attains lower SFT loss and competitive accuracy on a broad set of benchmarks, demonstrating that its depth-wise stability and controlled activation dynamics continue to support reliable adaptation beyond pretraining.

\subsection{Short-budget setting for the variance approximation ablation}
\label{appx:var_ablation_setup}

The variance approximation ablation in Table~\ref{tab:var_ablation} uses a short-budget pretraining setting to isolate the computational effect of the approximation at lower experimental cost.
This setting is separate from the main 20B-token experiments in Section~\ref{sec:experiments}, which provide the primary PT-only and post-SFT comparisons.

We pretrain Llama-1B models on C4 with sequence length 1024, batch size 32, and 50{,}000 optimization steps, corresponding to approximately 1.64B training tokens.
We compare exact-statistic variants with their approximate counterparts: RMSNorm versus RMSNorm-Approx, and $\mathrm{BHyT}^{\ast}$ versus BHyT.
Here, $\mathrm{BHyT}^{\ast}$ denotes BHyT with exact variance computation, whereas BHyT uses the proposed variance approximation.
All models are evaluated using the same downstream pipeline as in the main experiments.

This ablation verifies whether the approximation improves throughput without materially changing convergence or downstream behavior.
As shown in Table~\ref{tab:var_ablation}, the approximation improves training throughput for both RMSNorm and BHyT while preserving comparable evaluation loss, perplexity, and downstream accuracy.

\subsection{\texorpdfstring{\edit{Detailed comparison of normalization baselines}}{Detailed comparison of normalization baselines}}
\label{appx:baseline_comp}

\edit{Table~\ref{tab:norm_method_comparison} clarifies the efficiency--stability trade-off across the baselines. RMSNorm and LNS compute exact statistics twice per block, Peri-LN doubles this cost, DyT eliminates statistics entirely, and BHyT requires only one exact statistic thanks to the variance approximation used for the second sublayer.}

\begin{table}[ht]
\centering
\caption{\edit{Comparison of the normalization baselines considered in this paper. The table summarizes each method's functional form and the number of exact statistics computed per Transformer block.}}
\label{tab:norm_method_comparison}
\renewcommand{\arraystretch}{0.95}
\setlength{\tabcolsep}{3pt}
\footnotesize
\centering
\begin{tabular}{@{}ccc@{}}
\toprule
Method  & Formulation                                                                  & \begin{tabular}[c]{@{}c@{}}\# exact stats\\ per block\end{tabular} \\ \midrule
RMSNorm & $\gamma \odot (x/s_x)$                                                       & 2                                                                  \\
LNS     & $(1/\sqrt{\ell})\,\gamma \odot (x/s_x)$                                      & 2                                                                  \\
Peri-LN & \begin{tabular}[c]{@{}c@{}}RMSNorm before/after\\ each sublayer\end{tabular} & 4                                                                  \\
DyT     & $\gamma \odot \tanh(\alpha_{\text{DyT}} x)$                                  & 0                                                                  \\
BHyT    & $\gamma \odot \tanh\!((\lambda/\kappa)\,x/s_x)$                              & 1                                                                  \\ \bottomrule
\end{tabular}%
\end{table}

\subsection{Time to target loss analysis}
To assess how efficiently each normalization method reduces loss during pretraining, we measure the time required to reach the loss achieved by BHyT approximately 5,000 steps before its training is completed. For both the Llama-1B and Llama-3B models, we track this target loss and evaluate competing methods at the point when (and if) they reach it. As shown in Table~\ref{tab:1b3b-time-to-target-loss}, most baselines reach this loss only much later in training or fail to reach it at all within the allotted training budget, indicating substantially slower effective convergence.

\begin{table}[ht]
\centering
\caption{Time-to-target-loss comparison for Llama-1B and Llama-3B. Each method is evaluated at the loss value that BHyT reaches roughly 5K steps before completing pretraining. Most baselines reach this loss only much later or fail to reach it within the training budget.}
\label{tab:1b3b-time-to-target-loss}
\renewcommand{\arraystretch}{1.0}
\setlength{\tabcolsep}{2pt}
\resizebox{\columnwidth}{!}{%
\begin{tabular}{@{}cccccccccccc@{}}
\toprule
\multicolumn{12}{c}{Llama-1B}                                                                                                                                                                                                                                                                                                                                                                                                                      \\ \midrule
Method  & \begin{tabular}[c]{@{}c@{}}PT Train\\ Loss\end{tabular} & \begin{tabular}[c]{@{}c@{}}Wall time\\ (hour)\end{tabular} & \begin{tabular}[c]{@{}c@{}}Checkpoint\\ Step\end{tabular} & Arc-e                        & PIQA                         & Hellaswag                    & OpenBookQA                   & Winogrande                   & MMLU                         & BoolQ                        & Avg.                         \\ \midrule
RMSNorm & 3.282                                                   & 39.35                                                      & 49K                                                       & $30.97_{\pm{0.20}}$          & $\textit{62.89}_{\pm{0.74}}$ & $\textit{32.77}_{\pm{0.09}}$ & $32.32_{\pm{0.94}}$          & $\textbf{50.54}_{\pm{0.48}}$ & $\textbf{25.70}_{\pm{0.01}}$ & $56.26_{\pm{0.44}}$          & $41.64_{\pm{0.10}}$          \\
Peri-LN & 3.288                                                   & 42.62                                                      & 50K                                                       & $\textbf{31.63}_{\pm{0.32}}$ & $\textbf{63.07}_{\pm{0.43}}$ & $32.05_{\pm{0.17}}$          & $32.40_{\pm{0.80}}$          & $49.41_{\pm{1.64}}$          & $24.91_{\pm{0.00}}$          & $\textit{58.05}_{\pm{0.34}}$ & $41.65_{\pm{0.12}}$          \\
LNS     & 3.281                                                   & 37.65                                                      & 49K                                                       & $\textit{31.39}_{\pm{0.24}}$ & $62.94_{\pm{0.64}}$          & $\textbf{32.80}_{\pm{0.16}}$ & $\textit{33.50}_{\pm{0.58}}$ & $\textit{50.37}_{\pm{0.91}}$ & $24.88_{\pm{0.01}}$          & $56.67_{\pm{0.37}}$          & $\textit{41.79}_{\pm{0.26}}$ \\
BHyT    & 3.289                                                   & \textbf{33.22}                                             & \textbf{46K}                                              & $30.50_{\pm{0.16}}$          & $62.42_{\pm{0.41}}$          & $32.11_{\pm{0.10}}$          & $\textbf{33.88}_{\pm{0.73}}$ & $50.15_{\pm{0.34}}$          & $\textit{25.19}_{\pm{0.02}}$ & $\textbf{61.86}_{\pm{0.17}}$ & $\textbf{42.30}_{\pm{0.13}}$ \\ \midrule
\multicolumn{12}{c}{Llama-3B}                                                                                                                                                                                                                                                                                                                                                                                                                      \\ \midrule
Method  & \begin{tabular}[c]{@{}c@{}}PT Train\\ Loss\end{tabular} & \begin{tabular}[c]{@{}c@{}}Wall time\\ (hour)\end{tabular} & \begin{tabular}[c]{@{}c@{}}Checkpoint\\ Step\end{tabular} & Arc-e                        & PIQA                         & Hellaswag                    & OpenBookQA                   & Winogrande                   & MMLU                         & BoolQ                        & Avg.                         \\ \midrule
RMSNorm & 3.203                                                   & 76.55                                                      & 60K                                                       & $31.01_{\pm{0.21}}$          & $\textbf{66.57}_{\pm{0.41}}$ & $\textbf{41.52}_{\pm{0.25}}$ & $\textbf{33.92}_{\pm{0.33}}$ & $50.50_{\pm{0.78}}$          & $25.67_{\pm{0.02}}$          & $37.94_{\pm{0.12}}$          & $40.89_{\pm{0.18}}$          \\
Peri-LN & 3.165                                                   & 88.72                                                      & 60K                                                       & $31.82_{\pm{0.38}}$          & $64.52_{\pm{0.24}}$          & $36.05_{\pm{0.11}}$          & $32.28_{\pm{0.59}}$          & $49.30_{\pm{0.52}}$          & $25.99_{\pm{0.00}}$          & $\textit{57.46}_{\pm{0.31}}$ & $\textit{42.49}_{\pm{0.25}}$ \\
LNS     & 3.160                                                   & 78.55                                                      & 60K                                                       & $\textbf{31.88}_{\pm{0.42}}$ & $64.68_{\pm{0.46}}$          & $36.25_{\pm{0.10}}$          & $\textit{32.45}_{\pm{0.53}}$ & $\textit{51.18}_{\pm{0.75}}$ & $\textbf{26.87}_{\pm{0.01}}$ & $37.83_{\pm{0.08}}$          & $41.16_{\pm{0.11}}$          \\
BHyT    & 3.135                                                   & \textbf{64.30}                                             & \textbf{56K}                                              & $\textit{31.84}_{\pm{0.15}}$ & $\textit{65.08}_{\pm{0.39}}$ & $\textit{36.48}_{\pm{0.04}}$ & $31.76_{\pm{0.33}}$          & $\textbf{51.62}_{\pm{0.66}}$ & $25.70_{\pm{0.01}}$          & $\textbf{60.84}_{\pm{0.29}}$ & $\textbf{43.33}_{\pm{0.16}}$ \\ \bottomrule
\end{tabular}
}
\end{table}

Peri-LN, selected as the strongest stability-oriented baseline, eventually approaches the target loss but requires notably more wall-clock time due to the additional normalization applied before and after each sublayer. RMSNorm and LNS converge more slowly and often do not reach the target loss during pretraining.
DyT is excluded from this comparison. Although DyT achieves higher training throughput, it remains highly sensitive to hyperparameter choices and does not consistently reduce pretraining loss, making it unsuitable for a meaningful time-to-loss comparison.
Overall, BHyT reaches the target loss significantly earlier than all baselines while maintaining competitive downstream accuracy, demonstrating that BHyT provides both stable and efficient optimization at scale.

\subsection{Training stability under varying learning rates: DyT vs. BHyT}\label{appx:lr_robust}
\begin{figure}[h!]
    \centering
    \begin{subfigure}[b]{\linewidth}
        \centering
        \includegraphics[width=\linewidth]{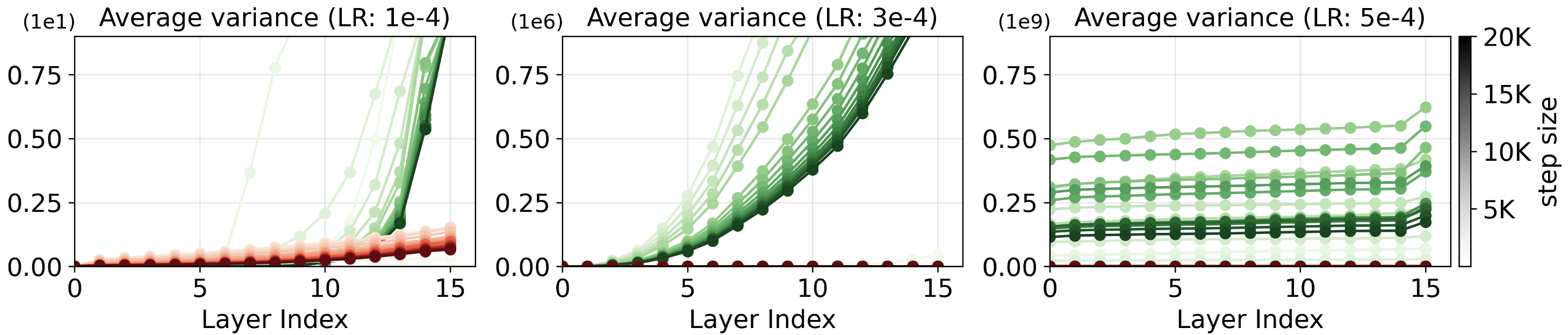}
        \caption{Average per-layer variance in Llama-1B for DyT and BHyT under different learning rates.}
        \label{fig:lr_robust_var}
    \end{subfigure}
    \hfill
    \begin{subfigure}[b]{\linewidth}
        \centering
        \hspace{-2.5em}
        \includegraphics[width=0.7\linewidth]{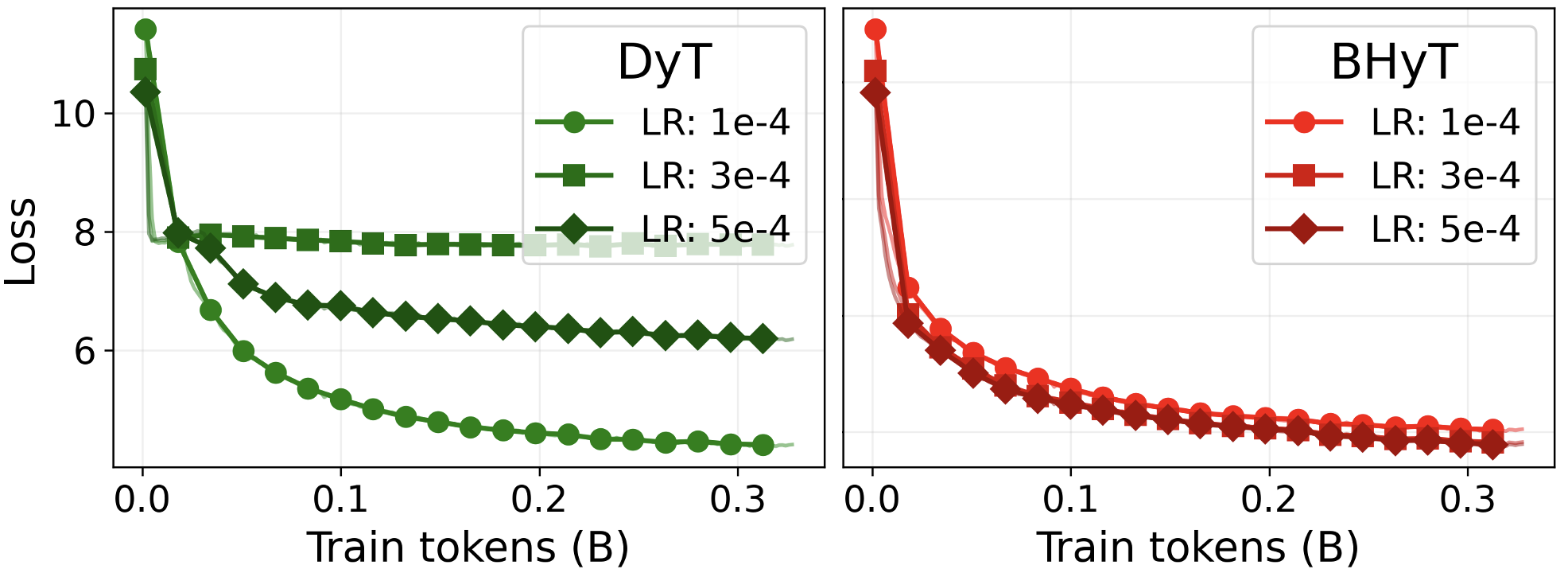}
        \caption{Pretraining loss curves of DyT and BHyT for different learning rates.}
        \label{fig:lr_robust_loss}
    \end{subfigure}
    \caption{Learning-rate robustness of DyT and BHyT with tanh-based method in Llama-1B. (a) Average per-layer variance as a function of depth under different learning rates; DyT exhibits rapidly increasing variance at larger learning rates, while BHyT maintains a much smaller variance that grows roughly linearly with depth. (b) Pretraining loss versus training tokens for DyT (left) and BHyT (right) across learning rates, where BHyT converges stably over a wide range of learning rates.}
    \label{fig:appx_lr_robust}
\end{figure}
We compare the training stability of DyT and BHyT, which replace the normalization layer with a $\tanh$-based layer when the learning rate is varied. Figure~\ref{fig:appx_lr_robust} shows that, for DyT, increasing the learning rate leads to a sharp growth in variance amplification, making pretraining harder to optimize. In contrast, BHyT exhibits a much smaller variance scale that grows roughly linearly with depth, and its pretraining converges stably across a wide range of learning rates. These results indicate that BHyT offers robust training stability for large-scale models such as LLMs, enabling more efficient and reliable hyperparameter sweeps.

\subsection{\texorpdfstring{\edit{Why BHyT suppresses depth-wise growth more effectively than DyT}}{Why BHyT suppresses depth-wise growth more effectively than DyT}}
\label{appx:dyt_growth}

\begin{figure*}[h!]
    \centering
    \IfFileExists{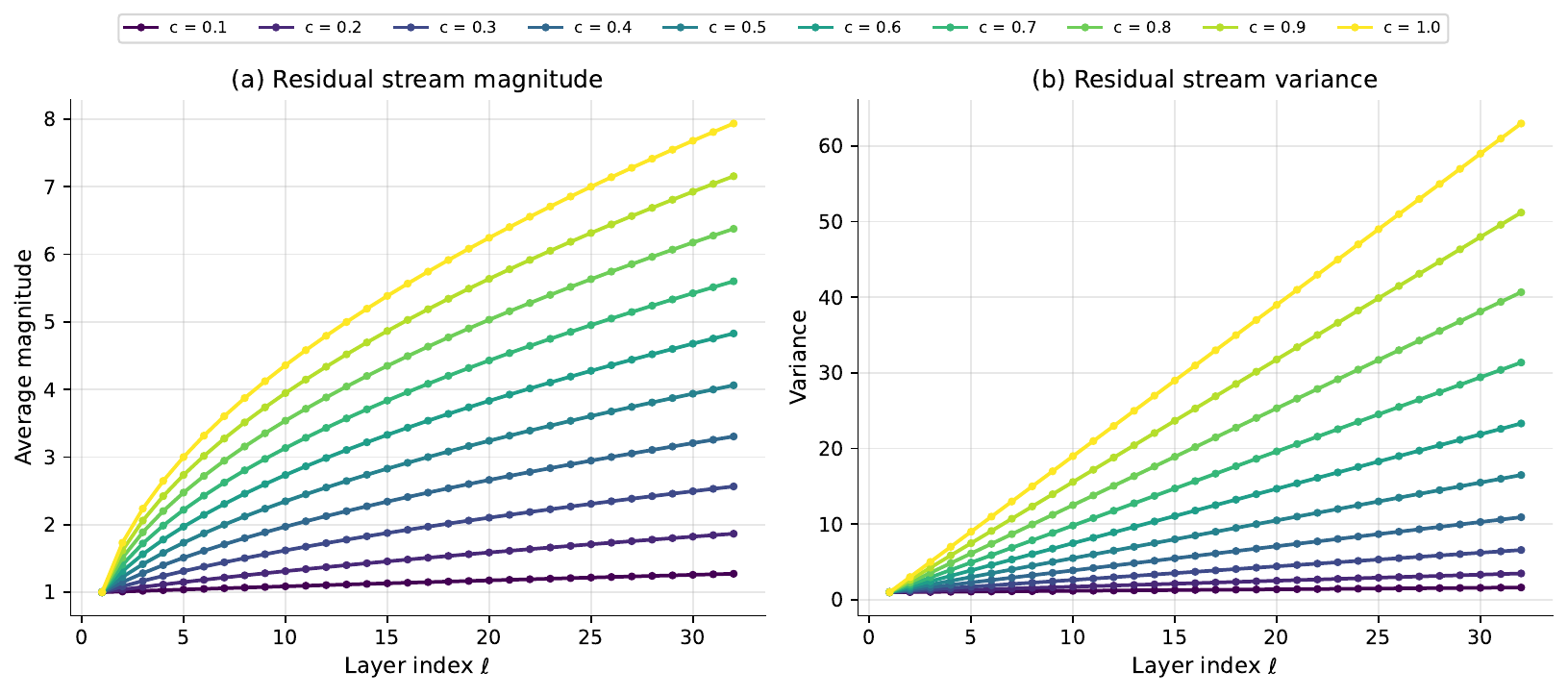}{%
        \includegraphics[width=0.485\textwidth]{images/fig_constant_magnitude.pdf}%
    }{%
        \fbox{\parbox[c][0.24\textwidth][c]{0.46\textwidth}{\centering Missing figure asset\\\texttt{fig\_constant\_magnitude.pdf}}}%
    }\hfill
    \IfFileExists{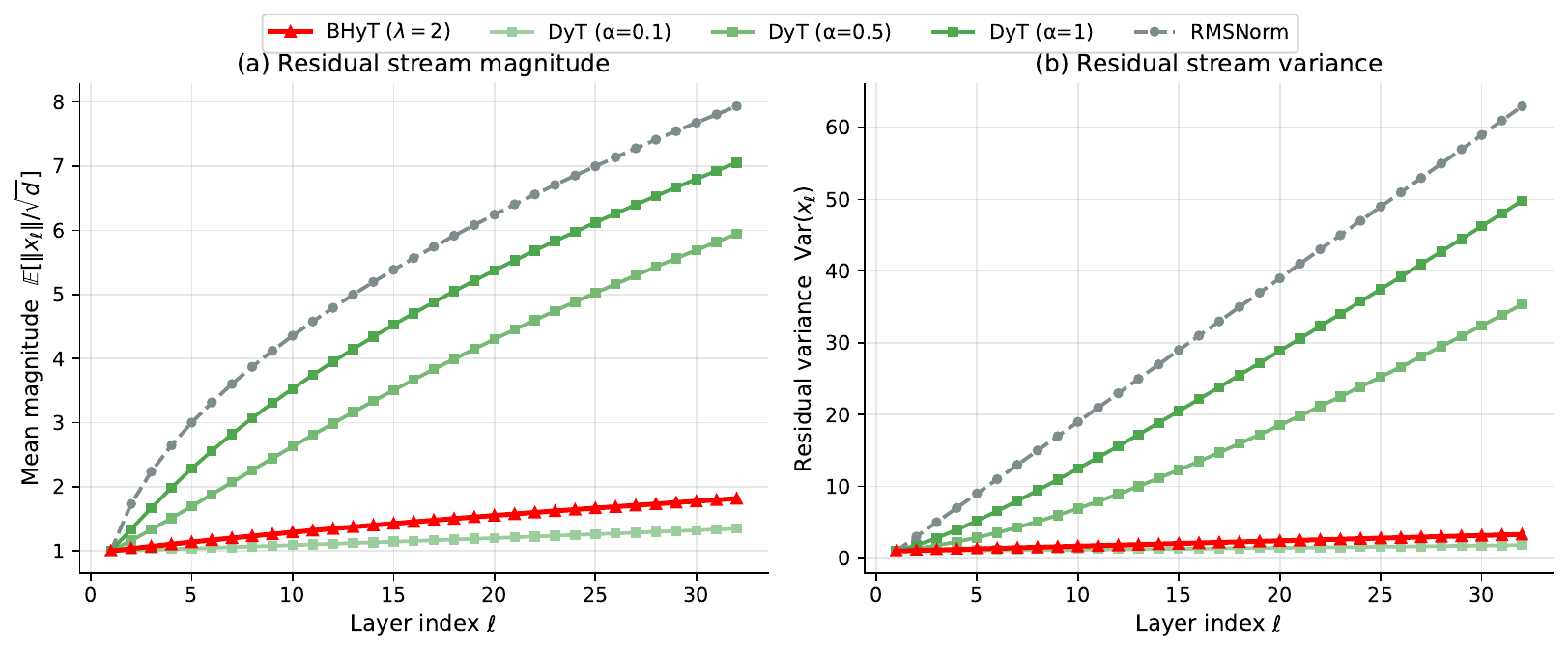}{%
        \includegraphics[width=0.485\textwidth]{images/fig_depthwise_growth.pdf}%
    }{%
        \fbox{\parbox[c][0.24\textwidth][c]{0.46\textwidth}{\centering Missing figure asset\\\texttt{fig\_depthwise\_growth.pdf}}}%
    }
    \caption{\edit{Empirical verification of depth-wise residual growth in a 32-layer Pre-LN toy stack ($d=2048$, $L=32$). Left: when the per-layer sublayer output is replaced with constant-variance noise, $f(x_\ell)=c\,\boldsymbol{\eta}_\ell$ with $\boldsymbol{\eta}_\ell\sim\mathcal{N}(0,I_d)$, residual magnitude and variance are governed almost entirely by $c$; in this setup, growth becomes rapid once $c\gtrsim 0.7$. Right: under the same toy recursion instantiated with RMSNorm, DyT, and BHyT, DyT remains non-saturating only for sufficiently small $\alpha_{\text{DyT}}$ (roughly $\alpha_{\text{DyT}}\lesssim 0.5$ here), whereas BHyT's adaptive scaling keeps the pre-$\tanh$ argument bounded and yields the flattest depth-wise profile.}}
    \label{fig:appx_dyt_vs_bhyt_growth}
\end{figure*}

\edit{We next provide an empirical verification of the claim that depth-wise residual variance growth is governed by the magnitude of the per-layer sublayer output. To isolate this mechanism, we study a 32-layer Pre-LN toy residual stack with hidden size $d=2048$ and depth $L=32$:
\[
x_{\ell+1} = x_\ell + f(x_\ell)W_\ell,
\qquad
W_\ell \sim \mathcal{N}(0, 2/d).
\]
This is deliberately not a Transformer block; the goal is to isolate residual-stream evolution as a function of the sublayer map $f(\cdot)$. The corresponding comparison with full Transformer blocks in Llama-1B is given in Figure~\ref{fig:avg_mean_var_plot}. Figure~\ref{fig:appx_dyt_vs_bhyt_growth} summarizes the controlled toy-stack comparisons discussed below.}

\edit{For the left panel of Figure~\ref{fig:appx_dyt_vs_bhyt_growth}, we first set
\[
f(x_\ell)= c \cdot \boldsymbol{\eta}_\ell,
\qquad
\boldsymbol{\eta}_\ell \sim \mathcal{N}(0, I_d),
\]
so that the per-layer sublayer output has a directly controlled variance scale. The result shows that residual-stream growth is essentially set by this magnitude: in this toy stack, once $c \gtrsim 0.7$, both residual magnitude and residual variance increase rapidly with depth, whereas smaller $c$ keeps the depth-wise profile comparatively stable.}

\edit{For the right panel, we instantiate the same toy recursion with RMSNorm, DyT, and BHyT. In this setting, DyT remains in a non-saturating regime only when $\alpha_{\text{DyT}} \lesssim 0.5$; for larger values, the argument of $\tanh$ saturates, the sublayer output becomes order-one, and the residual stack enters the same fast-growth regime seen in the constant-$c$ experiment. This interpretation is consistent with Figure~\ref{fig:avg_mean_var_plot}: even a small fixed $\alpha_{\text{DyT}}$ does not reliably rescue DyT on Llama-1B, because $\alpha_{\text{DyT}}x$ does not adapt to post-attention statistics and therefore grows with the residual stream into saturation. By contrast, BHyT rescales the pre-$\tanh$ input by $\lambda/(\kappa s_x)$ on each token, keeping the argument bounded by $\lambda$ across depth and yielding the flattest variance profile.}

\subsection{Variance approximation}
\label{appx:var_approx}

\begin{figure*}[ht!]
    \centering
    \includegraphics[width=\linewidth]{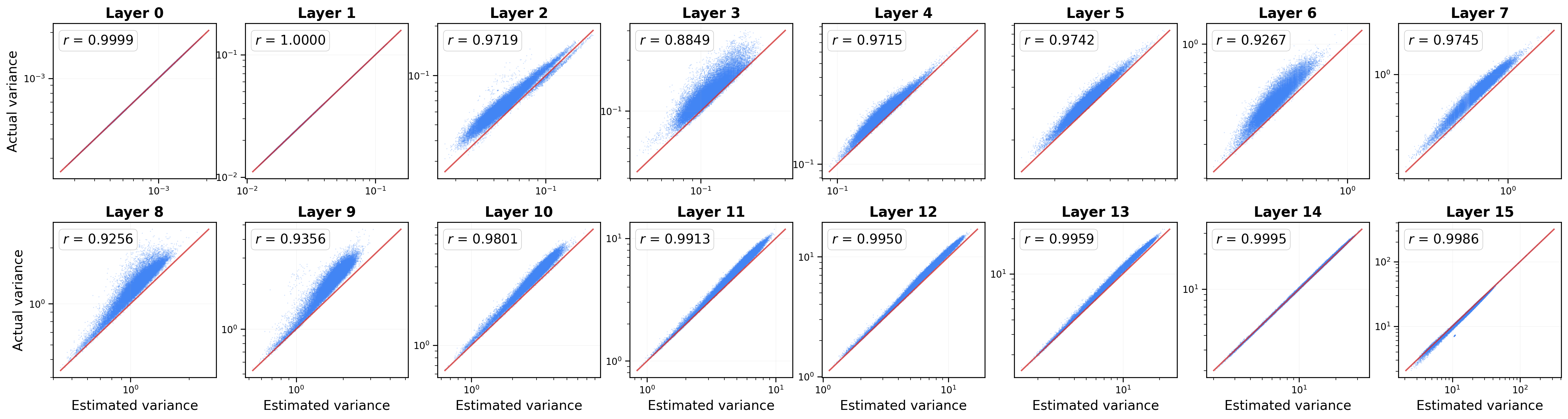}
    \caption{\edit{Layer-wise variance approximation quality for Llama-1B after pretraining on 20B tokens.}}
    \label{fig:appx_var_approx_20b_1b}
\end{figure*}

Figure~\ref{fig:appx_var_approx_20b_1b} shows the layer-wise variance approximation quality for Llama-1B after pretraining on 20B tokens.
The early layers exhibit noticeably improved agreement between the estimated and ground-truth variances.

Because BHyT reuses an approximated variance across subsequent block computations, approximation errors can propagate through the residual stream and accumulate with depth.
This makes the earliest layers particularly important: small errors introduced near the input can affect the variance estimates of many later layers.
To examine this effect, we also evaluate a hybrid variant that computes the exact variance only in the first two layers and then uses the standard approximation for the remaining layers.
This simple correction substantially reduces accumulated approximation error and improves agreement between the estimated and ground-truth variances, indicating that most of the approximation error can be controlled by anchoring the early residual stream with a small number of exact variance computations.

\subsection{Token Generation Speed}\label{appx:tps}

\begin{table*}[th!]
\centering
\caption{
Generation throughput across output lengths.
Values report mean $\pm$ standard deviation over five repeats, and percentages denote relative speed differences with respect to BHyT.
}
\label{tab:gen_speed_full}
\renewcommand{\arraystretch}{0.5}
\setlength{\tabcolsep}{2pt}
\resizebox{0.7\columnwidth}{!}{%
\begin{tabular}{@{}cccc@{}}
\toprule
\multirow{2}{*}{\begin{tabular}[c]{@{}c@{}}Throughput\\ (tokens/s)\end{tabular}} & \multicolumn{3}{c}{Max new token length}                                                                                              \\ \cmidrule(l){2-4} 
                                                                                 & \multicolumn{1}{c|}{128}                          & \multicolumn{1}{c|}{512}                           & 1024                         \\ \midrule
RMSNorm                                                                          & \multicolumn{1}{c|}{1152.2$_{\pm 20.9}$ (-2.1\%)} & \multicolumn{1}{c|}{1181.1$_{\pm 2.9}$ (-1.6\%)}   & 1147.1$_{\pm 12.7}$ (-1.6\%) \\
Peri-LN                                                                          & \multicolumn{1}{c|}{957.5$_{\pm 4.3}$ (-18.6\%)}  & \multicolumn{1}{c|}{984.4$_{\pm 6.9}$ (-18.0\%)}   & 1015.2$_{\pm 5.0}$ (-12.9\%) \\
LNS                                                                              & \multicolumn{1}{c|}{1095.1$_{\pm 39.9}$ (-6.9\%)} & \multicolumn{1}{c|}{1150.7$_{\pm 11.7}$ (-4.1\%)}  & 1140.2$_{\pm 2.1}$ (-2.2\%)  \\ \midrule
DyT                                                                              & \multicolumn{1}{c|}{1363.9$_{\pm 9.3}$ (+15.9\%)} & \multicolumn{1}{c|}{1352.0$_{\pm 11.9}$ (+12.7\%)} & 1269.7$_{\pm 3.2}$ (+9.0\%)  \\ \midrule
BHyT                                                                             & \multicolumn{1}{c|}{1176.7$_{\pm 39.1}$}          & \multicolumn{1}{c|}{1199.9$_{\pm 3.0}$}            & 1165.4$_{\pm 7.7}$           \\ \bottomrule
\end{tabular}%
}
\end{table*}

\begin{table*}[th!]
\centering
\renewcommand{\arraystretch}{0.9}
\caption{\edit{Normalization-only execution time (ms) for Llama-1B across sequence lengths. The upper block reports the standard BHyT implementation with variance approximation, whereas the lower block reports a variant that computes the second variance exactly inside the layer. With the approximation, BHyT remains close to DyT; without it, the exact variance computation becomes a major latency bottleneck.}}
\label{tab:appx_norm_only_speed}
\begin{tabular}{@{}ccccccc@{}}
\toprule
\multicolumn{7}{c}{\textbf{With variance approximation (standard BHyT)}}                                                                                                                           \\ \midrule
Method  & 512                          & 768                          & 1024                         & 2048                         & 4096                         & 8192                          \\ \midrule
RMSNorm & $79.92_{\pm{3.35}}$          & $79.92_{\pm{1.83}}$          & $79.62_{\pm{1.75}}$          & $81.36_{\pm{1.38}}$          & $85.21_{\pm{1.59}}$          & $121.63_{\pm{0.74}}$          \\
Peri-LN & $79.92_{\pm{2.36}}$          & $80.31_{\pm{1.27}}$          & $80.08_{\pm{1.49}}$          & $79.57_{\pm{1.66}}$          & $84.62_{\pm{0.86}}$          & $122.15_{\pm{0.82}}$          \\
LNS     & $109.92_{\pm{2.41}}$         & $109.89_{\pm{1.82}}$         & $110.05_{\pm{2.08}}$         & $88.91_{\pm{1.61}}$          & $92.43_{\pm{16.95}}$         & $125.03_{\pm{1.42}}$          \\
DyT     & $\textbf{35.91}_{\pm{1.05}}$ & $\textbf{36.43}_{\pm{0.87}}$ & $\textbf{36.13}_{\pm{0.49}}$ & $\textbf{38.10}_{\pm{0.38}}$ & $\textbf{41.38}_{\pm{0.80}}$ & $\textbf{61.02}_{\pm{1.60}}$  \\
BHyT    & $\textit{40.21}_{\pm{1.02}}$ & $\textit{40.12}_{\pm{0.78}}$ & $\textit{40.34}_{\pm{0.57}}$ & $\textit{42.31}_{\pm{0.39}}$ & $\textit{45.94}_{\pm{0.60}}$ & $\textit{63.22}_{\pm{0.38}}$  \\ \midrule
\multicolumn{7}{c}{\textbf{Without variance approximation (exact second-variance computation)}}                                                                                                    \\ \midrule
Method  & 512                          & 768                          & 1024                         & 2048                         & 4096                         & 8192                          \\ \midrule
RMSNorm & $\textit{79.28}_{\pm{1.27}}$ & $\textit{79.31}_{\pm{1.78}}$ & $\textit{79.42}_{\pm{1.37}}$ & $81.16_{\pm{1.33}}$          & $\textit{84.66}_{\pm{0.80}}$ & $\textit{121.73}_{\pm{0.66}}$ \\
Peri-LN & $79.62_{\pm{1.31}}$          & $79.43_{\pm{1.43}}$          & $79.47_{\pm{1.79}}$          & $\textit{79.05}_{\pm{1.28}}$ & $84.69_{\pm{0.95}}$          & $122.10_{\pm{0.88}}$          \\
LNS     & $109.23_{\pm{1.97}}$         & $109.29_{\pm{1.84}}$         & $106.97_{\pm{7.17}}$         & $88.63_{\pm{1.69}}$          & $90.73_{\pm{1.24}}$          & $124.94_{\pm{0.71}}$          \\
DyT     & $\textbf{35.87}_{\pm{0.47}}$ & $\textbf{36.44}_{\pm{0.44}}$ & $\textbf{35.97}_{\pm{0.79}}$ & $\textbf{38.21}_{\pm{0.81}}$ & $\textbf{41.43}_{\pm{0.93}}$ & $\textbf{62.51}_{\pm{0.52}}$  \\
BHyT    & $82.38_{\pm{1.60}}$          & $81.71_{\pm{1.23}}$          & $82.52_{\pm{6.04}}$          & $85.31_{\pm{1.14}}$          & $98.04_{\pm{1.04}}$          & $134.86_{\pm{1.26}}$          \\ \bottomrule
\end{tabular}%
\end{table*}

We provide additional details for the generation-throughput analysis in Table~\ref{tab:gen_speed}. 
All measurements are conducted on a single NVIDIA RTX PRO 6000 Blackwell GPU using fp16 autocast. 
Unless otherwise stated, we use batch size 8, input length 512, greedy decoding, and KV cache. 
Throughput is computed as the number of generated tokens divided by wall-clock time. 
For each method, we discard 8 warmup batches and report results over 160 timed batches, obtained from 32 measured batches repeated 5 times.


Table~\ref{tab:gen_speed_full} reports generation throughput across different maximum numbers of generated tokens. BHyT consistently outperforms RMSNorm, LNS, and Peri-LN, showing that its efficiency advantage over normalization-based baselines persists across generation lengths. DyT achieves the highest throughput because it removes statistic computation entirely, but it provides weaker activation control than BHyT, as discussed in Section~\ref{sec:stability_analysis}.

To isolate the source of BHyT's efficiency gain, we also measure the latency of the normalization or normalization-replacement component alone.
This breakdown separates the cost of statistic computation from the rest of the Transformer block.
The results show that the proposed variance approximation reduces the overhead of BHyT relative to exact-statistic variants, supporting its role as a practical RMSNorm replacement.

\edit{Table~\ref{tab:appx_norm_only_speed} reports normalization-only execution time with and without the variance approximation. With the standard approximation, BHyT remains close to DyT across sequence lengths. Without the approximation, exact variance computation inside the layer substantially increases latency, confirming that the approximation is essential to BHyT's efficiency.}

\end{document}